
\documentclass{article}

\usepackage{microtype}
\usepackage{graphicx}
\usepackage{subcaption}
\usepackage{booktabs} 

\usepackage{hyperref}
\usepackage{multirow}
\usepackage{dsfont}
 \usepackage{comment} 

\usepackage[preprint]{icml2026}


\usepackage{amsmath}
\usepackage{amssymb}
\usepackage{mathtools}
\usepackage{amsthm}
\usepackage{stmaryrd}

\usepackage[capitalize,noabbrev]{cleveref}

\theoremstyle{plain}
\newtheorem{theorem}{Theorem}[section]

\newtheorem{lemma}[theorem]{Lemma}

\theoremstyle{definition}

\theoremstyle{remark}

\newcommand{\Pat}[1]{\text{Pa}({#1}; \mathcal{G}_t)}
\DeclareMathOperator*{\E}{\mathbb{E}}
\usepackage[textsize=tiny]{todonotes}

\icmltitlerunning{Black-Box Combinatorial Optimization with Order-Invariant Reinforcement Learning}

\begin{document}

\twocolumn[
  \icmltitle{Black-Box Combinatorial Optimization with Order-Invariant Reinforcement Learning}




  \begin{icmlauthorlist}
    \icmlauthor{Olivier Goudet}{leria}
    \icmlauthor{Quentin Suire}{leria}
    \icmlauthor{Adrien Go\"effon}{leria}
    \icmlauthor{Frédéric Saubion}{leria}
    \icmlauthor{Sylvain Lamprier}{leria}

  \end{icmlauthorlist}

  \icmlaffiliation{leria}{LERIA, Université d’Angers, 2 Boulevard Lavoisier Angers 49045 France}

  \icmlcorrespondingauthor{Olivier Goudet}{olivier.goudet@univ-angers.fr}

  \icmlkeywords{Black-box combinatorial optimization, Reinforcement learning, Estimation-of-Distribution algorithm, Policy gradient method, Autoregressive generation, Structural dropout}

  \vskip 0.3in
]



\printAffiliationsAndNotice{}  

\begin{abstract}
We introduce an order-invariant reinforcement learning framework for black-box combinatorial optimization. Classical estimation-of-distribution algorithms (EDAs) often rely on learning explicit variable dependency graphs, which can be costly and fail to capture complex interactions efficiently. In contrast, we parameterize a multivariate autoregressive generative model trained without a fixed variable ordering. By sampling random generation orders during training, a form of information-preserving dropout, the model is encouraged to be invariant to variable order, promoting search-space diversity, and shaping the model to focus on the most relevant variable dependencies, improving sample efficiency. We adapt Group Relative Policy Optimization (GRPO) to this setting, providing stable policy-gradient updates from scale-invariant advantages. Across a wide range of benchmark algorithms and problem instances of varying sizes, our method frequently achieves the best performance and consistently avoids catastrophic failures.
\end{abstract}

\section{Introduction}

Black-box optimization \citep{audet2016blackbox, brochu2010tutorial} consists of maximizing a function $f: \mathcal{X} \to \mathbb{R}$ over the discrete space $\mathcal{X}$ without any structural or analytical knowledge of $f$. The function $f$ is typically costly to evaluate (e.g., computationally expensive simulation, querying a physical experiment, or executing a complex algorithm). The interactions among the variables of $f$ are not available, making black-box optimization particularly challenging, especially in high-dimensional and structured discrete domains \citep{doerr2020benchmarking}. 

A wide range of methods and concepts have been explored to solve Black-box optimization  problems. Among them, 
Bayesian optimization (BO) is a model-based optimization framework that constructs a probabilistic surrogate model over the objective function and uses an acquisition function to determine where to sample next in the search space. It is particularly effective for global optimization under tight evaluation budgets, making it well-suited for expensive black-box problems \citep{forrester2009recent,shahriari2015taking,frazier2018bayesian}. Evolutionary Algorithms (EAs) are also recognized as powerful methods for solving discrete black-box optimization problems. These metaheuristics operate by iteratively evolving a population of candidate solutions through variation operators (mutation, crossover) and selection mechanisms \cite{corne2025evolutionary}. Unlike Bayesian optimization, EAs do not build explicit models of the objective function, making them more flexible and easier to implement \citep{back1996evolutionary, eiben2015introduction}. 

As a specific subclass of EAs, Estimation-of-Distribution Algorithms (EDAs) are stochastic black-box optimization methods that guide the search for optima by explicitly learning and sampling from a probabilistic model $P$ of promising candidate solutions by means of a distribution that captures patterns among high-performing solutions \citep{muhlenbein1996recombination,larranaga2002review}. EDAs can be conceptually positioned between the two main paradigms of black-box optimization, EAs and BO. Some widely used and effective EDAs such as the Covariance Matrix Adaptation Evolution Strategy (\texttt{CMA-ES}) \citep{hansen2001completely, hansen2016cma}—designed for continuous landscapes, and Population-Based Incremental Learning (\texttt{PBIL}) \citep{baluja1994population} for discrete landscapes, can also be interpreted within the Information-Geometric Optimization (IGO) framework \citep{ollivier2017information}. This connection provides a formal interpretation of EDAs as performing natural gradient descent in the space of probability distributions, thus explaining their ability to fine-tune solutions and converge reliably in continuous or discrete spaces. While continuous EDAs, particularly \texttt{CMA-ES}, have attracted significant attention, a less explored body of research focuses on EDAs for discrete and combinatorial spaces. Early work in this area has demonstrated the effectiveness of multivariate discrete EDAs in applications such as scheduling and  routing  \citep{lozano2006towards}. 

In this paper, we highlight and exploit the connection between EDAs and reinforcement learning (RL), as both frameworks rely on sampling from a generative model (a policy in RL) and updating this model to increase the likelihood of high-fitness or high-reward samples.  
From this perspective, we revisit discrete multivariate EDAs by modeling the joint distribution with neural-network–parameterized conditional distributions, trained using modern policy-gradient RL methods. In the proposed approach, each variable is modeled conditionally on the others, yielding a highly flexible generative model capable of capturing complex inter-variable dependencies while keeping the total number of parameters polynomial in the instance size.  
However, this introduces the challenging choice of a generation order, as solution generation is thus formulated as a sequential assignment of variable values.

Inspired by recent work on permutation-invariant auto-regressive language generation \citep{pannatier2024sigma}, we depart from classical EDAs such as \texttt{MIMIC} \citep{de1996mimic} and \texttt{BOA} \citep{Pelikan05}, which rely on an explicit generation order.  
Instead of assuming or learning a sparse directed acyclic graph to define this order—which typically requires repeatedly solving an NP-hard structure-learning problem during exploration—we propose an order-invariant multivariate generative model. This undirected model does not commit to any fixed variable ordering during generation.

In contrast with RL-based construction methods \citep{KimPP22,KwonCKYGM20}, which exploit symmetries of the solution space itself (e.g., rotations or permutations of solutions), we focus on invariance with respect to the generation order. This property is orthogonal to solution-space symmetries: it leaves the solution space unchanged, while encouraging flexible internal structure in the generative model and facilitating the discovery of dependencies between variables. Experiments on a diverse set of benchmark problems and instance sizes illustrate the advantages of this order-invariant RL approach for black-box optimization.

To summarize, beyond our contributions to black-box optimization and EDAs through the use of RL techniques, our work advances machine learning methodology along three main directions:
\begin{itemize}
\item We show that varying generation orders can be incorporated into PPO-like algorithms in a theoretically principled way.
\item We demonstrate that the structural input dropout induced by random generation orders acts as an effective regularization mechanism, while preserving information from the underlying joint distribution.
\item We provide a theoretical analysis of a scale-invariant reward function based on sample ranks within a population, which can be interpreted as a form of group-relative advantage in GRPO-like approaches \citep{shao2024deepseekmath}.
\end{itemize}

The remainder of this paper is organized as follows. Section \ref{sec:problem_setting} introduces the discrete black-box optimization problem, reviews related work and discuss the motivations for this work. Section \ref{sec:ppo_eda} presents the derivation of our proposed RL-EDA approach, which builds on a GRPO RL backbone and is designed to tackle this class of problems. Section \ref{sec:first_expe} reports empirical results comparing our algorithm with state-of-the-art methods.
Various versions of the approach  are also compared to analyze the benefits of each of its components.
Section \ref{sec:conclusion} discusses the contribution and presents some perspectives for future work.

\section{Preliminaries: Problem setting, Related work  and Motivations \label{sec:problem_setting}}

In this section, we first formally introduce the discrete black-box optimization problem. We then review existing work on multivariate EDAs proposed to tackle such problems. Finally, we discuss the opportunities offered by neural generators in this context, particularly regarding their flexibility in capturing implicit inter-variable dependencies. We also highlight the potential benefits of leveraging random variable orderings for both generation and training under stringent sample-efficiency constraints within the EDA training regime.

\subsection{Discrete Black-box Optimization}

Let $\mathcal{X} = \mathcal{X}_1 \times \cdots \times \mathcal{X}_n$ be the discrete search space of size $n$, where each $\mathcal{X}_j$ is a finite set (binary or categorical), and let $f: \mathcal{X} \rightarrow \mathbb{R}$ be an objective function accessible only as a black box, i.e., without any structural information (such as convexity or smoothness). A combinatorial optimization (CO) problem is then defined by the pair $(\mathcal{X},f)$. Without loss of generality, the task is to maximize $f$: $\max_{x\in \mathcal{X}} f(x)$. In the following, $x = (x_1, \dots, x_n) \in \mathcal{X}$ denotes a candidate solution (not necessarily the best) of the CO problem. $X_i$ denotes the variable associated to ${\cal X}_i$, whose value in ${\cal X}_i$ is $x_i$. Various existing solving techniques for black-box CO include Bayesian optimization methods and evolutionary algorithms, which have been improved by machine learning techniques \citep{Talbi2021}. More related work on combinatorial optimization is given in Appendix \ref{appendix:related_work}.

\subsection{Multivariate Estimation of Distribution Algorithms \label{sec:multi_eda}}

Multivariate EDAs are evolutionary algorithms that solve a CO problem by iteratively building and updating a probabilistic model 
over the search space $\mathcal{X}$. 
An EDA  with parameters $(\mu,\lambda) \in \mathbb{N}^2$ with $0 < \mu < \lambda$ performs the following steps at each iteration $t$: 
\begin{enumerate}
    \item Draw a population of $\lambda$ candidate solutions $x^{1}, \dots, x^{\lambda}$ from the model $P_t$ and compute fitness values $f^{i} = f(x^{i})$, for $i = 1, \dots, \lambda$.
    \item Select the $\mu$ best individuals $\mathcal{S}_t = \left\{ x^{r_i} : i \in [1..\mu] \right\}$, where $(r_1,\dots,r_{\lambda})$ is a permutation of $[1..\lambda]$ such that $f^{r_1} \geq \dots \geq f^{r_{\lambda}}$, and use $\mathcal{S}_t$ to estimate the updated probabilistic model $P_{t+1}$.
\end{enumerate}

Following this framework, EDAs mainly differ in how they model the generative distribution $P_t$ used to sample new candidate solutions at each generation $t$. Some approaches, such as PBIL \citep{baluja1994population} or UMDA \citep{muhlenbein1996recombination}, approximate $P_t$ as a product of independent univariate distributions: 
$P_t(x) = \prod_{i=1}^n P^i_t(X_i = x_i)$, 
where $P^i_t$ denotes the $i$-th marginal distribution. While such approaches have proved effective on problems with little or no interaction among variables, they suffer from important limitations: they fail to capture complex inter-variable relationships (including combinatorial or logical dependencies) and are prone to premature convergence or loss of diversity in multimodal landscapes. To overcome these limitations, classical multivariate EDAs need to employ more expressive probabilistic models that explicitly capture dependencies between variables from best candidates in $S_t$ at each generation $t$. In the case of Bayesian networks, dependencies are represented by a directed acyclic graph (DAG) ${\cal G}=(\mathcal{V}, \mathcal{E})$, whose set of vertices $\mathcal{V}$ contains all the variables $X_j$ for $j=1, \dots, n$ and whose directed edges $\mathcal{E}$ represent causality relationships. Hence, at any iteration $t$ of the EDA process, the joint density $P_{t}(x)$ can be factorized as the product of the densities of each variable conditionally on its parents as $P_{t}(x) = \prod_{j=1}^{n} P_{t}(X_j=x_j | X_{\Pat{j}} = x_{\Pat{j}})$ (Markov factorization)  with $\mathcal{G}_t=(\mathcal{V},\mathcal{E}_t)$ the considered DAG at iteration, $X_{\Pat{j}} = \{X_i \in \mathcal{V} : (X_i, X_j) \in \mathcal{E}_t\}$ the set of the  parents of the variable $X_j$ in $\mathcal{G}_t$ and  $x_{\Pat{j}}$ their corresponding values. Given a DAG ${\cal G}_t$, such a factorization allows to significantly reduce the number of required parameters to approximate $P_t$.  It also permits sampling the variables sequentially according to a topological ordering consistent with the causal dependencies encoded by the graph.  However, optimal DAGs are usually unknown at the beginning of the process, and need to be learned efficiently from selected candidates $S_t$ at each generation, together with the parameters of each factor of the Markov factorization (more details on EDAs with DAGs can be found in Appendix \ref{appendix:related_work}).

 \subsection{The Case of Neural Estimators}

Traditionally, EDAs based on Bayesian networks estimate each component of  the Markov  factorization by contingency tables reporting counts of all joint realizations of the dependent variables together with the combinations of its parents' values. In this setting, restricting the dependencies of each outcome to a small subset of causal variables is crucial to avoid the exponential growth of complexity with the problem dimension. This limitation has motivated a long line of research on structural learning heuristics, pruning strategies, and regularization techniques designed to control the combinatorial explosion  \citep{echegoyen2008impact}.

Neural estimators fundamentally alter this picture. In classical EDAs, learning an explicit dependency graph was unavoidable: the sampling model could only be specified once the graph structure had been identified. Neural approaches dispense with this requirement. By parameterizing the joint distribution directly—often through autoregressive factorizations with arbitrary variable orderings \citep{germain2015made,uria2016nade}, or via invertible transformations in flow-based models \citep{papamakarios2021normalizing}—they sidestep the need to commit to a learned structure at all. However, despite their success in density estimation and generative modeling, such neural approaches have scarcely been explored in the context of multivariate EDAs. To the best of our knowledge, no prior work has applied autoregressive models to EDA, nor investigated their interaction with the iterative optimization dynamics. 

In practice, fitting a flexible neural density estimator is frequently simpler and more robust than inferring the “correct” graph, especially under the limited and evolving sample regimes typical of EDAs. Following an autoregressive model, we can consider any given factorization using any order of variables. That is, given an arbitrary order $\sigma$ of the dimensions of the problem, we can write $P(X=x)=\prod_{i=1}^n P(x_{{\sigma}_i}|x_{{\sigma}<i}),$ where $x_{{\sigma}_i}$ stands as the value of the $i$-th dimension of $x$ in the permutation $\sigma$ and $x_{{\sigma}<i}$ corresponds to the sequence of values of $x$ with rank lower than $i$ in permutation $\sigma$ (with $x_{\sigma_{<1}}$ standing as an empty sequence). Given $N$ samples of $P$, this can be estimated by a neural network $P_\theta$, with parameters $\theta$ obtained via maximum likelihood estimation (MLE): $ \arg\max_{\theta \in \Theta} \frac{1}{N} \sum_{j=1}^N \prod_{i=1}^n P_\theta(x^j_{{\sigma}_i}|x^j_{{\sigma}<i})$, where $x^1 \ldots x^N$ are sampled from the target distribution $P$.  We note that this is true for any given permutation $\sigma$. In particular, assuming infinite amounts of data and infinite capacity of the used neural networks, at convergence of the MLE, we get that: 
$\forall \sigma, \sigma': P_{\theta}(X|\sigma) = P_{\theta'}(X|\sigma'), $
where $\theta$ and $\theta'$ are optimal parameters (according to MLE) for permutation $\sigma$ and $\sigma'$ respectively.  
NADE \citep{uria2016nade} exploits this idea by defining ensembles of models, each associated with a different variable ordering, which enables sampling from a more diverse set of outcomes. Yet, to the best of our knowledge, such permutation-based ensembles have never been explored in multivariate EDAs, despite population diversity being a key ingredient for black-box optimization and effective exploration. Beyond sampling, we argue that training a single model across multiple orderings provides an additional benefit: it acts as a form of noise reduction when learning from limited data, as it is typically the case in online EDAs. In Appendix~\ref{sec:vs_dropout}, we show that this mechanism can be interpreted as an information-preserving analogue of dropout, allowing the model to efficiently identify the dominant dependencies between variables while mitigating overfitting to transient fluctuations.

\section{Multivariate EDA With Order-Invariant Reinforcement Learning \label{sec:ppo_eda}}

Our proposed algorithm for discrete black-box problems is a multivariate EDA  (see Section \ref{sec:multi_eda}) whose probabilistic model is  encoded with a set of neural networks. The  construction of a solution of the CO problem is seen as an episodic Markov Decision Process (MDP) with a reinforcement learning algorithm adapted for our setting.  A toy example of generation of solutions and update of the model  is displayed with a diagram in Appendix \ref{app:scheme}.

\subsection{Deep Reinforcement Learning for EDAs: Setting and Architectures \label{sec:archi}}

The EDA framework presented 
above can be easily casted as a reinforcement learning problem, defined on an MDP  $\mathcal{M} =(\mathcal{S}, \mathcal{A}, P,  R)$ where $\mathcal{S}$ is a set of states, $\mathcal{A}$ a set of actions, $P(s'| s, a)$ is the transition probability function,  $R: \mathcal{S} \to \mathbb{R}$ is the reward function, that assigns a scalar reward depending on reached states in ${\cal S}$. In the setting of multivariate EDAs, 
$\mathcal{S}$ corresponds to incomplete solutions from ${\cal X}$ 
(i.e. $S \equiv \{ (\varnothing, 0, \sigma): \sigma \in \Omega \} \cup \{ ((x_{\sigma_1} \ldots x_{\sigma_k}), k,  \sigma) : x \in {\cal X}, \sigma \in \Omega, k \in \llbracket 1,n \rrbracket\}$), with $\Omega$ the set of all possible generation orders of a sequence of indices  
$1 \ldots n$, and $\varnothing$ an empty sequence that defines starting states $s_0$. For a given state $s_k=(x_{\sigma_{\leq k}},k,\sigma)$, the set of possible actions ${\cal A}_{k} \subseteq {\cal A}$ is the domain of the $k+1$-th variable of the permutation $\sigma$ (i.e.,  ${\cal A}_k \equiv {\cal X}_{\sigma_{k+1}}$). Thus, transitions are deterministic: for any triplet $(s,a, s')$, with $s=(x_{\sigma_{\leq k}},k,\sigma)$ and $a \in {\cal X}_{\sigma_{k+1}}$,  
$P(s'| s, a)$ is $1$ iff $s'=(x'_{\sigma_{\leq k+1}}, k+1, \sigma)$ with $x'_{\sigma_{\leq k}}=x_{\sigma_{\leq k}}$ and $x'_{\sigma_{k+1}}=a$.  Finally, rewards are non-zeros for states from ${\cal S}$ that correspond to complete solutions of the problem only (i.e., those states that contain full instantiation of ${\cal X}$).  In that setting, our goal 
is to optimize a parameterized stochastic generative policy $\pi_\theta(a_k \in {\cal X}_{\sigma_{k+1}} | s_k=(x_{\sigma_{\leq k}}, k, \sigma))$, that defines the probability of taking action $a_k$ in state $s_k$. For the binary setting where the discrete search space is  $\mathcal{X} = \{-1,1\}^n$, we model this generative policy as a neural logistic regressor as $\pi_\theta(a_k=1|s_k=(x_{\sigma_{\leq k}}, k,  \sigma))= \text{sigmoid}(
g_{\theta_{\mathrm{dim}_\sigma(k)}}(x_{\sigma_{\leq k}}))$, with $g_{\theta_i}$  a neural network with parameter $\theta_i \in \mathbb{R}^{m}$ and $\mathrm{dim}_\sigma(k)$ the bijective function that returns the index of the dimension 
at rank $k$ in permutation $\sigma$. For categorical domains ${\cal X}_i$, we encode each of their $d$ categories  
as a one hot vector where $X_{i,j}=1$ iff the represented category is $j \in \llbracket 1,d \rrbracket$, $-1$ otherwise. For these outputs, we consider a softmax over the logits produced by $g$ to produce the corresponding categorical distribution.

Rather than dealing with neural models specifically dedicated for sequences, such as recurrent networks or Transformers (which are better suited for non structured inputs), we propose to define $g$ as a classical MLP, parameterized with a different set of parameters for each individual output of the problem. For any order of generation $\sigma$ and any step $k$, we want to feed $g$ with a fixed-size vector as input. For a given step $k$ of a permutation $\sigma$, this is done by modeling the input $x_{\sigma_{<k}}$ as a vector of size $n$, where each dimension $x_i=0$ (resp. $x_i$ is a zero vector for the categorical domains) iff $\mathrm{rank}_\sigma(i)=\mathrm{dim}_\sigma^{-1}(i) \geq k$. During training, this comes down to applying a causal mask to candidate solutions, that masks future of $k$ in permutation $\sigma$. Note that, while $\theta \in \mathbb{R}^{n \times m}$ in our architecture, our work can easily be extended by sharing parameters of hidden layers for scaling to very large problems without facing prohibitive training costs, as described in  Appendix \ref{appendix:sharing_params}.

\subsection{Deep Reinforcement Learning for EDAs: Training \label{sec:opti}} 

Given the setting stated above, the optimization seeks to maximize the expected global reward over trajectories $\tau = (s_0, a_0, \dots, s_{n-1}, a_{n-1}, s_{n})$: 
$J(\theta) = \mathbb{E}_{\tau \sim \pi_\theta}[R(\tau)]$,
where $R(\tau)$ in our setting corresponds the fitness $f(x)$ computed for the full candidate $x \in {\cal X}$ contained in the last state of $\tau$ (i.e., $R((s_0,a_0,\ldots, s_n))=f(x)$, iff $s_n=(x,n,\sigma)$). For a given $\sigma$, this is thus equivalent to maximizing $J^{\sigma}(\theta)=\mathbb{E}_{x \sim \pi_\theta(x|\sigma)}[f(x)]$, where $\pi_\theta(x|\sigma)$ stands for the probability of sampling $x$ as a sequence $x=(x_{\sigma_1},\ldots,x_{\sigma_n})$ using our generative architecture.\footnote{ 
In the following of this section, we consider a fixed arbitrary order $\sigma$ for every state of the MDP. Using random variations of $\sigma$ is the subject of the next section.} Following the policy gradient theorem \citep{sutton2000policy}, we get that parameters $\theta$ can be obtained using gradient updates defined as 
\begin{equation}
    \nabla_\theta J^{\sigma}(\theta)=\mathbb{E}_{x \sim \pi_\theta(x|\sigma)}[f(x) \sum_{k=1}^n \nabla_\theta \log \pi_\theta(x_{\sigma_k}|x_{\sigma_{<k}},\sigma) ].
\label{eq:policy_gradient}
\end{equation}
This formulation allows us to sample candidate solutions of the problem from the current distribution $\pi_{\theta}(x|\sigma)$  (which corresponds to  $P_t(x)$ in the EDA framework described in Section \ref{sec:multi_eda}), and then estimate an update of the generative distribution by computing a weighted average of gradients of $\log \pi_\theta(x|\sigma)$, with weights depending on the respective fitness of sampled $x$ (which is the analogue of step 2 from the EDA framework in Section \ref{sec:multi_eda}). However, from updates defined in \eqref{eq:policy_gradient}, each sample $x$ can be used for a unique gradient step only, which can reveal as very sample inefficient. Moreover, updates of the policy are strongly dependent on its parameterization, which can lead to hazardous moves that induce catastrophic forgetting when using such neural generators. To improve sample efficiency and stabilize training, the Proximal Policy Optimization (PPO) algorithm \citep{schulman2017proximal}, following TRPO \citep{schulman2015trpo}, optimizes a surrogate objective function that penalizes deviations from a reference policy $\pi_{\theta_{\text{old}}}$, used for sampling, that will be denoted $\pi_{\theta^t}$ at generation $t$ of our EDA. 

In our setting, the policy gradient update in \eqref{eq:policy_gradient} can be rewritten using importance sampling as an expectation under $\pi_{\theta^t}$. Approximating the state distribution $d^{\pi_\theta}$ by $d^{\pi_{\theta^t}}$, we obtain (see appendix \ref{sec:derivation_PPO} for details)
\begin{multline}
\nabla_\theta J^\sigma(\theta)  
\approx  \mathbb{E}_{
  \pi_{\theta^t}(x|\sigma)}   \sum_{k=1}^n \frac{\nabla_\theta \pi_{\theta}(x_{\sigma_k} | x_{\sigma_{<k}},\sigma)}{\pi_{\theta^t}(x_{\sigma_k} | x_{\sigma_{<k}},\sigma)} \\A^{\pi_{\theta^t}}(x_{\sigma_{< k}},x_{\sigma_k}), \label{eq:approx} 
\end{multline} 
where $A^{\pi_{\theta^t}}(x_{\sigma_{<k}},x_{\sigma_k})$ denotes the expected advantage of setting $X_{\sigma_k}=x_{\sigma_k}$ given $x_{\sigma_{<k}}$, while completing the trajectory with the reference policy. This formulation allows multiple gradient steps for updating the policy (i.e., for obtaining $P_{t+1}$), given samples 
obtained using the policy (representing $P_t$) from the previous iteration $t$ of our EDA RL framework.  
However, 
the approximation in \eqref{eq:approx} (the choice of the KL version of PPO is discussed in section \ref{sec:why_ppo_kl}), which should be understood at the level of expected gradients,  introduces an acceptable bias only when $\pi_\theta$ and $\pi_{\theta^t}$ are close (e.g., in KL divergence). Thus, following the KL version of PPO, we consider the maximization of the regularized objective: 
\begin{multline}
L^\sigma(\theta) = \hspace{-0.3cm} \E_{
  \pi_{\theta^t}(x|\sigma)}   \sum_{k=1}^n   \left[\frac{\pi_{\theta}(x_{\sigma_k} | x_{\sigma_{<k}},\sigma)}{\pi_{\theta^t}(x_{\sigma_k} | x_{\sigma_{<k}},\sigma)} A^{\pi_{\theta^t}}(x_{\sigma_{< k}},x_{\sigma_k}) \right. \\ \left. - \beta D_{\mathrm{KL}}\left( \pi_{\theta^t}(\cdot | x_{\sigma_{<k}},\sigma) \,\|\, \pi_\theta(\cdot | x_{\sigma_{<k}},\sigma) \right)  \right] \label{eq:ppokl}
\end{multline}
where $D_{\mathrm{KL}}(\pi||\pi')$ stands for the Kullback-Leibler (KL) divergence of $\pi$ from $\pi'$, and $\beta > 0$   is an adaptive penalty coefficient that controls the strength of the KL regularization. While PPO classically uses critic neural networks to estimate advantages (e.g., using GAE \citep{schulman2015high}), we rather take inspiration from the GRPO approach \citep{shao2024deepseekmath}, specifically dedicated for RL problems with global rewards from finite trajectories without discount, which avoids the need for a critic, by estimating scale-invariant advantages using a normalization of rewards obtained on a population of samples for a same problem.\footnote{We compare a baseline that uses a critic to estimate advantages with our GRPO approach in Appendix \ref{appendix:critic}.} Scale-invariance  is particularly desirable in black-box optimization settings, as it enhances robustness to the scaling of objective values \citep{baluja1994population, doerr2022general, goudet2025meta}. 
Given a set of $\lambda$ candidate solutions $\Gamma_\lambda^t=\{x^i\}_{i=1}^\lambda$, each sampled from $\pi_{\theta^t}(x|\sigma)$, we   consider at each iteration $t$ of the process the maximization of 
\begin{multline}
\hat{L}_\lambda^\sigma(\theta) =  \frac{1}{\lambda} \sum_{x^i \in \Gamma_\lambda^t}  
\sum_{k=1}^n   \left[\frac{\pi_{\theta}(x^i_{\sigma_k} | x^i_{\sigma_{<k}},\sigma)}{\pi_{\theta^t}(x^i_{\sigma_k} | x^i_{\sigma_{<k}},\sigma)} A_{\Gamma_\lambda^t}(x^i) \right. \\ \left. - \beta D_{\mathrm{KL}}\left( \pi_{\theta^t}(\cdot | x^i_{\sigma_{<k}},\sigma) \,\|\, \pi_\theta(\cdot | x^i_{\sigma_{<k}},\sigma) \right)  \right], \label{eq:grpo}
\end{multline}
where $A_{\Gamma_\lambda^t}(x)$ is the relative performance of candidate $x$ compared to other solutions from $\Gamma_\lambda^t$. In this paper, we consider advantages  computed as 
\begin{equation}
A_{\Gamma_\lambda^t}(x) = U\left(\frac{\text{rk}(x,\Gamma^t_\lambda,f)}{\lambda-1}\right),
\label{eq:rank_based_score}
\end{equation}

where $U$ is a non-increasing utility function and $\text{rk}(x^{i},\Gamma_\lambda^t,f)$ is the rank of the individual $i$ in the population $\Gamma_\lambda^t$ given its fitness $f(x^i)$. 
Formally, $\text{rk}(x,\Gamma, f) =  |\{x' \in \Gamma : f(x') > f(x)\}|$.  This advantage formulation,  which makes the algorithm invariant under monotone  transformation of the fitness function $f$, 
is grounded in the Information-Geometric Optimization (IGO) framework \citep{ollivier2017information}. We discuss the connexion of our approach 
with IGO in Appendix \ref{appendix:link_IGO}.

\subsection{Order invariant reinforcement learning for EDAs \label{subsec:order}}

In the previous section, we introduced a multivariate-RL-EDA, that uses a predetermined arbitrary generation order $\sigma$. The aim of this section is to adapt this algorithm for dealing with variations of this generation order, 
which we claim can strongly benefit for exploration and learning in our black-box optimization setting.

Given a generation order distribution $\xi(\sigma)$, we can consider the expectation 
$L(\theta)=\mathbb{E}_{\sigma \sim \xi(\sigma)} L^\sigma(\theta)$ in place of using $L^\sigma(\theta)$ with a fixed known order $\sigma$. Let for convenience of the following $\sigma(x)_{<k}$ denote a masking (i.e., removing) of any dimension from $x$ whose rank in permutation $\sigma$ is greater or equal than the one of dimension $k$ (i.e., $\forall i \in \llbracket 1,n \rrbracket, X_i \in \sigma(X)_{<k} \iff \mathrm{rank}_\sigma(i) < \mathrm{rank}_\sigma(k)$). 
Using this, we can rewrite the objective \eqref{eq:ppokl}, as  
\begin{multline}
L(\theta) =   \mathbb{E}_{\sigma \sim \xi(\sigma)} 
  \mathbb{E}_{ \pi_{\theta^t}(x|\sigma)}  \\ \sum_{k=1}^n   \left[\frac{\pi_{\theta}(x_k | \sigma(x)_{<k})}{\pi_{\theta^t}(x_k | \sigma(x)_{<k})}  A^{\pi_{\theta^t}}(\sigma(x)_{< k},x_k)  \right.  \\ \left.
  - \beta D_{\mathrm{KL}}\left( \pi_{\theta^t}(\cdot | \sigma(x)_{<k}) \,\|\, \pi_\theta(\cdot | \sigma(x)_{<k}) \right)  \right].\label{eq:ppokl2}
\end{multline}
A notable difference in this writing compared to previous ones is that the inner sum from $k=1$ to $n$ is taken in the original dimension ordering of the problem, rather than in the generation order.  
While fully equivalent, this formulation allows us to introduce a second source of variation, specifically dedicated for incentivizing order-invariance of the policy.  Let $\xi(\sigma_T|\sigma_G)$ be a conditional distribution that samples a transformation $\sigma_T \in \Omega$ of a given initial permutation $\sigma_G \in \Omega$.  We propose to use this transformed permutation $\sigma_T$ to train the new policy $\pi_\theta$, given samples from the old policy using the former permutation $\sigma_G$ used for generation. We get (derivation detailed in section \ref{sec:derivation_PPO_orders})  
\begin{multline}
L(\theta) =   \mathbb{E}_{\substack{\sigma_G \sim \xi(.),  \sigma_T \sim \xi(.|\sigma_G)}} 
  \mathbb{E}_{ \pi_{\theta^t}(x|\sigma_G)}    \\ \sum_{k=1}^n   \left[\frac{\pi_{\theta}(x_k | \sigma_T(x)_{<k})}{\pi_{\theta^t}(x_k | \sigma_G(x)_{<k})}   A^{\pi_{\theta^t}}(\sigma_G(x)_{< k},x_k) \right. \\ \left. - \beta D_{\mathrm{KL}}\left( \pi_{\theta^t}(\cdot | \sigma_G(x)_{<k}) \,\|\, \pi_\theta(\cdot | \sigma_T(x)_{<k}) \right)  \right]. \label{eq:ppokl3}
\end{multline}



As in previous section, we finally consider a Monte-Carlo approximation of this quantity at each iteration, using scale normalized global advantages, given a set of $\lambda$ i.i.d. candidate solutions associated with their own order of generation $\Gamma_\lambda^t=\{(x^i, \sigma_G^i)\}_{i=1}^\lambda$. For each component $i$ in this set, an order $\sigma_G^i$ is first sampled from $\xi$, then $x^i$ is sampled from  $\pi_{\theta^t}(.|\sigma_G)$. We get:   
\begin{multline}
\hat{L}_\lambda(\theta) =  \frac{1}{\lambda} \sum_{(x^i,\sigma_G^i) \in \Gamma_\lambda^t} \mathbb{E}_{\sigma_T \sim \xi(.|\sigma_G^i)} \\ \sum_{k=1}^n   \left[\frac{\pi_{\theta}(x^i_k| \sigma_T(x^i)_{<k})}{\pi_{\theta^t}(x^i_k| \sigma_G^i(x^i)_{<k})} A_{\Gamma_\lambda^t}(x)  \right. \\ \left. - \beta D_{\mathrm{KL}}\left( \pi_{\theta^t}(\cdot | \sigma_G^i(x^i)_{<k}) \,\|\, \pi_\theta(\cdot |  \sigma_T(x^i)_{<k}) \right)  \right]. \label{eq:grpo_invariant}
\end{multline}

A theoretical study of the convergence of the algorithm using this objective in the infinite data and infinite capacity regime is derived in Appendix \ref{sec:convergence}.  This formulation allows us to experiment various versions of our training process, which we name as: \\
- $(\delta,\delta)$-RL-EDA: uses a fixed arbitrary order for both generation and training (i.e., $\xi$ and $\xi(.|\sigma)$ are both Diracs centered on the original order $\sigma$ of the problem);  \\
- $(\delta,\sigma)$-RL-EDA: uses a fixed arbitrary order for generation, but for training $\xi(.|\sigma_G)$ is a uniform distribution;   \\
- $(\sigma,\delta)$-RL-EDA: uses an identical random order $\sigma_G$ for both generation and training, with $\xi$ an uniform distribution over $\Omega$ and $\xi(.|\sigma_G)$ is a Dirac centered on $\sigma_G$.   \\
- $(\sigma,\sigma)$-RL-EDA: uses two sources of noises in the training process. Both the generation order $\sigma_G$ and the training order $\sigma_T$ are sampled from a uniform distribution over $\Omega$.

The pseudo-code of our full algorithm, which includes these permutation noises for training, is given in Appendix  \ref{appendix:algo} (Algorithm \ref{alg:sigma_ppop_eda}).
Note that considering varying causal graphs is also possible in this framework, by simply using masks $\sigma(x)_{<k}$ that hide values of non  parent  variables of $x_k$ in $x$, in addition to every dimension whose rank in $\sigma$ is greater or equal than $k$. 
We experiment with this structural dropout as  a complement or replacement for causal masks for the different versions of the multivariate EDA in Appendices \ref{appendix:additional_dropout} and \ref{appendix:without causal mask}. In  Appendix \ref{appendix:gibbs}, we compare our algorithm with a baseline variant, where the sequential order of generation is replaced by a Gibbs sampling, which is an other way to build an order-invariant RL-EDA. We also  describe in Appendix \ref{appendix:learned_graph} a version called Learned-$\sigma$-RL-EDA which uses a Plackett-Luce (PL) distribution \citep{plackett1975analysis} $\xi^{PL}_w$  to learn the generation order of solutions online. This PL distribution is  parameterized by the vector $w \in \mathbb{R}^n$ trained by gradient descent with the  reparameterization trick proposed by \citep{grover2019stochastic}.

\section{Experiments  \label{sec:first_expe}}

We first examine the following NP-hard  problems in this work (seen as black-box CO): the Quadratic Unconstrained Binary Optimization problem (QUBO) \citep{kochenberger2014unconstrained}, the pseudo-boolean NK landscape problem \citep{kauffman1989nk} and its extension with ternary variables called NK3.  For each of these problems $pb$, we generated instances of size $n \in \{ 64, 128, 256\}$, and for each size, we considered different types $K$ of instances. We generate 10 instances for each tuple $(pb, n,K)$. For each problem instance, we allow a maximum budget of 10,000 objective function evaluations, and we solve it with 10 different restarts. Details regarding the instances and experimental protocol are provided in Appendix \ref{appendix:datasets_synthetic}. The source code for the algorithm, instances, complete results, and instructions for reproducing them are provided in the supplementary material.

\subsection{Comparison of the different versions of reinforcement learning multivariate EDA  \label{subsec:comparison_RL}}

\begin{figure*}[!h]
\centering
\begin{subfigure}{.49\textwidth}
  \centering
  \includegraphics[width=1\linewidth]{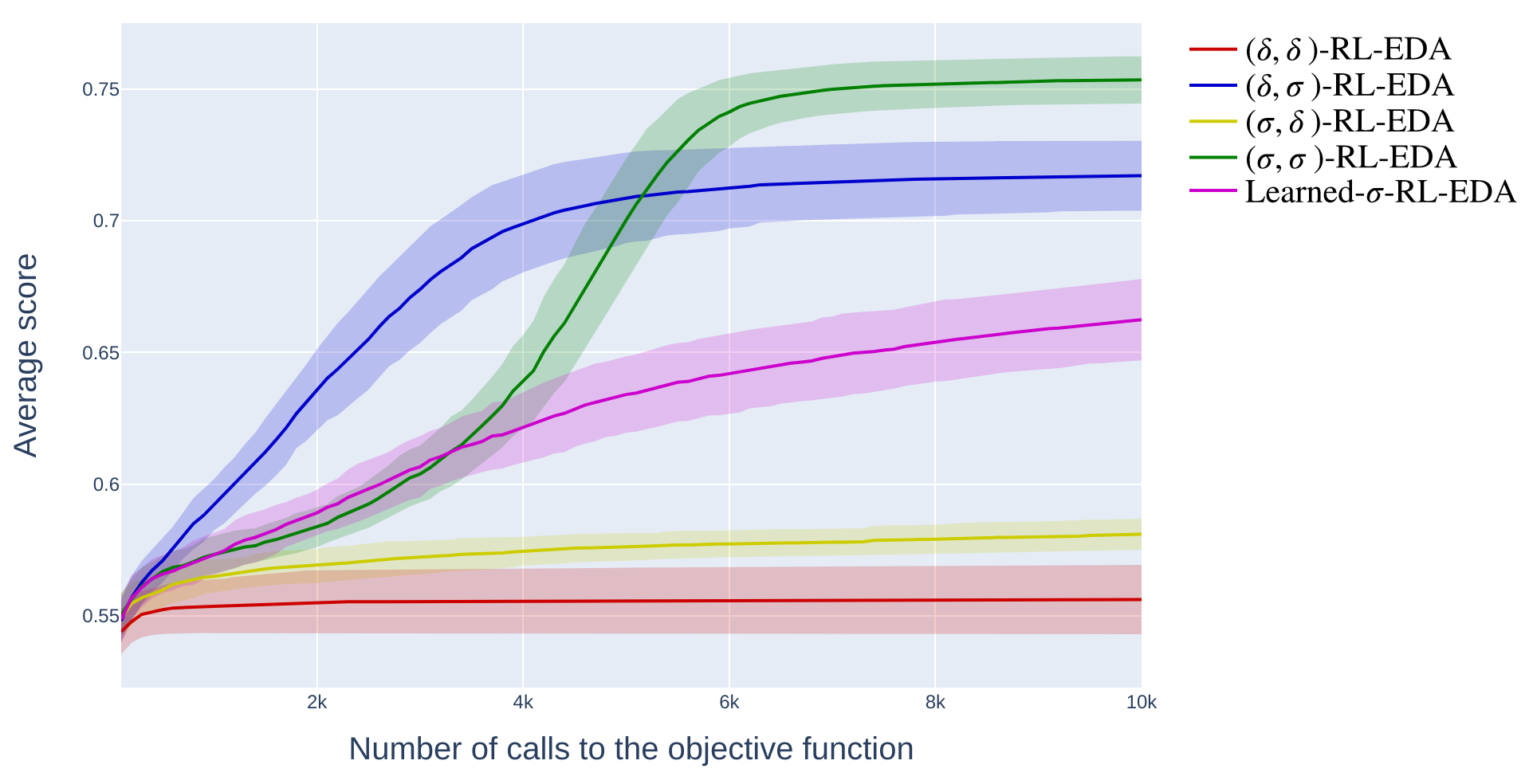}
  \caption{Evolution of the scores}
  \label{fig:comparison_order_scores}
\end{subfigure}%
\begin{subfigure}{.49\textwidth}
  \centering
  \includegraphics[width=1\linewidth]{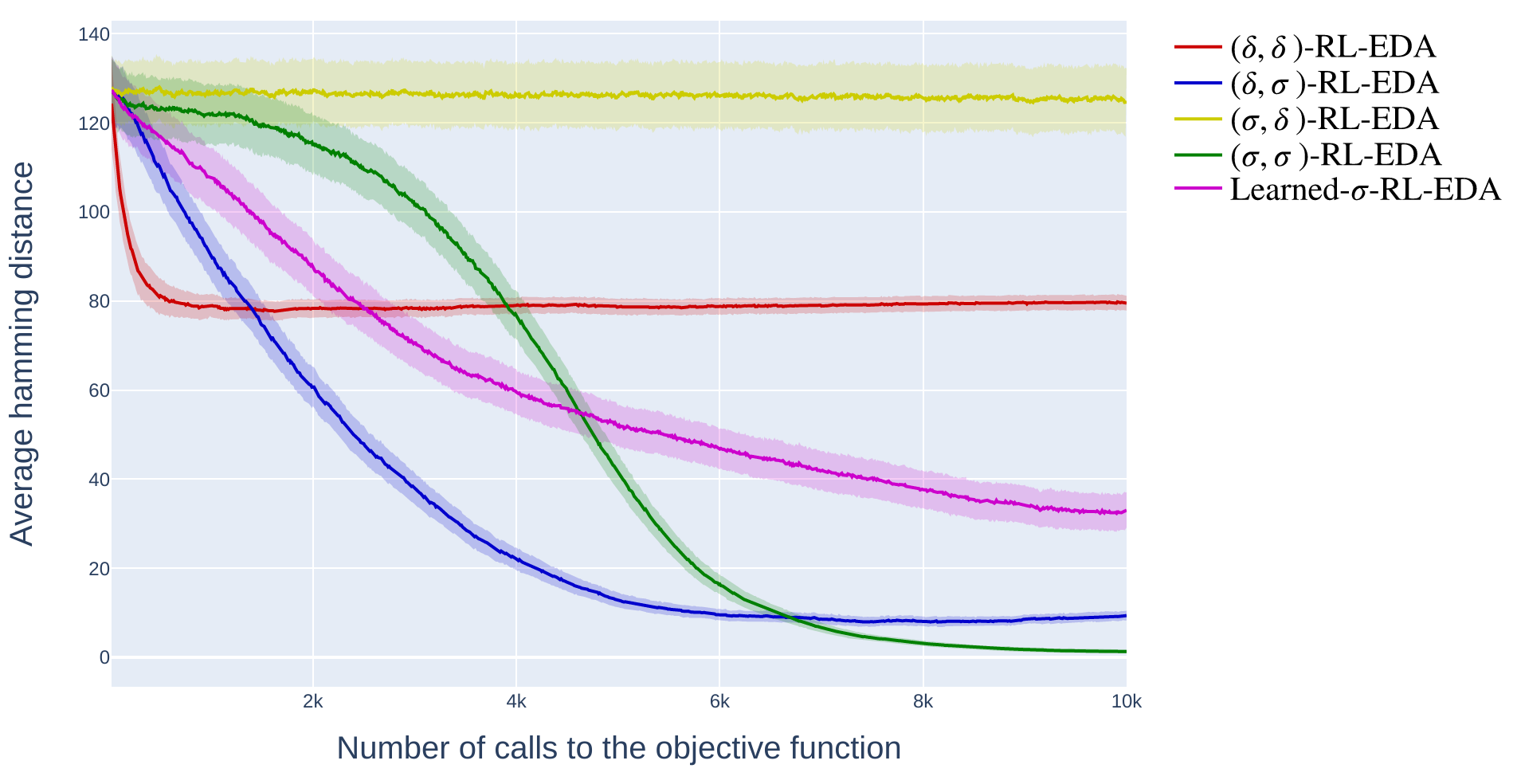}
  \caption{Evolution of the distances}
  \label{fig:comparison_order_distances}
\end{subfigure}
 \caption{X-axis: number of calls to the objective function. Y-axis: Evolution of average scores (a) and  average distances (b) obtained by the different variants of multivariate RL EDA for 100 independent runs on instances of the NK problem with $N=256$ and $K=4$. }
\label{fig:comparison_order}
\end{figure*}

In this section, we first aim to compare  the five different versions of multivariate-RL-EDA presented in Section \ref{subsec:order}: 
 $(\delta,\delta)$-RL-EDA,   $(\delta,\sigma)$-RL-EDA,  $(\sigma,\delta)$-RL-EDA,  
 $(\sigma,\sigma)$-RL-EDA and Learned-$\sigma$-RL-EDA. 
The complete hyperparameter configuration of the various versions of the multivariate-RL-EDA, which serves as a baseline for all experiments, is provided in Appendix~\ref{appendix:ppo_eda_config}. It includes both EDA-specific and GRPO-related parameters, along with implementation and execution details relevant to reproducibility, as well as some details on the complexity and computing time of the proposed approach. Here we perform this comparison only for the distribution of instances of the pseudo-boolean NK maximization problem with $N=256$  and $K=4$ (moderate roughness). The results displayed here are representative of what we can obtain on the other distributions of instances.

Figure \ref{fig:comparison_order_scores} shows the evolution curve of average scores over 100 independent runs for 
the four different versions (solide lines). The ranges of color around the solid lines correspond to plus or minus one standard deviation from the mean calculated over the 100 runs. Solid lines in Figure \ref{fig:comparison_order_distances} corresponds to the evolution of the mean Hamming distance of  the individuals of the population from the best solution found during the trajectory.  The color range represents the standard deviation of the Hamming distance calculated within the population at each generation, with one standard deviation below and one standard deviation above the average distance.  The evolutions of the Mean Hamming distance and standard deviation  are  averaged over the 100 independent runs.

The different multivariate versions of our EDA exhibit very different behavioral dynamics, even though they are characterized by the same hyperparameters, with the exception of changing sampling distributions of orders, which shows their importance during the sampling and update phases for such a multivariate RL algorithm.

The version $(\sigma,\sigma)$-RL-EDA  that uses both uniform distributions of orders for sampling and training  
converges  towards the best scores (green curve).  Once the maximum is reached, we see in Figure \ref{fig:comparison_order_distances} that the algorithm has converged because the average distance from the best solution encountered on the trajectory 
is close to 0.  The comparison of this green curve with the blue curve of the  $(\delta,\sigma)$-RL-EDA version highlights the contribution of sampling new orders during the EDA generation phase, because it allows to maintain a better diversity of the individuals of the population at each generation and thus allows a better exploration of the search space. It works like an ensembling method where actually different models are used at each generation to produce new solutions.  But the main impact is explained  when comparing the green curve with the yellow curve of the  $(\sigma,\delta)$-RL-EDA version. It highlights the contribution of sampling new orders during the EDA training phase, which underscores the importance of the specific structural dropout at the input of each network induced by this random sampling of orders.  Finally, the purple curves correspond to the version using a learned Placket-Luce distribution of order. The purple curves also show a good evolution of the scores, but the model did not converge with the allocated budget, and the scores are worse than those obtained with the $(\sigma,\sigma)$-RL-EDA version (green curve).  This experiment confirms that attempting to extract explicit  structures  in such an online  search process is useless when using neural estimators, since learning them is at least as hard as learning neural weights from random orderings. Instead, random resampling of new orderings for both generation and training plays a key role in discovering high-quality solutions, as it promotes exploration and enables a more effective identification of interactions between variables.




\subsection{Experimental Validation on Discrete Black-Box Benchmarks  \label{sec:benchmarks}}

\begin{figure*}[!h]
\centering
\begin{subfigure}{.45\textwidth}
  \centering
  \includegraphics[width=1\linewidth]{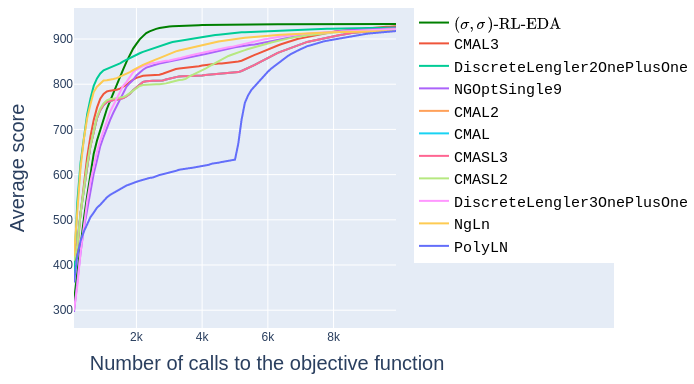}
  \caption{QUBO instances with $N=128$ and $K=5$.}
  \label{fig:benchmark_qubo}
\end{subfigure}%
\begin{subfigure}{.45\textwidth}
  \centering
  \includegraphics[width=1\linewidth]{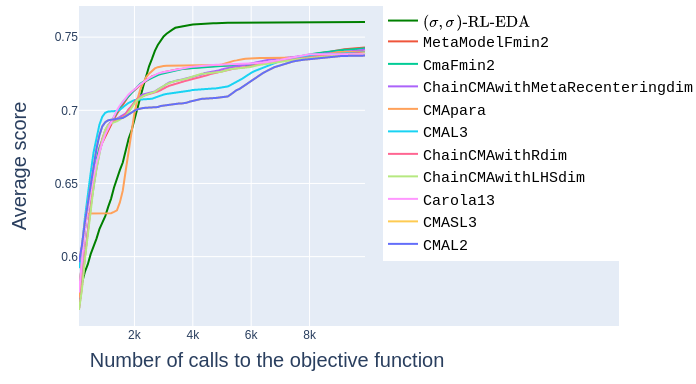}
    \caption{NK instances with $n=128$ and $K=4$.}
  \label{fig:benchmark_nk}
\end{subfigure}
 \caption{X-axis: number of calls to the objective function. Y-axis: Evolution of average scores. }
\label{fig:NK_QUBO_scores}
\end{figure*}

We evaluate the performance of  our best version 
 \texttt{$(\sigma,\sigma)$-RL-EDA} dentified in the last section against a comprehensive set of 503 algorithms, essentially composed of those available in the \texttt{Nevergrad} library \citep{rapin2018nevergrad}. In version 1.0.12 of the \texttt{Nevergrad} library, a total of 542 algorithms were available. We evaluated all of them on the discrete black-box problem QUBO, NK and NK3, with a time budget of one hour per instance. Among these, 500 algorithms successfully produced solutions within the given time limit for pseudo-Boolean problems and 496 for the categorical NK3 problem. This panel includes classic metaheuristic algorithms for black-box optimization (evolutionary and memetic) as well as combinations of solving techniques driven by machine learning (e.g. adaptive portfolios). A complete description is provided in Appendix \ref{appendix:nevergrad}.  In addition to the algorithms already available in \texttt{Nevergrad}, we include three well-known  EDAs: \texttt{PBIL} \citep{baluja1994population}, \texttt{MIMIC} \citep{de1996mimic}, and \texttt{BOA} \citep{pelikan2002bayesian}.\footnote{For these three algorithms, we rely on the publicly available implementation at \url{https://github.com/e5120/EDAs}, using the default hyperparameter settings.} 


A detailed presentation of the experimental results can be found in Appendix \ref{appendix:glob_expe}. In addition, comprehensive results detailing the performance of all algorithms across the various instance distributions are available in the supplementary material.  As shown in Table \ref{tab:results_global} (see appendix \ref{appendix:glob_expe}),  the proposed algorithm \texttt{$(\sigma,\sigma)$-RL-EDA} frequently obtains the best performance on larger instances ($n=128$ and $n=256$) across the various problems considered in this work. Notably, \texttt{$(\sigma,\sigma)$-RL-EDA} performs well on pseudo-Boolean problems QUBO and NK, across a wide range of fitness landscape types---from smooth landscapes (e.g., NK with $K=1$) to more rugged ones ($K=8$)---without requiring any change to its hyperparameters, which is rather surprising.  As an example, Figure \ref{fig:NK_QUBO_scores} display plots showing the evolution of the best scores (averaged over 100 runs) as a function of the number of objective function evaluations for QUBO  instances of size $N=128$ and type $K=5$ and NK instances of size $N=128$ and type $K=4$. In this plot, \texttt{$(\sigma,\sigma)$-RL-EDA} (green curve) is compared to the 10 other competing algorithms that obtained the best results for this distribution of instances among the set of 503 algorithms. We observe in this graph that our algorithm achieves the best results after 10,000 calls to the objective function. But, it can take time to converge to the best results compared to other algorithms and is therefore dominated when we examine the results after only 1,000 evaluations. This is because \texttt{$(\sigma,\sigma)$-RL-EDA} maintains diversity in the sampled population, precisely to avoid getting stuck too quickly in a low-quality local optimum. A curriculum-based adaptation of the algorithm to accelerate this convergence and cope with a low budget context is described in Appendix \ref{appendix:early_budget}. Furthermore, the adaptation of \texttt{$(\sigma,\sigma)$-RL-EDA} to ternary variables (NK3 instances), also yields very good results using the same hyperparameter configuration.
Appendix \ref{sec:sensitivity} provides ablation studies and variant analyses to identify the key components that contribute to the effectiveness of \texttt{$(\sigma,\sigma)$-RL-EDA}, including a comparison with input dropout techniques. In Appendix \ref{appendix:sharing_params}, we present the results obtained on large instances of size 1024. The results show that for this instance size, we can obtain good results by adapting the algorithm with parameter sharing between the generators of the $n$ variables. We also compared our method with the same competitors  on the real neural architecture search public dataset with binary and categorical variables (NAS-Bench-101) \citep{ying2019bench} (see Appendix \ref{appendix:nasbench} for more details). Our method, with the same hyperparameter configuration as used for the other benchmarks, achieves the best results for small budgets after 1,000 evaluations, but also for large budgets after 10,000 calls to the objective function as displayed in Figure \ref{fig:nasbench}. 

\begin{figure}[!h]
\centering
  \includegraphics[width=0.9\linewidth]{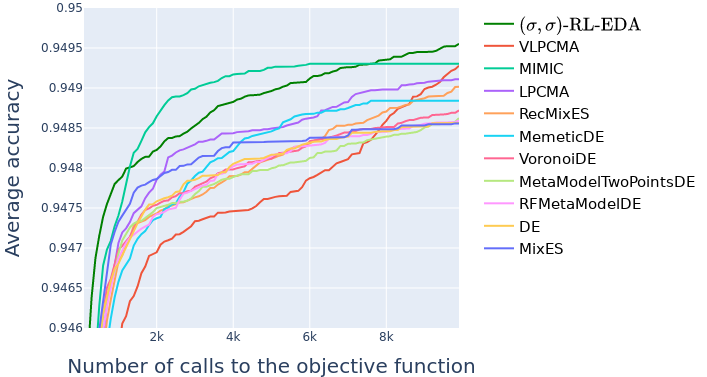}
  \caption{NAS-Bench-101 benchmark. X-axis: number of calls to the objective function. Y-axis: Evolution of the average accuracy of the architectures.  }
    \label{fig:nasbench}
\end{figure}

\section{Conclusion \label{sec:conclusion}}

In this work we introduce a novel discrete black-box optimization framework 
that leverages neural generators of candidate solutions. 
The model is trained using an original order-invariant reinforcement learning procedure, enhancing sample efficiency.  The robustness of our method is supported by extensive empirical evaluation across a diverse set of black-box optimization problems of varying sizes. In particular, the good results obtained in a real-world application demonstrate its usefulness for practical applications. As future work, we aim to extend this approach to a multi-modal setting, for instance by employing mixtures of distributions, potentially represented through models with attraction–repulsion dynamics.



\section*{Impact Statement}

This paper presents work whose goal is to advance the field of Machine Learning. The algorithm code will be made public and accessible to all. It can save time in the automatic calibration of an algorithm's hyperparameters for scientific or academic use.

\bibliography{biblio}
\bibliographystyle{icml2026}

\newpage
\appendix
\onecolumn

\section{Related methods for solving  black-box combinatorial problems \label{appendix:related_work}}

In this appendix, we provide a brief  overview of the two principal paradigms that have been developed in the literature for addressing black-box optimization problems: (i) Bayesian optimization (BO) and surrogate-based modeling, and (ii) evolutionary algorithms (EA). We then focus more specifically on Estimation of Distribution Algorithms (EDAs), a subclass of evolutionary algorithms that iteratively use and update a generative model of promising solutions throughout the search process.

{\bf Bayesian Optimization:} The core idea is to treat the unknown objective $f$ as a random function and place a prior over it, typically using a Gaussian Process (GP). As new evaluations are performed, this prior is updated to form a posterior distribution. The acquisition function—e.g., Expected Improvement (EI), Upper Confidence Bound (UCB), or Probability of Improvement (PI)—guides the search by quantifying the utility of evaluating new candidate solutions \citep{jones1998efficient, srinivas2012information}. BO is particularly effective for global optimization under tight evaluation budgets, making it well-suited for expensive black-box problems \citep{forrester2009recent,frazier2018bayesian,shahriari2015taking}. {\bf Limitations :} BO often struggle to scale effectively in high-dimensional discrete domains, particularly when GPs are used as surrogates, due to their computational complexity and modeling assumptions, even if recent advances have extended Bayesian optimization to discrete and structured domains through various adaptations: tree-structured models \citep{bergstra2011algorithms}, relaxations of discrete variables into continuous spaces \citep{kandasamy2018nasbot}, and surrogate models more adapted to categorical or ordinal data with the use of Random Forests \citep{bergstra2011algorithms} instead of GP. Moreover, these methods are generally based on strong assumptions about the nature of the noise that may appear in the evaluation of the objective function, such as homoscedastic Gaussian noise, which may not hold in real-world settings, thereby compromising the robustness and reliability of the surrogate model \citep{wang2023recent}. Another limitation stems from the inherently sequential nature of classical Bayesian optimization, where only one candidate point is evaluated at each iteration. This design can lead to inefficiencies in scenarios where parallel computational resources are available. Although various batch and parallel extensions have been proposed, such as parallel GP-UCB \citep{contal2013parallel,gonzalez2016batch}, these approaches often introduce additional computational overhead and require centralized coordination, which can hinder scalability and responsiveness in practical applications.

{\bf Evolutionary Algorithms :} Metaheuristic approaches (local search, population-based algorithms...) are widely used to solve CO problems, and EAs offer several appealing characteristics. Because they avoid the overhead of building and updating surrogate models, the computational cost per iteration is typically low. EAs also demonstrate robustness to noise, as selection is often based on the ranking of individuals rather than absolute fitness values, making them resilient to stochastic perturbations and invariant under monotonic transformations of the objective.
Theoretical convergence results are available for certain classes of EAs, supported by advances in runtime analysis and black-box complexity theory \citep{auger2011theory, doerr2020benchmarking}. {\bf Limitations :} EAs may require more function evaluations to identify high-quality solutions compared to model-based approaches for complex problems, which can limit their sample efficiency. Some research, however, has shown that hybrid approaches—combining EAs with surrogate modeling or adaptive sampling strategies—can significantly enhance their effectiveness in scenarios with expensive evaluations \citep{emmerich2006single,jin2011surrogate}.

{\bf Estimation of distribution Algorithms :} Like EAs, EDAs rely on population-based search, but they inherit from BO the notion of modeling structure in the search space, although their modeling goal differs. Instead of modeling the entire objective function, EDAs aim to model only the distribution of promising regions in the fitness landscape, thus avoiding the complexity of full surrogate modeling. This makes EDAs more computationally scalable in high-dimensional or discrete spaces, where standard Gaussian Process-based BO may struggle due to assumptions of smoothness, stationarity, or computational costs of inference \citep{frazier2018bayesian, shan2009survey}.  The learning process in EDAs may be as simple as estimating independent univariate marginals, as in the Univariate Marginal Distribution Algorithm (\texttt{UMDA}) \citep{muhlenbein1996recombination}, or as sophisticated as constructing full probabilistic graphical models, such as in the Bayesian Optimization Algorithm (\texttt{BOA}) \citep{pelikan2002bayesian}. EDAs still benefit from recent developments \citep{UribeDWS22} that open new possible application domains, for instance, to achieve machine learning tasks \citep{Larranaga2024}. One of the principal advantages of the modeling strategy of EDAs is its ability to capture variable interactions, an essential feature in epistatic or non-separable problems, where standard EAs  often fail. Several EDAs utilize graph structure (DAG) extraction at each generation of the process. The \texttt{MIMIC} algorithm \citep{de1996mimic} proposes constructing a first-order Markov chain on the variables, classifying them greedily using pairwise mutual information to capture their strongest statistical dependencies. The Bayesian Optimization Algorithm (\texttt{BOA})  \citep{pelikan2002bayesian} introduces a more expressive probabilistic model using Bayesian networks, allowing it to represent complex, higher-order interactions between variables. The Factorized Distribution Algorithm (\texttt{FDA}) \citep{lozano2006towards,muhlenbein1996recombination}  exploits prior knowledge about the structure of the problem by explicitly incorporating
domain-specific decompositions through a predefined factorization of the joint distribution. 
However, while these approaches can perform well on certain problems, they are fundamentally limited by the exponential growth of computational cost as problem size and dependency complexity increase. In particular, BOA-based methods not only face prohibitive model-construction costs in high-dimensional settings \citep{hauschild2011introduction}, but the complexity of learning accurate dependency structures can also hinder effective exploration of the search space. {\bf Limitations :} EDAs exhibit some limitations in terms of premature convergence. Since most EDAs update their probabilistic model solely from the current population, they tend to focus the search around a single promising region, potentially losing diversity and missing other basins of attraction \citep{hauschild2011introduction}. To address these limitations, several diversity-preserving or niching-based EDAs have been proposed. For example, the \texttt{Multi-CMA-ES} algorithm introduces multiple co-evolving models that repel each other in the search space to maintain diversity and explore multiple optima \citep{karunarathne2024modified}. Similar ideas are found in multi-population EDAs or speciation-based approaches \citep{yang2016multimodal}. 

A natural limitation is the choice of the distribution model. In the continuous case (i.e.\ ${\cal X} \subseteq \mathbb{R}^n$), a common choice is the multivariate Gaussian distribution, which encodes dependencies via its covariance matrix (e.g.\ \texttt{CMA-ES} \citep{hansen2001completely}). In the discrete setting considered here, there is however no direct analogue of the Gaussian.  Rather, one instead typically uses probabilistic graphical models, such as Bayesian networks  \citep{echegoyen2008impact}  
or undirected graphical models / Markov networks (e.g.\ as in \texttt{DEUM} \citep{shakya2006deum}), 
which model joint dependencies via conditional probability tables or undirected cliques and permit sampling of new candidate vectors. Research on multivariate discrete EDAs has seen a notable decline in recent years because there does not exist the equivalent of the multivariate Gaussian distribution for the discrete space. However, \citet {benhamou2018discrete} attempts to adapt the \texttt{CMA-ES}  algorithm to the discrete case, using a multivariate Bernoulli distribution.

\section{Derivation of the PPO update \eqref{eq:approx}}
\label{sec:derivation_PPO}

While the derivation of \eqref{eq:approx} is rather straightforward following the proofs in \citep{schulman2015trpo}, we detail here its adaptation to our notations and to our undiscounted setting, considering only final rewards, for completeness.


Let us first introduce some classical quantities in reinforcement learning:
\begin{itemize}
    \item $V^\pi(s)$ is the state value function, which returns the expected cumulative return following policy $\pi$ from state $s$. In our setting, this can be defined for any given order $\sigma$ and any given state $s=(x_{\sigma_{< k}},k-1,\sigma)$, as: $$V^\pi(s)=V^{\pi,\sigma}(x_{\sigma_{< k}})=\E_{\pi_{\theta}(x_{\sigma_{\geq k}}|x_{\sigma_{< k}},\sigma)} \left[ f(x) \right]$$
    \item $Q^\pi(s,a)$ is the state-action value function, which returns the expected cumulative return from state $s$, assuming first action in $s$ is $a$ and then subsequent actions are sampled from $\pi$. In our setting, this can be defined for any given order $\sigma$ and any given state $s=(x_{\sigma_{< k}},k-1,\sigma)$, and any action $a=x_{\sigma_{k}}$ that specifies the value for $X_{\sigma_{k}}$, as: $$Q^\pi(s,a)=Q^{\pi,\sigma}(x_{\sigma_{< k}}, x_{\sigma_{k}} )=\E_{\pi_{\theta}(x_{\sigma_{> k}}|x_{\sigma_{\leq k}},\sigma)} \left[ f(x) \right]$$
    \item $A^\pi(s,a)$ is the advantage function, defined as: $$A^\pi(s,a)=A^{\pi,\sigma}(x_{\sigma_{< k}}, x_{\sigma_{k}} )=Q^{\pi,\sigma}(x_{\sigma_{< k}}, x_{\sigma_{k}} ) - V^{\pi,\sigma}(x_{\sigma_{< k}})$$  
\end{itemize}

We are interested in maximizing $J^\sigma(\theta)=\mathbb{E}_{\pi_\theta(x|\sigma)}[f(x)]$, while reusing samples from a previous policy 
to improve sample efficiency and stability. 

We start by observing that, given any two policies $\pi_\theta$ and $\pi_{\theta'}$, we have:  
$$\arg\max_\theta  J^\sigma(\theta) = \arg\max_\theta  J^\sigma(\theta) -  J^\sigma(\theta'),$$
since $\theta$ does not appear in $J^\sigma(\theta')$. 

Looking at $J^\sigma(\theta) -  J^\sigma(\theta')$, we get: 
\begin{eqnarray}
J^\sigma(\theta) -  J^\sigma(\theta') & = & \mathbb{E}_{\pi_\theta(x|\sigma)}[f(x)] - \mathbb{E}_{\pi_{\theta'}(x|\sigma)}[f(x)] \\ 
&=& \mathbb{E}_{\pi_\theta(x|\sigma)}[f(x)] - V^{\pi_{\theta'}, \sigma}(\varnothing) \\
&=& \mathbb{E}_{\pi_\theta(x|\sigma)}\left[f(x) - V^{\pi_{\theta'}, \sigma}(\varnothing) \right] \\
&=& \mathbb{E}_{\pi_\theta(x|\sigma)}\left[V^{\pi_{\theta'}, \sigma}(x_{\sigma_{\leq n}}) - V^{\pi_{\theta'}, \sigma}(\varnothing) \right]  \label{eq:13} \\
&=& \mathbb{E}_{\pi_\theta(x|\sigma)}\left[\sum_{k=1}^n \left(V^{\pi_{\theta'}, \sigma}(x_{\sigma_{< k+1}}) - V^{\pi_{\theta'}, \sigma}(x_{\sigma_{< k}}) \right) \right] \label{eq:14}  \\
&=& \mathbb{E}_{\pi_\theta(x|\sigma)}\left[\sum_{k=1}^n \left(Q^{\pi_{\theta'}, \sigma}(x_{\sigma_{< k}},x_{\sigma_k}) - V^{\pi_{\theta'}, \sigma}(x_{\sigma_{< k}}) \right) \right] \label{eq:15} 
\\
&=& \mathbb{E}_{\pi_\theta(x|\sigma)}\left[\sum_{k=1}^n A^{\pi_{\theta'}, \sigma}(x_{\sigma_{< k}},x_{\sigma_k})  \right]
\\
&=&  \mathbb{E}_{\pi_\theta(x|\sigma)} \sum_{k=1}^n
\mathbb{E}_{\pi_\theta(x_{\sigma_k}|x_{\sigma_{<k}},\sigma)}\left[ A^{\pi_{\theta'}, \sigma}(x_{\sigma_{< k}},x_{\sigma_k})  \right]
\\
&=&  \mathbb{E}_{\pi_\theta(x|\sigma)} \sum_{k=1}^n
\mathbb{E}_{\pi_{\theta'}(x_{\sigma_k}|x_{\sigma_{<k}},\sigma)} \frac{\pi_{\theta}(x_{\sigma_k}|x_{\sigma_{<k}},\sigma)}{\pi_{\theta'}(x_{\sigma_k}|x_{\sigma_{<k}},\sigma)}\left[ A^{\pi_{\theta'}, \sigma}(x_{\sigma_{< k}},x_{\sigma_k})  \right] \label{eq:18}
\end{eqnarray}
where $\varnothing$ is the empty sequence (which can also be denoted as the starting point of any sequence $x_{\sigma_{<1}}$). This derivation leverages the fact that in our case, for any sequence $x$ and any policy $\pi$, $f(x) = V^{\pi,\sigma}(x_{\sigma_{\leq n}})$ as the sequence is already completed after $n$ steps (we are in a terminal state, as $n$ is the dimension of our combinatorial space ${\cal X}$). Also, \eqref{eq:14} exploits that every term of the sum telescop except the two extrema that appear in \eqref{eq:13}, \eqref{eq:15} leverages that, following definitions above, for any $x$ and any $0< k\leq n$, we have: $Q^{\pi_{\theta'}, \sigma}(x_{\sigma_{< k}},x_{\sigma_k}) = V^{\pi_{\theta'}, \sigma}(x_{\sigma_{< k+1}})$.  

Next, if $\pi_\theta(x|\sigma)$ is sufficiently close to $\pi_\theta'(x|\sigma)$, the idea of TRPO/PPO based approaches is to rather use samples of states from the old policy $\pi_{\theta^t}(x|\sigma)$, rather than the current one. 
This is done in our case by replacing $\mathbb{E}_{\pi_\theta(x|\sigma)}$ by $\mathbb{E}_{\pi_{\theta^t}(x|\sigma)}$ in \eqref{eq:18}. We obtain:    
\begin{eqnarray}
J^\sigma(\theta) -  J^\sigma(\theta^t) & \approx & L^\sigma_{\theta^t}(\theta)
\end{eqnarray}
with 
\begin{eqnarray}
    L^\sigma_{\theta^t}(\theta) &\triangleq& \mathbb{E}_{\pi_{\theta^t}(x|\sigma)} \sum_{k=1}^n
\mathbb{E}_{\pi_{\theta^t}(x_{\sigma_k}|x_{\sigma_{<k}},\sigma)} \frac{\pi_{\theta}(x_{\sigma_k}|x_{\sigma_{<k}},\sigma)}{\pi_{\theta^t}(x_{\sigma_k}|x_{\sigma_{<k}},\sigma)}\left[ A^{\pi_{\theta^t}, \sigma}(x_{\sigma_{< k}},x_{\sigma_k})  \right]  \\
&=&
\mathbb{E}_{\pi_{\theta^t}(x|\sigma)} \sum_{k=1}^n
\frac{\pi_{\theta}(x_{\sigma_k}|x_{\sigma_{<k}},\sigma)}{\pi_{\theta^t}(x_{\sigma_k}|x_{\sigma_{<k}},\sigma)}\left[ A^{\pi_{\theta^t}, \sigma}(x_{\sigma_{< k}},x_{\sigma_k})  \right]  
\label{eq:Lfixe}
\end{eqnarray}

Next, we consider $\nabla_\theta L^\sigma_{\theta^t}(\theta)$ as a proxy for $\nabla_\theta (J^\sigma(\theta) -  J^\sigma(\theta^t)) = \nabla_\theta J^\sigma(\theta)$, which results in \eqref{eq:approx}. 

\section{Derivation of the PPO update with varying generation/training orders} 
\label{sec:derivation_PPO_orders}

In this section, we check that PPO updates, that we derivated in previous section for the case of an arbitrary fixed generation (and training) order, can be adapted for the case of varying permutations. 

For the case where the training order is always the same as the generation one (i.e., $\xi(.|\sigma)$ is a Dirac centered on $\sigma$), the derivation of the PPO update is trivial to obtained from \eqref{eq:Lfixe}, as it suffices to take the expectation of $L^\sigma_{\theta^t}(\theta)$ depending on distribution $\xi(.)$. The update can be derived by taking the gradient of $L_{\theta^t}(\theta)=\mathbb{E}_{\sigma \sim \xi(\sigma)} L^\sigma_{\theta^t}(\theta)$. 

Next, we consider the more tricky case, where generation and training orders can be different. For this purpose, 
looking at $J^\sigma(\theta) -  J^{\sigma'}(\theta')$, we get: 
\begin{eqnarray}
J^\sigma(\theta) -  J^{\sigma'}(\theta') \hspace{-0.3cm} & = & \hspace{-0.3cm} \mathbb{E}_{\pi_\theta(x|\sigma)}[f(x)] - \mathbb{E}_{\pi_{\theta'}(x|\sigma')}[f(x)] \\ 
\hspace{-0.3cm} &=& \hspace{-0.3cm} \mathbb{E}_{\pi_\theta(x|\sigma)}[f(x)] - V^{\pi_{\theta'}, \sigma'}(\varnothing) \\
\hspace{-0.3cm} &=& \hspace{-0.3cm} \mathbb{E}_{\pi_\theta(x|\sigma)}\left[f(x) - V^{\pi_{\theta'}, \sigma'}(\varnothing) \right] \\
\hspace{-0.3cm} &=& \hspace{-0.3cm} \mathbb{E}_{\pi_\theta(x|\sigma)}\left[V^{\pi_{\theta'}, \sigma'}(\sigma'(x)_{\leq dim_{\sigma'}(n)}) - V^{\pi_{\theta'}, \sigma'}(\sigma'(x)_{< dim_{\sigma'}(1)}) \right]  \label{eq:13bis} \\
\hspace{-0.3cm} &=& \hspace{-0.3cm} \mathbb{E}_{\pi_\theta(x|\sigma)}\left[\sum_{k=1}^n \left(V^{\pi_{\theta'}, \sigma'}(\sigma'(x)_{< k+1}) - V^{\pi_{\theta'}, \sigma'}(\sigma'(x)_{< k}) \right) \right] \label{eq:14bis}  \\
\hspace{-0.3cm} &=& \hspace{-0.3cm} \mathbb{E}_{\pi_\theta(x|\sigma)}\left[\sum_{k=1}^n \left(Q^{\pi_{\theta'}, \sigma'}(\sigma'(x)_{< k},x_k) - V^{\pi_{\theta'}, \sigma'}(\sigma'(x)_{< k}) \right) \right] \label{eq:16} 
\\
\hspace{-0.3cm} &=& \hspace{-0.3cm} \mathbb{E}_{\pi_\theta(x|\sigma)}\left[\sum_{k=1}^n A^{\pi_{\theta'}, \sigma'}(\sigma'(x)_{< k}, x_k)  \right]
\\
\hspace{-0.3cm} &=&  \hspace{-0.3cm} \mathbb{E}_{\pi_\theta(x|\sigma)} \sum_{k=1}^n \mathbb{E}_{\pi_\theta(x_k|\sigma(x)_{<k},\sigma)}\left[ A^{\pi_{\theta'}, \sigma'}(\sigma'(x)_{< k}, x_k)  \right] \label{eq:adv_prime}
\\
\hspace{-0.3cm} &=&  \hspace{-0.6cm} \E_{\pi_\theta(x|\sigma)} \sum_{k=1}^n
\E_{\pi_{\theta'}(x_k|\sigma'(x)_{<k} ,\sigma')} \left[\frac{\pi_{\theta}(x_k|\sigma(x)_{<k},\sigma)}{\pi_{\theta'}(x_k|\sigma'(x)_{<k},\sigma')} A^{\pi_{\theta'}, \sigma'}(\sigma'(x)_{< k},x_k)  \right] \label{eq:importance_varying}
\end{eqnarray}
where we switched to the notation introduced in section \ref{subsec:order}, that is more convenient for dealing with different orders $\sigma$ and $\sigma'$. In particular, this makes that the inner sum from $k=1$ to $n$ enumerates index from the original problem in ${\cal X}$, rather than the generation order from a given permutation. This has an impact on the ordering of advantages functions in \eqref{eq:adv_prime}, but the quantities still telescop, and each advantage is  line with the trained transition in \eqref{eq:importance_varying}. We note that importance sampling ratios do not exploit same knowledge, as masks do not apply on same dimensions in the numerator and denominator, but the behavior distribution is still non zero everywhere the training distribution allocates probability mass, which is the main requirement for importance sampling techniques.    

Then, given a previous behavior policy $\pi_{\theta^t}$ that sampled solutions with generation order $\sigma'$, we can train policy $\pi_\theta$, with training order $\sigma$, by considering the following approximator: 
\begin{eqnarray}
    L^{\sigma,\sigma'}_{\theta^t}(\theta) &\triangleq& \mathbb{E}_{\pi_{\theta^t}(x|\sigma')} \sum_{k=1}^n
\mathbb{E}_{\pi_{\theta^t}(x_k|\sigma'(x)_{<k},\sigma')} \frac{\pi_{\theta}(x_k|\sigma(x)_{<k},\sigma)}{\pi_{\theta^t}(x_k|\sigma'(x)_{<k},\sigma')}\left[ A^{\pi_{\theta^t}, \sigma'}(\sigma'(x)_{< k},x_k)  \right] \nonumber  \\
&=&
\mathbb{E}_{\pi_{\theta^t}(x|\sigma')} \sum_{k=1}^n
 \frac{\pi_{\theta}(x_k|\sigma(x)_{<k},\sigma)}{\pi_{\theta^t}(x_k|\sigma'(x)_{<k},\sigma')}\left[ A^{\pi_{\theta^t}, \sigma'}(\sigma'(x)_{< k},x_k)  \right]  
\end{eqnarray}

For any $((\pi_{\theta^t}, \sigma'),(\pi_{\theta}, \sigma))$, we have that: $J^\sigma(\theta) -  J^{\sigma'}(\theta^t) \approx L^{\sigma,\sigma'}_{\theta^t}(\theta)$ whenever $\pi_\theta^t(.|\sigma')$ remains close to $\pi_\theta(.|\sigma)$. 

Finally, we can take $\mathbb{E}_{\sigma \sim \xi(\sigma), \sigma' \sim \xi(\sigma'|\sigma)} L^{\sigma',\sigma}_{\theta^t}(\theta)$ as the maximization objective, with KL regularization constraints that are considered in  \eqref{eq:grpo_invariant}.

\section{On the Convergence in the Infinite Data and Infinite Capacity Regime} 
\label{sec:convergence}

In our approach, we  consider  at each step of our process the maximization of the quantity (see section \ref{subsec:order}): 
\begin{multline}
\label{eq:hatL}
\hat{L}^t_\lambda(\theta) =  \frac{1}{\lambda} \sum_{(x^i,\sigma^i) \in \Gamma_\lambda^t} \mathbb{E}_{\sigma' \sim \xi(\sigma'|\sigma^i)} \sum_{k=1}^n   \left[\frac{\pi_{\theta}(x^i_k| \sigma'(x^i)_{<k})}{\pi_{\theta^t}(x^i_k| \sigma^i(x^i)_{<k})} A_{\Gamma_\lambda^t}\big(x^{(i)}\big) \right. \\ \left. - \beta D_{\mathrm{KL}}\left( \pi_{\theta^t}(\cdot | \sigma^i(x^i)_{<k}) \,\|\, \pi_\theta(\cdot |  \sigma'(x^i)_{<k}) \right)  \right].
\end{multline}
where $\Gamma_\lambda^t = \{x^{(1)},\dots,x^{(\lambda)}\}$ is a set of  i.i.d. samples from  $\pi_{\theta^t}$, and where $A_{\Gamma_\lambda^t}\big(x^{(i)}\big)$ is a ranking function of $x_i$ in the set $\Gamma_\lambda^t$ in decreasing order of fitness.   

For simplicity of notation, we rewrite this quantity as: 
\[
\hat L^t_\lambda(\theta)
= \frac{1}{\lambda}\sum_{i=1}^{\lambda} w_{\theta^t,\theta}\big(x^{(i)},\sigma^{(i)}\big)\,A_{\Gamma_\lambda^t}\big(x^{(i)}\big) + kl_{\theta^t,\theta}\big(x^{(i)},\sigma^{(i)}\big),
\]
where: 
\begin{itemize}
    \item $w_{\theta^t,\theta}\big(x^{(i)},\sigma^{(i)}\big)=\mathbb{E}_{\sigma' \sim \xi(\sigma'|\sigma^i)} \sum_{k=1}^n   \left[\frac{\pi_{\theta}(x^i_k| \sigma'(x^i)_{<k})}{\pi_{\theta^t}(x^i_k| \sigma^i(x^i)_{<k})}\right]$
    \item $kl_{\theta^t,\theta}\big(x^{(i)},\sigma^{(i)}\big) = - \beta \mathbb{E}_{\sigma' \sim \xi(\sigma'|\sigma^i)} \sum_{k=1}^n  \left[ D_{\mathrm{KL}}\left( \pi_{\theta^t}(\cdot | \sigma^i(x^i)_{<k}) \,\|\, \pi_\theta(\cdot |  \sigma'(x^i)_{<k}) \right)  \right]$
\end{itemize}

We first show the following lemma, that states that $\hat{L}^t_\lambda(\theta)$ is an unbiased estimator of:  \begin{equation}
\label{eq:lambda_exp}
L^t_\lambda(\theta)=\mathbb{E}_{\sigma} \mathbb{E}_{x\sim\pi_{\theta^t(.|\sigma)}}\Big[ w_{\theta^t,\theta}\big(x,\sigma\big) \;\mathbb{E}_{\Gamma_\lambda^t\setminus\{x\}}\big[A_{\Gamma_\lambda^t}(x)\big] + kl_{\theta^t,\theta}\big(x,\sigma\big)\Big], 
\end{equation}
where  $\;\mathbb{E}_{\Gamma_\lambda^t\setminus\{x\}}\big[A_{\Gamma_\lambda^t}(x)\big]$ denotes the expectation of the ranking of $x$ in a set containing $\lambda - 1$ other samples from the mixture  $\mathbb{E}_\sigma \pi_{\theta^t}(.|\sigma)$: 
\begin{lemma}
$\mathbb{E}\big[\hat L^t_\lambda(\theta)\big]
= \mathbb{E}_{\sigma} \mathbb{E}_{x\sim\pi_{\theta^t(.|\sigma)}}\Big[ w_{\theta^t,\theta}\big(x,\sigma\big)\;\mathbb{E}_{\Gamma_\lambda^t\setminus\{x\}}\big[A_{\Gamma_\lambda^t}(x)\big] + kl_{\theta^t,\theta}\big(x,\sigma\big) \Big]$
\end{lemma}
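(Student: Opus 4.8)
The plan is to exploit that the $\lambda$ pairs $(x^{(i)},\sigma^{(i)})$ are i.i.d.\ draws from the two–stage process $\sigma^{(i)}\sim\xi$ followed by $x^{(i)}\sim\pi_{\theta^t}(\cdot\mid\sigma^{(i)})$, and to handle the two additive pieces of $\hat L^t_\lambda$ separately. By linearity of expectation,
\[
\mathbb{E}\big[\hat L^t_\lambda(\theta)\big]=\frac{1}{\lambda}\sum_{i=1}^\lambda \mathbb{E}\Big[w_{\theta^t,\theta}(x^{(i)},\sigma^{(i)})\,A_{\Gamma_\lambda^t}(x^{(i)})+kl_{\theta^t,\theta}(x^{(i)},\sigma^{(i)})\Big].
\]
Since the tuple $\big((x^{(1)},\sigma^{(1)}),\dots,(x^{(\lambda)},\sigma^{(\lambda)})\big)$ is exchangeable and, for each $i$, the summand is a function that depends on the $i$-th pair and treats the remaining samples symmetrically (the advantage $A_{\Gamma_\lambda^t}$ being a symmetric function of the set), every summand shares the same expectation. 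It therefore suffices to compute the $i=1$ term.

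First I would dispatch the KL contribution, which is the easy half: $kl_{\theta^t,\theta}(x^{(1)},\sigma^{(1)})$ depends only on the first pair, so its expectation is immediately $\mathbb{E}_\sigma\mathbb{E}_{x\sim\pi_{\theta^t}(\cdot\mid\sigma)}[kl_{\theta^t,\theta}(x,\sigma)]$, matching the corresponding term of \eqref{eq:lambda_exp}.

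For the product $w_{\theta^t,\theta}(x^{(1)},\sigma^{(1)})\,A_{\Gamma_\lambda^t}(x^{(1)})$ I would apply the tower property, conditioning on $(x^{(1)},\sigma^{(1)})$. The factor $w_{\theta^t,\theta}(x^{(1)},\sigma^{(1)})$ is measurable with respect to this conditioning and pulls out, leaving $\mathbb{E}\big[A_{\Gamma_\lambda^t}(x^{(1)})\mid x^{(1)},\sigma^{(1)}\big]$. The crucial observation is that $A_{\Gamma_\lambda^t}(x^{(1)})$ is a function of the \emph{fitness values} $f(x^{(j)})$ alone, hence it depends neither on $\sigma^{(1)}$ nor on the orders $\sigma^{(j)}$ of the other samples. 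Conditionally on $x^{(1)}$, the remaining $\lambda-1$ samples $x^{(2)},\dots,x^{(\lambda)}$ are i.i.d.\ with the mixture marginal $\mathbb{E}_\sigma\pi_{\theta^t}(\cdot\mid\sigma)$ obtained after integrating out their orders, which is exactly the law denoted $\mathbb{E}_{\Gamma_\lambda^t\setminus\{x\}}$. Substituting $\mathbb{E}\big[A_{\Gamma_\lambda^t}(x^{(1)})\mid x^{(1)},\sigma^{(1)}\big]=\mathbb{E}_{\Gamma_\lambda^t\setminus\{x^{(1)}\}}\big[A_{\Gamma_\lambda^t}(x^{(1)})\big]$ and taking the outer expectation over $(x^{(1)},\sigma^{(1)})\sim\mathbb{E}_\sigma\mathbb{E}_{x\sim\pi_{\theta^t}(\cdot\mid\sigma)}$ reproduces the product term, completing the identification with \eqref{eq:lambda_exp}.

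The main obstacle I anticipate is the bookkeeping around the set-valued dependence of the advantage: one must argue carefully that conditioning on the single pair $(x^{(1)},\sigma^{(1)})$ genuinely leaves the complementary samples as an i.i.d.\ draw from the mixture marginal, and that the ranking/utility $A$ is independent of the sampling orders. Everything else reduces to linearity, exchangeability, and the tower rule, with no delicate estimates required; once the conditional independence structure is stated cleanly the argument is routine.
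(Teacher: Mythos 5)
Your proposal is correct and follows essentially the same route as the paper's own proof: linearity of expectation, exchangeability of the i.i.d.\ pairs to reduce to the first summand, and the tower property to identify $\mathbb{E}\big[A_{\Gamma_\lambda^t}(x^{(1)})\mid x^{(1)}=x\big]$ with $\mathbb{E}_{\Gamma_\lambda^t\setminus\{x\}}\big[A_{\Gamma_\lambda^t}(x)\big]$. Your additional remarks (that the KL term factors out trivially and that the advantage depends only on the fitness values, not on the sampled orders) are accurate clarifications of points the paper leaves implicit.
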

\begin{proof}

\medskip

By the linearity of expectation, we have: 
\[
\mathbb{E}\big[\hat L^t_\lambda(\theta)\big]
= \frac{1}{\lambda}\sum_{i=1}^{\lambda} \mathbb{E}\Big[ w_{\theta^t,\theta}\big(x^{(i)},\sigma^{(i)}\big)\,A_{\Gamma_\lambda^t}\big(x^{(i)}\big) + kl_{\theta^t,\theta}\big(x^{(i)},\sigma^{(i)}\big) \Big].
\]

Then, as all $x^{(i)}$ are i.i.d., each component of the sum owns the same expectation. Thus, by exchangeability, we can say that (arbitrarily taking the first sample $(x^{(1)},\sigma^{(1)})$ from $\Gamma_\lambda^t$ as the reference, without loss of generality): 
\[
\mathbb{E}\big[\hat L^t_\lambda(\theta)\big]
= \mathbb{E}\Big[ w_{\theta^t,\theta}\big(x^{(1)},\sigma^{(1)}\big)\,A_{\Gamma_\lambda^t}\big(x^{(1)}\big) + kl_{\theta^t,\theta}\big(x^{(1)},\sigma^{(1)}\big)  \Big].
\]

Using the law of total expectation, we obtain: 
\begin{multline}
\mathbb{E}\Big[ w_{\theta^t,\theta}\big(x^{(1)},\sigma^{(1)}\big)\,A_{\Gamma_\lambda^t}\big(x^{(1)}\big) + kl_{\theta^t,\theta}\big(x^{(1)},\sigma^{(1)}\big)  \Big]
= \\ \mathbb{E}_{\sigma^{(1)}} \mathbb{E}_{x^{(1)}\sim\pi_{\theta^t}(.|\sigma^{(1))}}\Big[\, w_{\theta^t,\theta}\big(x^{(1)},\sigma^{(1)}\big) \;\mathbb{E}\Big[ A_{\Gamma_\lambda^t}\big(x^{(1)}\big) \mid x^{(1)} \big] + kl_{\theta^t,\theta}\big(x^{(1)},\sigma^{(1)}\big)   \Big]. \nonumber
\end{multline}

Fixing $x^{(1)} = x$ corresponds to considering $x$ as one element of the set $\Gamma_\lambda^t$, and completing it with $\lambda - 1$ additional independent draws. Therefore, we have: 
\[
\mathbb{E}\big[ A_{\Gamma_\lambda^t}(x^{(1)}) \mid x^{(1)}=x\big]
= \mathbb{E}_{\Gamma_\lambda^t\setminus\{x\}}\big[\, A_{\Gamma_\lambda^t}(x)\,\big].
\]

Thus, we finally get: 
\[
\mathbb{E}\big[\hat L^t_\lambda(\theta)\big]
= \mathbb{E}_{\sigma} \mathbb{E}_{x\sim\pi_{\theta^t(.|\sigma)}}\Big[ w_{\theta^t,\theta}\big(x,\sigma\big)\;\mathbb{E}_{\Gamma_\lambda^t\setminus\{x\}}\big[A_{\Gamma_\lambda^t}(x)\big] + kl_{\theta^t,\theta}\big(x,\sigma\big) \Big],
\]
which concludes the proof and indicates that $\hat L^t_\lambda(\theta)$ is an unbiased estimator of $L^t_\lambda(\theta)$.
\end{proof}

Thus, while at each epoch $t$ our algorithm seeks to maximize the stochastic estimator $\hat L_\lambda(\theta)$, in expectation it actually aims to optimize the theoretical objective $L^t_\lambda(\theta)$. 

Following this, we observe that our surrogate scale-invariant objective $A_{\Gamma_\lambda^t}(x)$ (that we use in \eqref{eq:grpo_invariant}, in place of the original fitness sore from \eqref{eq:ppokl3}), can be considered in expectation as a stationary classical reward function at each epoch $t$, depending only on constant parameters $\theta_t$. 


We thus obtain a classical learning problem at each epoch $t$, where we maximize
\[
\pi_{\theta}(x_k \mid \sigma'(x)_{<k}) \;
\dfrac{\pi_{\theta^t}(x \mid \sigma)}{\pi_{\theta^t}(x_k \mid \sigma(x)_{<k})} \;
\mathbb{E}_{\Gamma_\lambda^t \setminus \{x\}}\!\left[A_{\Gamma_\lambda^t}(x)\right],
\]
for any uniformly sampled tuple $(x \in {\cal X}, \sigma \in \Omega, \sigma' \in \Omega, k \in \llbracket 1,n \rrbracket)$, under the soft constraint imposed by the KL regularizer. In other words, at each epoch the conditional probability of values for dimension $k \in \llbracket 1,n \rrbracket$ of solutions likely under $\pi_{\theta^t}(x \mid \sigma)$ is increased (resp. decreased) if they have a positive (resp. negative) expected signed rank among $\lambda$ samples from $\mathbb{E}_\sigma \pi_{\theta^t}(. \mid \sigma)$. This means that decisions leading to high (resp. low) fitness are reinforced (resp. penalized) at each epoch. As $t \to \infty$, the distribution $\Gamma_\lambda^t$ converges asymptotically towards a degenerate set containing a single solution. If $\lambda$ is infinite, this limiting solution coincides with the global optimum of the problem (i.e., the element $x^\star \in {\cal X}$ such that $f(x^\star) = \max_{x \in {\cal X}} f(x)$).


\section{Generation/Training  Permutations as Information-Preserving Input Dropout}
\label{sec:vs_dropout}


In section \ref{sec:convergence}, we have shown that the quantity we consider in each maximization step is an unbiased estimator of $L_\lambda^t(\theta)$, as defined in \eqref{eq:lambda_exp}: 

\begin{multline}
\label{eq:lambda_exp_bis}
L^t_\lambda(\theta)=\mathbb{E}_{\sigma} \mathbb{E}_{x\sim\pi_{\theta^t(.|\sigma)}}\Big[ \mathbb{E}_{\sigma' \sim \xi(\sigma'|\sigma)} \sum_{k=1}^n   \left[\frac{\pi_{\theta}(x_k| \sigma'(x)_{<k})}{\pi_{\theta^t}(x_k| \sigma(x)_{<k})}  \;\mathbb{E}_{\Gamma_\lambda^t\setminus\{x\}}\big[A_{\Gamma_\lambda^t}(x)\big] \right. \\ \left.
- \beta  D_{\mathrm{KL}}\left( \pi_{\theta^t}(\cdot | \sigma(x)_{<k}) \,\|\, \pi_\theta(\cdot |  \sigma'(x)_{<k}) \right)   \right], 
\end{multline}

This formulation allows us to distinguish between the two effects of the randomness introduced in the order of generation:  
\begin{itemize}
    \item \textbf{Population Diversity}: During first epochs, the neural generators are not prepared for order invariance. Different generation orders $\sigma$ thus induce different generation distributions $\pi(.|\sigma)$. Uniformly sampling a new  $\sigma$ from $\Omega$ for each generation thus implies an higher diversity in the populations. In that cases, any estimation of the reward metric  $\mathbb{E}_{\Gamma_\lambda^t\setminus\{x\}}\big[A_{\Gamma_\lambda^t}(x)\big]$ is thus likely to own a greater variance than when using a fixed generation order (especially for low $\lambda$), as the variance of a mixture of distributions (i.e., $\E_\sigma \pi_{\theta^t}(.,\sigma)$) is always greater or equal than the lowest variance of its components. This allows to better explore in the first steps of the process by introducing more stochasticity in the RL returns. Moreover, this furnishes more diverse samples to the training process, avoiding early collapse on a particular subarea of the search space; 
    \item \textbf{Structural Regularization}: 
    Beyond population diversity, the second effect is a form of structural regularization. 
This arises from presenting, for the same candidate solution $x$, different contexts at each generation step 
(i.e., for each neural network $g_{\theta_k}$ in our setting). 
Even when the training order matches the generation order (i.e., when $\xi(\sigma'|\sigma)$ is a Dirac centered at $\sigma$), 
the process encourages the learning of order-invariant generators. 
In this case, the IS ratios are all equal to $1$ at the start of each PPO epoch (with the KL divergence equal to $0$). 
Nevertheless, since each individual processes dimensions in a different order, 
the generators are encouraged to structure their  weights so as to handle arbitrary subsets 
of variables of any size, ultimately leading to a residual summation structure (see discussion on that point below). 
However, simply maintaining the same order for training as the one used for generating 
the training sample is usually not sufficient to efficiently prepare the generator for order-invariance, 
since a constant order is applied to each training sample across all iterations of the epoch. 
The use of a different order for each sample at each iteration of the same epoch 
(i.e., $\xi(\cdot|\sigma)$ is a uniform distribution in our experiments) 
provides two benefits. 
First, it rewards the network for making the same decision under varying contexts, 
thus facilitating the identification of inter-variable dependencies. 
Second, it steers the network toward producing, for the same decision, 
distributions similar to the one used for sampling despite changes in context, 
through the KL regularizer (which is nonzero even at the first iteration in this setting). 
All of this benefits sample efficiency, while also promoting generation order invariance 
and stability through inter-order generalization.
\end{itemize}

\paragraph{About Residual Structuration}

In order to further understand the effect of training order permutations on the structuring of a neural network, 
consider a simple problem of distribution approximation via maximum likelihood estimation (MLE): $\arg\max_\theta \mathbb{E}_p[\log p_\theta(x)]$. 
Let $x$ be a binary sequence of size $n$, and let $p_\theta(x)$ be parametrized differently 
(with parameter $\theta_i$) for each dimension of $x$, as in the setting of this paper. 
We specifically focus on the network corresponding to the last dimension of $x$, i.e., $p_{\theta_n}$. 

When optimizing the joint distribution in the original order of the sequence (from dimension $1$ to $n$), 
$p_{\theta_n}$ is always conditioned on all preceding variables, 
as it predicts the last variable based on the inputs $x_1$ to $x_{n-1}$.  
Given $\lambda$ samples from $p$ to optimize it via MLE, the gradient updates of $p_{\theta_n}$ 
are computed as an average over $\lambda$ gradients of the fully informed conditional probability 
$p_{\theta_n}(x_n \mid x_{<n})$, while some input variables may consist only of noise with respect to the variable being decoded. 
The optimization process must cope with all these inputs in order to eventually identify true dependencies, 
despite the presence of potentially significant noise in the input.

Now, let us consider training order permutations $\sigma$, which effectively mask 
every variable $x_i$ whose rank in $\sigma$ is greater than the rank of $x_n$ (i.e., we set to zero each variable $x_i$ such that $\mathrm{rank}_\sigma(i) > \mathrm{rank}_\sigma(n)$ 
in the input of $p_{\theta_i}$). The MLE is now given for the variable $x_n$ as: 
$$L=\mathbb{E}_p\mathbb{E}_\sigma[\log p_{\theta_n}(x_n|\sigma(x)_{<n})],$$
which, if the distribution of $\sigma$ is uniform, is equivalent to  considering: 
\begin{multline}
L=
\mathbb{E}_p\left[ \frac{(n-1)!}{n!}\log p_{\theta_n}(x_n|\varnothing) + \frac{(n-2)!}{n!}\sum_{i \in [[1,n-1]]}\log p_{\theta_n}(x_n|\{x_i\}) \right.\nonumber \\ \left.
+  \frac{2(n-3)!}{n!}\sum_{i \in [[1,n-1]]}\sum_{j \in [[1,n-1]], j \neq i}\log p_{\theta_n}(x_n|\{x_i,x_j\}) + \ldots + \frac{(n-1)!}{n!}  \log p_{\theta \_n}(x_n|\{x_i\}_{i=1}^{n-1}) \right]. 
\end{multline}
or more compactly: 
$$L=\mathbb{E}_p \sum_{k=1}^{n} \left[ w_k^n \sum_{I_k  \in \binom{\{1, \ldots , n-1\}}{k-1} } \log p_{\theta_n}(x_n|\{x_i\}_{i \in I_k}) \right], $$ 
with $w_k^n=\frac{(k-1)! (n-k)!}{n!}$ the weight of a component depending on the size of its condition (i.e., number of available dimensions for decoding $x_n$), 
which in turn can be rewritten as: 
$$L=\mathbb{E}_{p(x_n)} \sum_{k=1}^{n} \sum_{I_k  \in \binom{\{1, \ldots , n-1\}}{k-1} }  \left[ w_k^n \mathbb{E}_{p(\{x_i\}_{i \in I_k}|x_n)} \log p_{\theta_n}(x_n|\{x_i\}_{i \in I_k}) \right] $$



From this expansion, we can note a decrease of weights associated with each component of the training problem until $k = n/2$: For any $k < n/2$, $w_{k+1}^n<w_{k}^n$.   
%
This acts on the relative learning speed of the corresponding  components, simple dependencies are easier to extract. 
During optimization, the network thus first learns to encode the marginal probability 
$p_{\theta_n}(x_n \mid \varnothing)$ for $x_n$, then incrementally incorporates potential interactions 
with single variables through $p_{\theta_n}(x_n \mid \{x_i\})$, then with pairs of variables, and so on. As a result, the network naturally develops a form of residual structuring, 
where outputs are composed by aggregating contributions from different subsets of inputs. 

This hierarchical learning process enables the network to more efficiently identify 
the parent variables that are relevant to the joint distribution, 
while simultaneously recognizing variables that are unrelated and contribute only noise 
to $p_{\theta_n}(x_n \mid \sigma(x)_{<n})$. 
As a result, the network becomes both more robust and sample-efficient, 
effectively filtering out irrelevant inputs while capturing the essential dependencies.

\paragraph{Order Permutations vs Input Dropout}

We note that an alternative to permutations is input dropout, whose principle is to randomly mask 
any feature from the input during training. Similarly to permutation orders, input dropout can be defined 
as masks that set certain input variables to $0$ (or to a null vector in the categorical setting). 
Here, we consider a mask $m \in \Omega^m$ as a binary $n \times n$ matrix that removes the entry 
in dimension $j$ for the decision of dimension $i$ if $m_{i,j} = 1$. 
We denote by $m(x)_k$ the result of applying the dropout mask $m$ to $x$, 
using the $k$-th row of the matrix. 

As with permutations, we consider a distribution $\xi^m(.)$ for dropout at generation time, 
and a distribution $\xi^m(\cdot \mid m)$ for dropout at training time. 
Given this, our objective in \eqref{eq:lambda_exp_bis} can be naturally extended as:

\begin{multline}
\label{eq:lambda_dropout}
L^t_\lambda(\theta)=\mathbb{E}_{\sigma, m} \mathbb{E}_{x\sim\pi_{\theta^t(.|\sigma, m)}}\Big[ \mathbb{E}_{\substack{\sigma' \sim \xi(\sigma'|\sigma^i)\\
m' \sim \xi^m(m'|m)}} \sum_{k=1}^n   \left[\frac{\pi_{\theta}(x_k| \sigma'(m'(x)_k)_{<k})}{\pi_{\theta^t}(x_k| \sigma(m(x)_k)_{<k})}  \;\mathbb{E}_{\Gamma_\lambda^t\setminus\{x\}}\big[A_{\Gamma_\lambda^t}(x)\big] \right. \\ \left.
- \beta  D_{\mathrm{KL}}\left( \pi_{\theta^t}(\cdot | \sigma(m(x)_k)_{<k}) \,\|\, \pi_\theta(\cdot |  \sigma'(m(x)_k)_{<k}) \right)   \right], 
\end{multline}

As with permutations, we can consider different distributions for the dropout mask. 
In this work, we mainly focus on independent Bernoulli distributions for each entry of the mask matrix, 
controlled by a hyperparameter $p$. We note in \eqref{eq:lambda_dropout} that the dropout mask is applied prior to the causal mask 
arising from the variable ordering, which allows the combination of both techniques. 
For the training distribution $\pi_\theta$, this causal mask can be deactivated 
by simply implementing $\sigma'$ as a table that assigns a negative rank to each dimension.

For any configuration, we can compute the probability $P_{mask}(i,j)$ that a given dimension $j$ from the input is  masked when decoding  variable $i$. Depending on the setting, we have: 
\begin{itemize}
\item With the input dropout $m$ only (using Bernoulli parameter $p$): $P^{m_p}_{mask}(i,j)=p$
\item With the causal ordering mask $\sigma$ only: $P^{\sigma}_{mask}(i,j)=1-P(\mathrm{rank}_\sigma(j)<\mathrm{rank}_\sigma(i))=1-\sum_{r=1}^n P(\mathrm{rank}_\sigma(i)=r)P(\mathrm{rank}_\sigma(j)<\mathrm{rank}_\sigma(i)|\mathrm{rank}_\sigma(i)=r) = 1- \frac{1}{n} \sum_{r=1}^n \frac{r-1}{n-1}= 1- \frac{1}{n(n-1)} \sum_{r=0}^{n-1} r=1- \frac{n(n-1) / 2}{n(n-1)}=0.5$
\item With the input dropout $m$ and causal ordering mask combined:
$P^{m_p, \sigma}_{mask}(i,j)=P^{m_p}_{mask}(i,j)+(1-P^{m_p}_{mask}(i,j)) \times P^{\sigma}_{mask}(i,j)=p+(1-p)0.5=0.5+0.5p$
\end{itemize}

Thus, it is possible to set a dropout probability $p$ such that the masking probability of an input 
for decoding any given dimension is similar to the one induced by random permutations of variable order. 
However, this equivalence only holds for the marginal distribution over single inputs. 
To go further, let us consider the distribution $P_{\#\text{available}}(k)$, for $k \in [[0,n]]$,  
where $k$ denotes the exact number of non-masked inputs available for decoding a given variable $i$. 
Depending on the setting, this distribution can differ significantly between permutations and dropout: 
\begin{itemize}
    \item With the input dropout $m$ only: $P^{m_p}_{\#\text{available}}(k)=P^{m_p}(\text{number of non masked dimensions before } n) =  \binom{n-1}{k}\,p^{n-k-1}\,(1-p)^{k} $

    \item With the causal ordering mask $\sigma$ only: $P^{\sigma}_{\#\text{available}}(k)=P(\mathrm{rank}_\sigma(i)=k+1)$
    \item With the input dropout $m$ and causal ordering mask combined: $P^{m_p,\sigma}_{\#\text{available}}(k)=\sum_{r=k+1}^n P(\mathrm{rank}_\sigma(i)=r) P^{m_p}(\text{number of non masked dimensions before } r ) = \frac{1}{n} \sum_{i=k+1}^{\,n} \binom{i-1}{k}\,p^{i-k-1}\,(1-p)^{k} = \frac{1}{n} \sum_{i=0}^{\,n-k-1} \binom{i+k}{k}\,p^{i}\,(1-p)^{k} = \frac{1}{n} \frac{1 - I_{p}\!\left(n-k,\,k+1\right)}{1-p}$, with  $I_{p}(a,b) \;=\; \frac{B(p;\,a,b)}{B(a,b)}$ the Regularized incomplete Beta function, $B(p;\,a,b)$ the Incomplete Beta function and $B(a,b)$ the Beta function. 
\end{itemize}

To better illustrate the differences between these settings,  Figure~\ref{fig:dropout_distribs} shows the distribution of available (non-masked) input variables during neural inference. The x-axis represents \(k\), the number of available inputs, and the y-axis shows the corresponding probability. In both settings - input dropout only (left) and input dropout combined with order permutations under a causal mask (right) - the dropout probability \(p\) has a strong impact. Without a causal mask, the distribution is binomial, with mode at \(k = \lfloor (n-1)(1-p) \rfloor\). Each variable is independently available with probability \(1-p\), but this results in a small chance of observing either very small or very large contexts, which is difficult to control efficiently. Ideally, one would prefer a more evenly spread distribution, providing each variable in diverse contexts. In contrast, when combining input dropout with order permutations under a causal mask (right panel), the distribution becomes more evenly spread across \(k\). This increases the variety of available contexts for each variable during inference, making it easier to learn robust dependencies. Unlike the purely binomial case, each variable can appear in both small and large contexts (for small $p$ values), which improves controllability and ensures that the model sees diverse conditioning patterns. Notably, the case \(p=0\) yields the most uniform distribution of group sizes, enabling more effective structural regularization as discussed above.

\begin{figure}[!h]
\centering
\begin{subfigure}{.49\textwidth}
  \centering
  \includegraphics[width=1\linewidth]{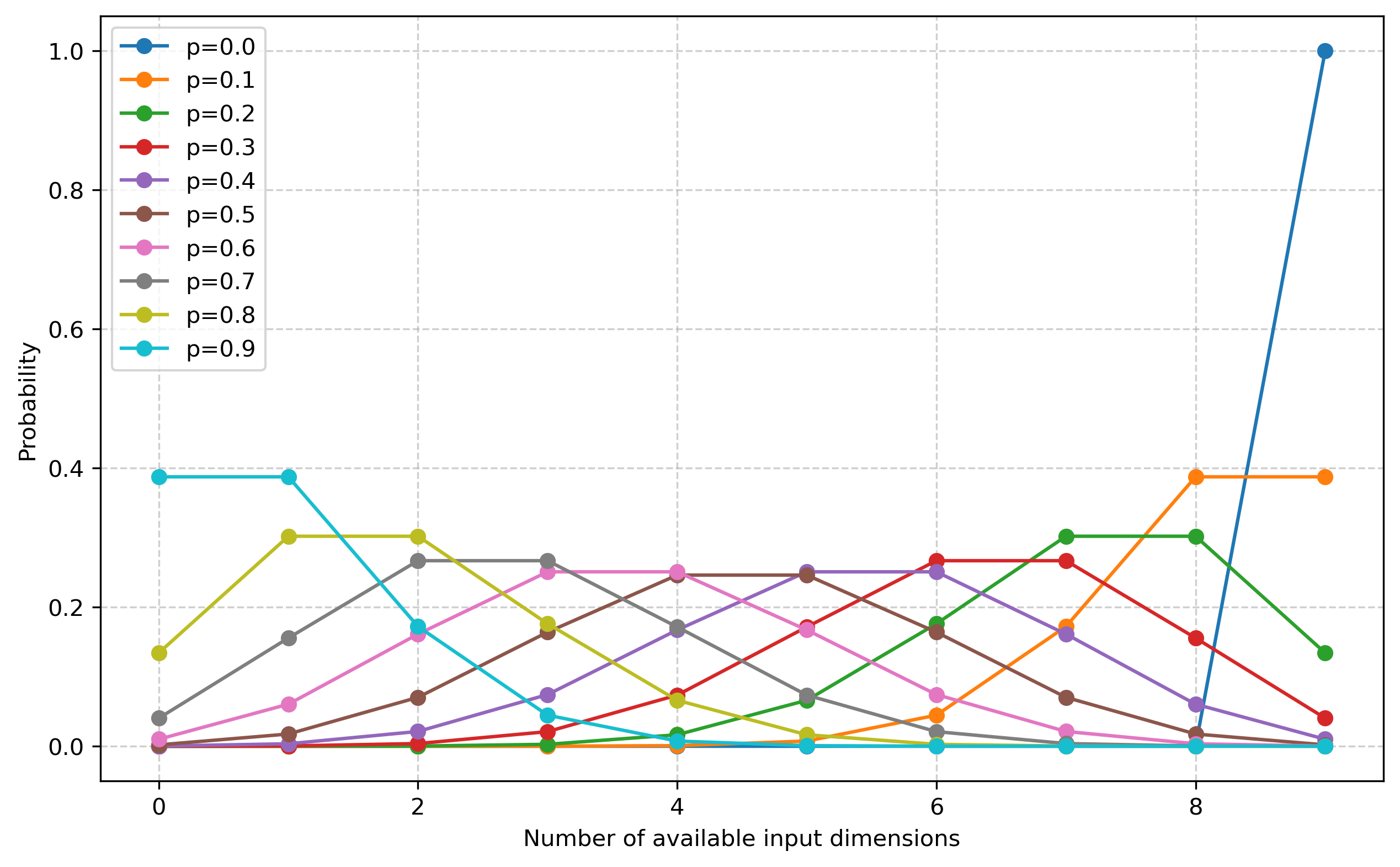}
  \caption{Input Dropout Only}
  \label{fig:dropout_only}
\end{subfigure}%
\begin{subfigure}{.49\textwidth}
  \centering
  \includegraphics[width=1\linewidth]{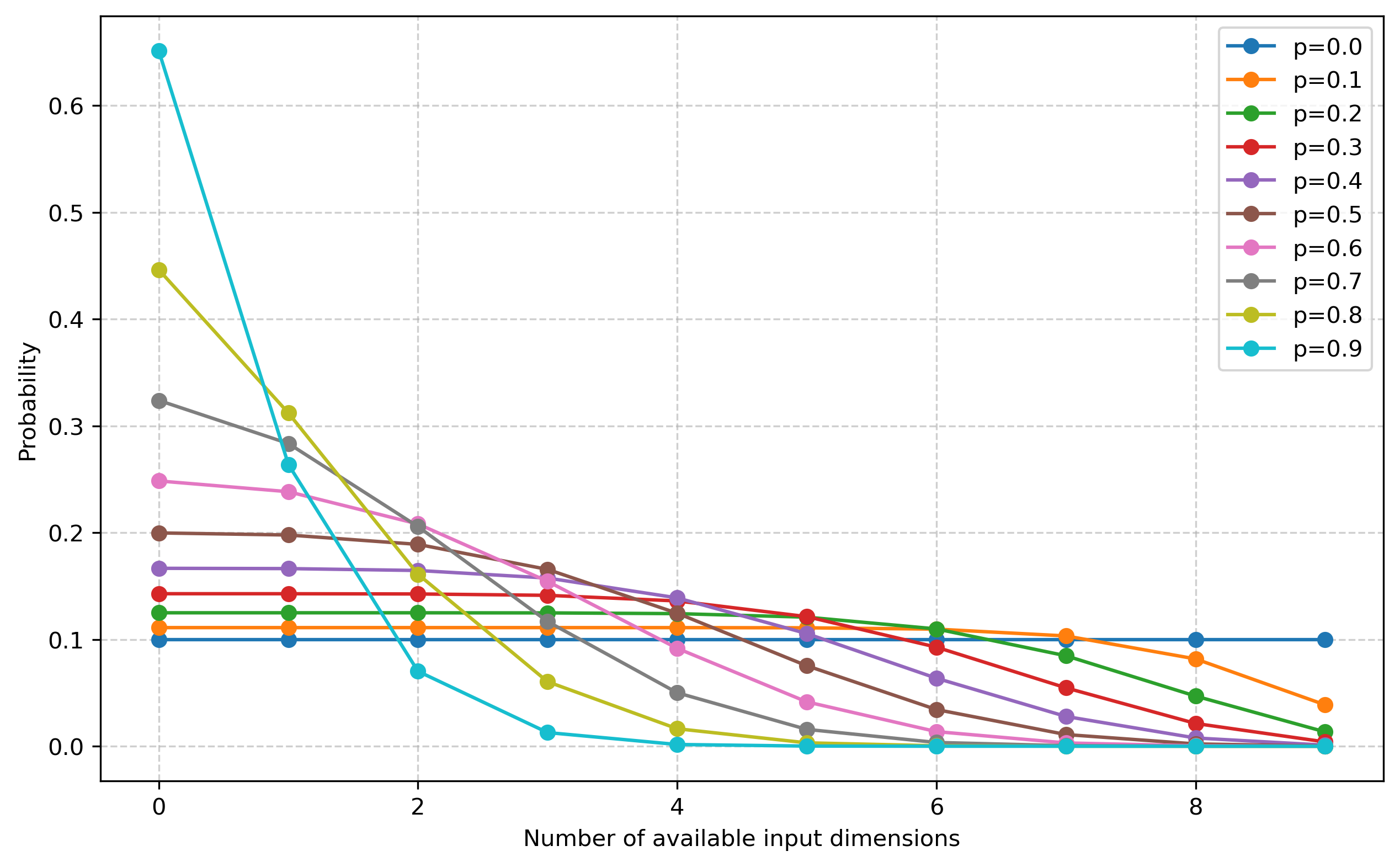}
  \caption{Input Dropout and Causal Permutations Combined}
  \label{fig:causal_dropout_combined}
\end{subfigure}
 \caption{Probability of having exactly $k$ available (non-masked) input variables during neural inference of the generation probabilities of values for any dimension. Left: input dropout without order permutations. Right: input dropout combined with order permutations. }
\label{fig:dropout_distribs}
\end{figure}







The effect of input dropout, either alongside or instead of our generation/training order permutations, is evaluated in Section \ref{appendix:additional_dropout}. 



Finally, note that using dropout alone cannot be applied for the generation of individuals, since a sampling order must be defined. One option is a predetermined fixed order, combined with a constant causal mask and dropout. This yields a distribution similar to the binomial case above, with \(k\) taken among the \(i-1\) positions for the \(i\)-th variable. However, this approach does not fully exploit structural regularization or population diversity, which would likely require position-dependent parameters. Using varying orders combined with dropout is a potential alternative, but it does not guarantee stable convergence, as input dropout induces information loss during inference. At generation time, this can be detrimental, causing catastrophic forgetting and instability even at the optimum. 

In contrast, using permutations of the generation orders without additional dropout is \textbf{information-preserving}. For any sampled generation order \(\sigma\), the joint distribution \(\pi_{\theta^t}(\cdot \mid \sigma)\) can fully exploit all dependencies among variables. Moreover, when the generators become fully order-invariant (which is further encouraged by training order permutations through KL regularization across different orderings), we have \(\pi_{\theta^t}(\cdot \mid \sigma) = \pi_{\theta^t}(\cdot \mid \sigma')\) for any pair of generation orders \((\sigma, \sigma') \in \Omega^2\), ensuring complete consistency across all orderings.


\section{On the Choice of the PPO-KL algorithm as our backbone for order-invariant RL}
\label{sec:why_ppo_kl}

As we shown in section \ref{sec:vs_dropout}, using random permutations for generation and training in our method can be viewed as a structured  dropout of the input features of individuals, which enables various benefits. However, the choice of the KL version of PPO  for this purpose is yet to be discussed. This is the focus of this section.  

In particular, we can analyze our choices in comparison to findings from  \citep{hausknecht2022consistent}, which  also discussed the role of dropout in reinforcement learning and showed  that naïvely combining the standard REINFORCE  updates with dropout leads to severe instability. Specifically, when the dropout masks differ between trajectory generation and policy updates, the procedure is no longer on-policy, and learning quickly collapses. They investigate PPO in this context, but only the clipped variant. Interestingly, one can observe that the PPO ratio deviates from one even at the first update step (we are no longer on-policy when sampling and training with different masks on layer's inputs). In the clipped version of PPO,  this results in most gradients being clipped and therefore prevents meaningful updates. This behavior undermines the intent of clipping—designed to correct occasional overshooting—since here the mechanism blocks learning altogether from the start. To address these issues, the authors propose two strategies for making REINFORCE consistent under dropout: (1) marginalizing over dropout masks, and (2) enforcing identical dropout masks during generation and training (akin to our approach of sampling a permutation during generation and applying the same permutation during training, with $\sigma'$ drawn from a Dirac distribution). The first strategy is theoretically appealing but practically prohibitive, as even with Monte Carlo approximations using dozens or hundreds of samples, the variance of the estimator overwhelms the learning signal. The second strategy, by contrast, is shown to be more effective and stable.

In our work, we revisit this question from a different angle. While Hausknecht et al. argue that consistency requires using the same dropout mask between rollout and update, we posit that sampling different conditioning patterns at update time can in fact be beneficial. By exposing the policy to multiple conditioning variations from the same rollout, the training process gains additional signal, thereby improving sample efficiency. To make this feasible, we rely on PPO rather than plain REINFORCE. PPO naturally tolerates updates from slightly different policies, which aligns well with our setting where updates need not be fully on-policy. Moreover, we adopt the KL-regularized version of PPO, which avoids the blocking issues observed with the clipped variant: instead of discarding gradients when ratios diverge, the KL penalty smoothly regularizes the policy towards the sampling distribution. This design choice is key to enabling effective training under random permutations.

Importantly, Hausknecht et al. developed their Dropout-Marginalized Gradient in the context of REINFORCE, which forces them to approximate, via Monte Carlo sampling, the exact dropout distribution used during rollout. This requires likelihood normalization over many sampled masks, and thus demands a prohibitively large number of samples to achieve a low-variance estimator. By contrast, in our KL-PPO framework we only need to compute expectations of gradients under the current mask distribution, without approximating the rollout distribution itself. This allows us to train efficiently with as little as a single mask sample per example and iteration, a much lighter procedure in practice.

\section{Connection with Natural Gradient and Information-Geometric Optimization Algorithm \label{appendix:link_IGO}}


The Information-Geometric Optimization (IGO) algorithm 
  \citep{ollivier2017information} 
is a natural gradient method that seeks to maximize a quantile-based rewriting of the objective function $f$.




For our probabilistic model $\pi_\theta$ with $\theta \in \Theta$, and given a permuation $\sigma \in \Omega$, the IGO flow that defines the trajectory in space  $\Theta$ to maximize the objective $\mathbb{E}_{x \sim \pi_\theta(x|\sigma)}[A_{\Gamma_\lambda^t}(x)]$ is given by (see Definition 5 in \citep{ollivier2017information})

\begin{equation}
\theta^{t+\delta_t} = \theta^t + \delta_t I^{-1}(\theta^{t}) \sum_{i=1}^{\lambda} A_{\Gamma_\lambda^t}(x^i) \frac{\nabla \text{ln} \pi_{\theta}(x^i|\sigma)}{\nabla \theta} \Bigr\rvert_{\theta = \theta^t},
\label{eq:igo_flow}
\end{equation}

\noindent with $x^i$ for $i = 1, \dots, \lambda$ generated by the model $\pi_{\theta^t}$ at time-step $t$ and  $I^{-1}(\theta^{t})$ the inverse of the Fisher matrix of $\pi_{\theta^t}$.

When $\delta t$ is close to 0, and using Theorem 10 in \citep{ollivier2017information},
\eqref{eq:igo_flow} can be rewritten as

\begin{equation}
\theta^{t+\delta_t} = \underset{\theta \in \Theta}{\text{argmax}} \left( (1 - \delta_t \sum_{i=1}^{\lambda} A_{\Gamma_\lambda^t}(x^i)) \int \ln \pi_{\theta}(x|\sigma) \pi_{\theta^t}(dx) + \delta t \sum_{i=1}^{\lambda}  A_{\Gamma_\lambda^t}(x^i) \ln \pi_{\theta}(x^i|\sigma) \right).
\end{equation}

When using this framework with our probabilistic model  $\pi_{\theta}(x|\sigma) = \prod_{k=1}^n \pi_{\theta}(x_{\sigma_k}|x_{\sigma < k}, \sigma)$ it gives

\begin{multline}
\theta^{t+\delta_t} = \underset{\theta \in \Theta}{\text{argmax}} [ (1 - \delta_t \sum_{i=1}^{\lambda} A_{\Gamma_\lambda^t}(x^i)) \int \sum_{k=1}^n \ln \pi_{\theta}(x_{\sigma_k}|x_{\sigma < k}, \sigma) \pi_{\theta^t}(dx) \\ + \delta t \sum_{i=1}^{\lambda} \sum_{k=1}^n A_{\Gamma_\lambda^t}(x^i) \ln \pi_{\theta}(x^i_{\sigma_k}|x^i_{\sigma < k}, \sigma) ]
\end{multline}

As the maximization is on $\theta$ we can substract the term $ (1 - \delta_t \sum_{i=1}^{\lambda} A_{\Gamma_\lambda^t}(x^i)) \int \sum_{j=1}^n \ln \pi_{\theta^t}(x_{\sigma_k}|x_{\sigma < k}, \sigma) \pi_{\theta^t}(dx)$ that does not depend on $\theta$. Therefore, we have

\begin{multline}
\theta^{t+\delta_t} = \underset{\theta}{\text{argmax}} [  \delta t \sum_{i=1}^{\lambda} \sum_{k=1}^n A_{\Gamma_\lambda^t}(x^i) \ln \pi_{\theta}(x^i_{\sigma_k}|x^i_{\sigma < k}, \sigma) \\ + (\delta_t \sum_{i=1}^{\lambda} A_{\Gamma_\lambda^t}(x^i) - 1)  \sum_{k=1}^n \int \ln \frac{\pi_{\theta^t}(x_{\sigma_k}|x_{\sigma < k}, \sigma)}{\pi_{\theta}(x_{\sigma_k}|x_{\sigma < k}, \sigma)} \pi_{\theta^t}(dx)].\label{eq:igo_objectiv}
\end{multline}

Now using the $\lambda$ samples to approximate the integral on domain $\mathcal{X}_{\sigma< k}$,  and using the fact that all conditional Markov kernels are independent  we have  for $k=1, \dots, n$ 
\begin{equation}
\int \ln \frac{\pi_{\theta^t}(x_{\sigma_k}|x_{\sigma < k}, \sigma)}{\pi_{\theta}(x_{\sigma_k}|x_{\sigma < k}, \sigma)} \pi_{\theta^t}(dx) \approx  \frac{1}{\lambda} \sum_{i=1}^{\lambda}\int \ln \frac{\pi_{\theta^t}(x_{\sigma_k}|x^i_{\sigma < k}, \sigma)}{\pi_{\theta}(x_{\sigma_k}|x^i_{\sigma < k}, \sigma)} \pi_{\theta^t}(dx_{\sigma_k}).
\label{eq:kl}
\end{equation}

Thus, we have for $k=1, \dots, n$

\begin{equation}
\int \ln\frac{ \pi_{\theta^t}(x_{\sigma_k}|x_{\sigma < k}, \sigma)}{\pi_{\theta}(x_{\sigma_k}|x_{\sigma < k}, \sigma)} \pi_{\theta^t}(dx) \approx \frac{1}{\lambda} \sum_{i=1}^{\lambda} D_{\mathrm{KL}}\left( \pi_{\theta^t}(\cdot | x^i_{\sigma < k}, \sigma) \,\|\, \pi_{\theta}(\cdot |  x^i_{\sigma < k}, \sigma) \right)
\label{eq:kl1}
\end{equation}

Using \eqref{eq:kl} and defining $\beta = \frac{1}{\lambda \delta t} - \frac{\sum_{i=1}^{\lambda} A_{\Gamma_\lambda^t}(x^i)}{\lambda}$,  the maximization objective of \eqref{eq:igo_objectiv} for the update of the model at each generation becomes  

\begin{equation}
L'(\theta) = \frac{1}{\lambda} \sum_{i=1}^{\lambda} \sum_{k=1}^{n} \left[   \ln \pi_{\theta}(x^i_{\sigma_k}|x^i_{\sigma < k}, \sigma) A_{\Gamma_\lambda^t}(x^i) - \beta D_{\mathrm{KL}}\left( \pi_{\theta^t}(\cdot | x^i_{\sigma < k}, \sigma) \,\|\, \pi_{\theta}(\cdot |  x^i_{\sigma < k}, \sigma) \right) \right].
\label{eq:objective_IGO}
\end{equation}

The update phase of the algorithm can then be interpreted as the maximization of a weighted log-likelihood over the individuals in the current generation, regularized by a KL divergence term. This regularization penalizes excessive reductions in the entropy of the sampling distribution, thereby maintaining a degree of diversity in the population. By controlling the rate of convergence, this mechanism prevents premature collapse of the distribution onto a single high-performing individual, which could otherwise lead to early stagnation in a local optimum.

It corresponds to the surrogate objective of  our GRPO-based framework given by \ref{eq:grpo} when replacing each term $\ln \pi_{\theta}(x^i_{\sigma_k}|x^i_{\sigma < k}, \sigma)$ by the ratio importance sampling $\frac{\pi_{\theta}(x^i_{\sigma_k} | x^i_{\sigma_{<k}},\sigma)}{\pi_{\theta^t}(x^i_{\sigma_k} | x^i_{\sigma_{<k}},\sigma)}$. We empirically observed that maximizing  the ratio of importance sampling instead of the log probability gives better results in our context, therefore in the following we stay with the formulation of the objective given by 
\eqref{eq:grpo} instead of 
\eqref{eq:objective_IGO}.

\section{Algorithm pseudo-code \label{appendix:algo}}

In this appendix, we detail the pseudo-code of the multivariate RL EDA with Algorithm \ref{alg:sigma_ppop_eda}, which includes the four multivariate RL EDA variants presented in Section \ref{subsec:order}: \texttt{$(\delta,\delta)$-RL-EDA},   \texttt{$(\delta,\sigma)$-RL-EDA},  \texttt{$(\sigma,\delta)$-RL-EDA} and   
 \texttt{$(\sigma,\sigma)$-RL-EDA}. A toy example of generation of solutions and update of the model  with this algorithm is displayed with a diagram in Appendix \ref{app:scheme}.

Until the termination criterion is met, this EDA perform the following steps at each generation $t$:
\begin{enumerate}
\item Draw a population $\Gamma_t=\{(x^i, \sigma_G^i)\}_{i=1}^\lambda$ from the joint distribution $\pi_{\theta^t}(x|\sigma_G)\xi(\sigma_G)$.
\item Order the individuals according to their fitness, and compute advantage $A_{\Gamma_\lambda^t}(x^i)$ for each individual.
\item Update the probabilistic model by maximizing during $E$ epochs the objective 
\begin{multline}
\hat{L}_\lambda(\theta) =  \frac{1}{\lambda} \sum_{(x^i,\sigma_G^i) \in \Gamma_t} \mathbb{E}_{\sigma_T \sim \xi(\sigma_T|\sigma_G^i)} \sum_{k=1}^n   [\frac{\pi_{\theta}(x^i_k| \sigma_T(x^i)_{<k})}{\pi_{\theta^t}(x^i_k| \sigma_G^i(x^i)_{<k})} A_{\Gamma_\lambda^t}(x^i)  - \beta D_{\mathrm{KL}}\left( \pi_{\theta^t}(\cdot | \sigma_G^i(x^i)_{<k}) \,\|\, \pi_\theta(\cdot |  \sigma_T(x^i)_{<k}) \right) ].
\end{multline}

In practice, at each epoch in order to reduce computation time, the expectancy $\mathbb{E}_{\sigma_T \sim \xi(\sigma_T|\sigma_G^i)}[.]$ is replaced by 
an evaluation based on a single sample.
\end{enumerate}

\begin{algorithm}[!h]
 \caption{\texttt{$(\sigma,\sigma)$-RL-EDA} with parameters $\lambda \in \mathbb{N}^*$, $\beta \in \mathbb{R}^+$, utility function $U$, number of epochs $E$ and functional mechanism $g$.}
 \label{alg:sigma_ppop_eda}
  \begin{algorithmic}
  \STATE {\bfseries Input:} an instance $(\mathcal{X},f)$, with $\mathcal{X} = \{-1,1\}^n$, $f : \mathcal{X} \rightarrow \mathbb{R}$ and a number of iterations $T$.
  \STATE Randomly initialized the parameters $\theta^0=(\theta^0_1, \dots, \theta^0_n)$.
  \STATE $x^* \leftarrow  \emptyset$ and $f(x^*) \leftarrow -\infty$.
  \FOR{$t = 0, 1, 2, \dots, T-1$}
    \FOR{$i = 1, 2, \dots, \lambda$}
    \STATE $x^i \leftarrow (0, \dots, 0)$.
    \STATE Draw a permutation $\sigma_G^{i} \sim \xi(\sigma_G)$.
    \STATE Generate solution $x^{i}$ in the order of generation $\sigma_G^{i}$:  
    \FOR{$k = 1, 2, \dots, n$} 
    \STATE $x_{\sigma_G^{i}(k)}^{i} \sim \text{Bernoulli}(\text{sigmoid}(g_{\theta_{\sigma_G^{i}(k)}}(x_{\sigma_G^{i}< k}))$
   \ENDFOR
   \ENDFOR
    \FOR{$i = 1, 2, \dots, \lambda$}
    \STATE Compute $f(x^i)$.
    \IF{ $f(x^i) > f(x^*)$}
    \STATE $x^* \leftarrow x^i$
    \ENDIF
    \ENDFOR
   \FOR{$i = 1, 2, \dots, \lambda$}
    \STATE Compute  $A_{\Gamma_\lambda^t}(x^i) = U\left(\frac{\text{rk}(x^{i}) }{\lambda -1} \right)$.
    \ENDFOR
    \STATE $\theta \leftarrow \theta^t$
   \FOR{$e = 1, 2, \dots, E$}
   \FOR{$i = 1, 2, \dots, \lambda$}
    \STATE $\sigma_T^{(i)} \sim \xi(\sigma_T|\sigma_G)$.
    \ENDFOR
   \STATE Compute  
  \begin{equation}
      \hat{L}_\lambda(\theta) =  \frac{1}{\lambda} \sum_{(x^i,\sigma_G^i) \in \Gamma_t}  \sum_{k=1}^n   [\frac{\pi_{\theta}(x^i_k| \sigma_T^{(i)}(x^i)_{<k})}{\pi_{\theta^t}(x^i_k| \sigma_G^i(x^i)_{<k})} A_{\Gamma_\lambda^t}(x^i) - \beta D_{\mathrm{KL}}\left( \pi_{\theta^t}(\cdot | \sigma_G^i(x^i)_{<k}) \,\|\, \pi_\theta(\cdot |  \sigma_T^{(i)}(x^i)_{<k}) \right)].
      \end{equation}
    \STATE Compute $\nabla_{\theta} \hat{L}_\lambda(\theta)$ and update $\theta$ with gradient ascent. 
    \ENDFOR
    \STATE $\theta^{t+1} \leftarrow \theta$
   \ENDFOR
  \STATE {\bfseries Output:}: the best solution found $x^*$
  \end{algorithmic}
\end{algorithm}

\newpage 
\section{Toy example of generation of solutions and update of the model with the RL-EDA method}
\label{app:scheme}

In Figure \ref{fig:schema} is displayed  a toy example of generation of solutions with the generative policy of RL-EDA (on the left). Training with the permutation noises is displayed on the right. 

\begin{figure}[!h]
\centering
\includegraphics[width=0.8\linewidth]{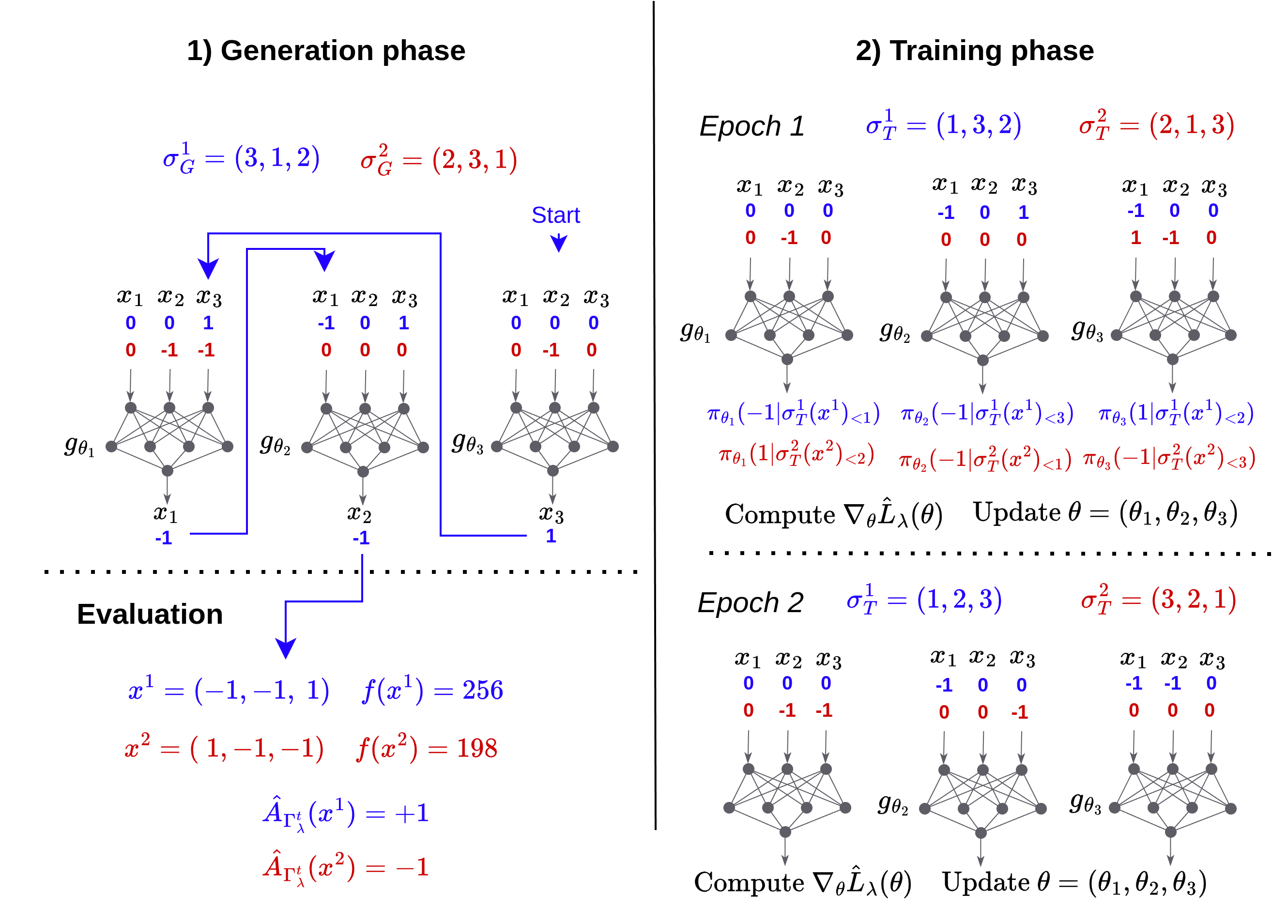}
 \caption{\textbf{Left.} Example of generation at time $t$ of a population $\Gamma^t_{\lambda}$ with $\lambda=2$ individuals  for a maximization problem with $N=3$. The order of generation of the first individual is indicated with blue arrows. When building it with the MDP and given order $\sigma_G^1=(3,1,2)$, we start with $x^1_{\sigma_G^1<1} = (0,0,0)$ given as input to the neural network $g_{\theta_3}$ that generates $x_3=1$, then $x^1_{\sigma_G^1<2} = (0,0,1)$ is given as input to $g_{\theta_1}$ that generates $x_1=-1$, and lastly $x^1_{\sigma_G^1<3} = (-1,0,1)$ is given as input to $g_{\theta_2}$ that generates the value $x_2$ of the last variable and we get the complete solution $(-1,-1,1)$. When all the individuals of the population are sampled, we pass to the evaluation phase where advantages are computed such that $ A_{\Gamma_\lambda^t}(x^i_{best}) = +1$ and $ A_{\Gamma_\lambda^t}(x^i_{worse}) = -1$ 
(see  \eqref{eq:rank_based_score}). 
\textbf{Right.} Training phase during $E$ epochs with the $\lambda=2$ solutions sampled at this generation. At each epoch, new $\sigma_G$ orders are sampled for each individual. Conditional probabilities of actions are then computed according to the corresponding causal masks. 
It allows to compute $\hat{L}_{\lambda}(\theta)$ (Eq. \eqref{eq:grpo_invariant} and to update $\theta=(\theta_1, \theta_2, \theta_3)$ by gradient ascent.}
\label{fig:schema}
\end{figure}

\section{Multivariate EDA with learned order  \label{appendix:learned_graph}}


In this appendix, we derive a version of the multivariate EDA learned with PPO, called \texttt{Learned-$\sigma$-RL-EDA} where we model the distribution of order with the  Plackett-Luce (PL) distribution  \citep{plackett1975analysis}
 parameterized by 
 the vector of scores $w = (w_1, \dots, w_n)$ (this distribution is denoted $\xi^{PL}_w(\sigma)$ hereafter) and we use the reparameterization trick proposed by \citep{grover2019stochastic} to learn $w$ by gradient descent.
  

\subsection{Plackett-Luce distribution}

For each $\sigma \in \Omega$, and given $w\in \mathbb{R}^n$ the  PL distribution probability mass  is given by 
\begin{equation}
 \xi^{PL}_w(\sigma) = \frac{w_{\sigma(1)}}{Z}\frac{w_{\sigma(2)}}{Z - w_{\sigma(1)}}\cdots \frac{w_{\sigma(n)}}{Z - \sum_{k=1}^{n-1}w_{\sigma(k)}},
\end{equation}

\noindent with $Z = \sum_{i=1}^n w_i$ a normalization constant.

Let $sort : \mathbb{R}^n \rightarrow \Omega$ be the  operator mapping a n real-valued vector to a permutation $\sigma$ corresponding to a descending ordering the values of this vector. Let $W$ denote the matrix of absolute pairwise differences of the elements of $w$ such that $W_{ij} = |w_i - w_j |$.  As shown by \citep{grover2019stochastic}, the permutation matrix $P_{sort(w)}$ corresponding to $sort(w)$ is given by:
\begin{align}
P_{\text{sort}(w)}[i,j]=\begin{cases}
1 \text{ if }j= \text{argmax} [(n+1-2i)w - W\mathds{1}]\\
0 \text{ otherwise},
\end{cases}
\end{align}

\noindent where $\mathds{1}$ denotes the column vector of all ones.

In practice to sample from $\xi_w(\sigma)$, \citep{grover2019stochastic} propose  
a method for sampling from PL distributions with parameters $w$ by sampling for $k=1, \dots, n$ a noise $\epsilon_k \sim \text{Gumbel}(0,1)$ with zero mean and unit scale, then by computing $\tilde{w}$ is the vector of perturbed log-scores with entries such that $\tilde{w}_i =  \ln w_i + \epsilon_i$, and latsly by applying the \textrm{sort} operator to the perturbed log-scores $\tilde{w}_i$. The resulting order gives a permutation $\sigma$ sampled from $\xi^{PL}_w(\sigma)$. Indeed \citep{grover2019stochastic} show that $\mathbb{P}(\tilde{w}_{\sigma(1} \geq \dots \geq \tilde{w}_{\sigma(n}) = \xi_w(\sigma)$ (see Proposition 5).

For a vector $\tilde{w}$ of perturbated log-score, the sampled permutation matrices is $P_{sort(\tilde{w})}$ corresponding to permutation $\tilde{\sigma}$, such that $\left[P_{sort(\tilde{w})}\right]_{ij}= 1$ if $i=\tilde{\sigma} (j)$ and 0 otherwise. 
This permutation matrix allows to compute the adjacency matrix $\tilde{M} = P_{sort(\tilde{w})}^\top B P_{sort(\tilde{w})}$ of the sampling directed acyclic graph (DAG), with $B$ be the strictly upper triangular binary matrix of size $n \times n$, whose entries are defined as $b_{i,j} = 1$ if $j > i$, and $b_{i,j} = 0$ otherwise.  Each column vector $m_k$ at position $k$ of $\tilde{M}$ corresponds to the binary causal mask used at step $k$ to mask the entries of $g$ (see Section \ref{sec:archi}).

\subsection{Plackett-Luce reparameterization trick}

Computing the permutation matrix $P_{sort(\tilde{w})}$ from $w$ is a non differentiable operation due to the use of the \textrm{argmax} function. Therefore, \citep{grover2019stochastic} propose to replace $P_{sort(\tilde{w})}$ by the continuous relaxation $\widehat{P}_{sort(\tilde{w})}$ using the \textrm{softmax} function instead of the \textrm{argmax} function when gradient computation are required. The $i$-th row of $\widehat{P}_{sort(w)}$ is given by 
\begin{equation}
\widehat{P}_{sort(w)} = \text{softmax}[(n+1 - 2i)w -W\mathds{1}/\tau],
\end{equation}

\noindent with $\tau > 0 $ a temperature parameter (set at the value of 1 in the following).

\subsection{Learned-$\sigma$-EDA algorithm}

During the sampling phase of \texttt{Learned-$\sigma$-RL-EDA}, to generate each individual of the population,  
an order $\sigma^i$ is first sampled from $\xi^{PL}_w(\sigma)$, then $x^i$ is sampled from  $\pi_{\theta^t}(.|\sigma^i)$.

During the update phase of the EDA we maximize following the GRPO objective  with respect to $(\theta,w)$:

\begin{multline}
\hat{L}_\lambda(\theta,w) =  \frac{1}{\lambda} \sum_{(x^i,\sigma^i) \in \Gamma_t} \mathbb{E}_{\sigma' \sim \xi^{PL}_w(\sigma)} \sum_{k=1}^n   [\frac{\pi_{\theta}(x^i_k| \sigma'(x^i)_{<k})}{\pi_{\theta^t}(x^i_k| \sigma^i(x^i)_{<k})} A_{\Gamma_\lambda^t}(x^i)  - \beta D_{\mathrm{KL}}\left( \pi_{\theta^t}(\cdot | \sigma^i(x^i)_{<k}) \,\|\, \pi_\theta(\cdot |  \sigma'(x^i)_{<k}) \right)  ]. \label{eq:grpo_pl}
\end{multline}

This maximization is done by  first order gradient descent using $\nabla_{\theta} L(\theta,w)$ and $\nabla_{w} L(\theta,w)$ (computed with the reparameterization trick).


 



\section{Synthetic Data Set Generation and experimental protocol \label{appendix:datasets_synthetic}}

We  examine the following NP-hard problems in this work. For each of these problems, we generated instances of size $n \in \{ 64, 128, 256\}$, and for each size, we considered different types of instances.

\textbf{The Quadratic unconstrained binary optimization problem (QUBO)} aims to find a pseudo-Boolean vector $x = (x_1, \dots, x_n)$ of size $n$ that maximizes the function $f: \{-1,1\}^n \rightarrow  \mathbb{R}$ given by $f(x) = x^\top Q x$, where $Q$ is a symmetric real matrix of size $n \times n$. We generate QUBO instances using the ${\rm PUBO}_i$ generator \citep{TariVO22}, which enables the creation of QUBO problems with controlled structural properties. The parameters of the ${\rm PUBO}_i$ generator are set to produce six different types $K$ of instances  by tuning both the density of the QUBO matrix $Q$ and the relative importance of binary variables, thereby influencing the degree of non-uniformity in $Q$. We generate QUBO instances using the ${\rm PUBO}_i$ generator \citep{TariVO22}, which enables the creation of QUBO problems with controlled structural properties. 

Formally, the fitness function of each instance of this QUBO problem is defined as $f(x) = \sum_{i=1}^{m} f_i(x_{i_1}, x_{i_2}, x_{i_3}, x_{i_4})$, where each sub-function $f_i$ is a quadratic function randomly selected from the set $\{ \varphi_1, \ldots, \varphi_4 \}$. Each $\varphi_k$ is designed to have $2k$ symmetric local optima. In ${\rm PUBO}_i$, binary variables are divided into two importance classes: important and non-important variables. For each sub-function $f_i$, the four variables $x_{i_j}$ are selected according to an importance degree parameter $d$, where the probability of selecting an important variable is proportional to $d$. An additional importance co-appearance parameter $\alpha$ controls the correlation in the selection of important variables: higher $\alpha$ values increase the likelihood that two important variables co-occur within the same sub-function $f_i$. The number of sub-functions is given by $m = r \times \frac{n(n-1)}{2}$, where $r$ is a density coefficient controlling the proportion of non-zero entries in Q. For example, with $r=0.05$ and $r=0.2$, the density of $Q$ is approximately 16\% and 43\%, respectively, for uniform instances.

We consider three interaction configurations: 
\begin{itemize}
  \item Uniform random instances when $(d,\alpha)=(1,1)$, corresponding to no specific important variables, i.e., a fully random QUBO structure.
  \item  Instances with  $(d,\alpha)=(10,1)$, where important variables are 10 times more likely to be selected than non-important variables, but selections are independent.
  \item Instances with $(d,\alpha)=(10,1.09)$: the selection of important variables is not independent, and the selection of important variables is concentrated. 
\end{itemize}

Further details on the ${\rm PUBO}_i$ generator can be found in \citep{TariVO22}.
By combining parameters $r$, degree $d$ of importance of variables and parameter $\alpha$ of co-appearance, we obtain six different types of instance described in Table \ref{tab:parameters}.

\begin{table}[ht!]
  \caption{Parameters of ${\rm PUBO}_i$~instances.}
  \label{tab:parameters}
  \centering
  \begin{small}
  \begin{tabular}{|c|c|c|c|}
    \hline
    Type instance $K$ & $r$ & $d$ & $\alpha$\\
    \hline
   0 & 0.05 & 1 & 1 \\
   1 & 0.05 & 10 & 1 \\
   2 & 0.05 & 10 & 1.09 \\
   3 & 0.2 & 1 & 1 \\
   4 & 0.2 & 10 & 1 \\
   5 & 0.2 & 10 & 1.09 \\
    \hline
  \end{tabular}
  \end{small}
\end{table}

\textbf{The NKD model} is a natural extension of the NK model of Kauffman \citep{kauffman1989nk} to cases where variables can take more than two categorical values. This is a framework for describing fitness lanscapes whose problem size and ruggedness are both parameterizable.
The NKD function is defined as $f_{\rm NKD}:\{0,1, \dots, D-1\}^n \rightarrow [0,1[$ and takes the same form as NK functions: $f_{\rm NKD}(x) = \frac{1} {n} \sum_{i=1}^n \gamma_i(x_i,x_{l_{i1}},\dots,x_{l_{iK}})$, except that each subfunction
$\gamma_i:\{0,1, \dots, D-1\}^{K+1}\rightarrow [0,1[$ is defined over categorical variables with $D$ possible values instead of binary ones.  We construct instances with $D=2$, which corresponds to the original pseudo-boolean NK problem, but we also construct instances of a categorical problem called NK3 with $D=3$. For each variant NK or NK3 of the problem four different types of distribution of instances with $K \in \{1,2,4,8\}$ are built.  When $K=1$, the interaction graph is very sparse and the landscape is smooth; when $K=8$, the landscape becomes significantly more rugged.  

Unless otherwise specified, we treat these problems as black-box problems, meaning that both the objective function and the interaction graph between variables are assumed to be unknown. For each pair $(n,K)$ and for each problem, we generated 10 different instances. For the sake of reproducibility, all these instances are available in the supplementary material. For each problem instance, we allow a maximum budget of 10,000 objective function evaluations. The best solution found since the beginning of the search is recorded every 100 evaluations. For each distribution of instances, defined with the vector of features $(pb, n, K)$ (with $pb$ the problem name, $n$ the instance size and $K$ the type of instance), and for each algorithm, we compute the average performance over 10 distinct instances, each solved with 10 independent restarts using different random seeds. This procedure results in 100 independent runs per algorithm and per instance distribution, from which the evolution of the average score is reported. It is worth noting that, within a given distribution, the best scores obtained across the 10 instances are of comparable magnitude, which justifies averaging them to produce a single representative performance measure.

\section{Multivariate EDA Hyperparameter Configuration and computing time \label{appendix:ppo_eda_config}}

In this appendix, we detail the hyperparameter configuration of the multivariate RL EDA presented in Section \ref{subsec:order}, which is used as a baseline for all experiments, and give some details on the complexity and computing time of the proposed approach.

\subsection{Hyperparameter Configuration}

The population size is set by default to $\lambda=10$ accross all benchmark instances. Although fine-tuning this parameter may lead to better performance for specific distributions of problem instances, and may also depend on the instance dimension $n$, we opt for simplicity and maintain a constant value throughout this work. A sensitivity analysis of this key parameter is presented in Subsection \ref{subsec:sensi_lambda}.

By default, each functional mechanism $g_{\theta_i}$ for $i=1, \dots, n$ is implemented as a feedforward neural network with a single hidden layer of 20 neurons, using the hyperbolic tangent activation function. This choice is particularly advantageous, as it allows the network to approximate both nonlinear and linear relationships when needed. Employing one-hidden-layer neural networks for each variable strikes a practical balance between model expressiveness and computational efficiency, especially given the instance sizes considered in this study. Nevertheless, as discussed in Appendix \ref{appendix:sensi_meca}, we explore alternative configurations---such as linear models and deeper neural networks---which may offer improved performance on more complex tasks, albeit at the cost of increased computational time.



The utility function $U$ used in the advantage calculation of \eqref{eq:rank_based_score} is defined as a linear decreasing function on the interval $[0,1]$, specifically $U(x) = 1 - 2x$.
Under this definition, the best individual $x^i_{\mathrm{best}}$ in the current population, with $\text{rk}(x^{i}_{best})=0$, receives a reward $A_{\Gamma_\lambda^t}(x^i_{best}) = 1$, whereas the worst individual $x^i_{\mathrm{worst}}$, with $\text{rk}(x^{i}_{worst})= \lambda - 1$, receives $A_{\Gamma_\lambda^t}(x^i_{worse}) = -1$. If $\lambda$ is odd, the individual with median fitness obtains an advantage of zero. With this choice of $U$, maximizing \eqref{eq:grpo_invariant} assigns the greatest weight to increasing the likelihood of generating the best individual in the population, while simultaneously decreasing the likelihood of generating the worst individual. As a result, the policy is updated so that, in the next generation $t+1$, it tends to produce individuals that are closer to the best members of generation $t$, and farther from the worst ones. It is worth noting that a fine-tuned utility function may yield superior performance for specific distributions of problem instances. Prior research has investigated the impact of selecting appropriate utility values or importance weights. For example, in the context of the \texttt{CMA-ES} algorithm, \citep{andersson2015parameter} showed that adapting these parameters to the distribution of instances can lead to significant performance improvements. Specifically, for smooth landscapes with a single local optimum, a utility function that assigns disproportionately high values to the very best individuals can be advantageous. Conversely, for highly deceptive landscapes, it may be beneficial to assign the highest weights to the worst-performing individuals in the population.

Regarding the coefficient for the KL regularization term, we consistently set $\beta = 1$. A~sensitivity analysis of this parameter is presented in Subsection \ref{subsec:sensi_beta}. At each generation, the algorithm is trained for $E = 50$ epochs using the Adam optimizer \citep{kingma2014adam} with an initial learning rate 0.001. In practice, to avoid numerical issue in the multivariate RL EDAs, particularly division by zero when evaluating the KL divergence term or the importance sampling ratio, we apply clipping to the probability values of each conditional distribution $\pi_\theta(\cdot | \sigma'(x^i)_{<k})$. Specifically, all probabilities are clipped to lie within the interval $[\epsilon, 1 - \epsilon]$, with $\epsilon = 0.001$. 
 Table \ref{tab:parameters_eda} summarizes all hyperparameters used in the multivariate RL EDA.

\begin{table}[ht!]
  \caption{Hyperparameters settings for \texttt{$(\sigma,\sigma)$-RL-EDA}}
  \label{tab:parameters_eda}
  \centering
  \begin{small}
  \begin{tabular}{|c|c|c|}
    \hline
    Parameter & Description & Value \\
    \hline 
    \hline
    \multicolumn{3}{|c|}{EDA  parameters} \\
    \hline
    $\lambda$ & Size of the population & 10\\ 
    $L$ & Number of  hidden layers in  $g$ & 1 \\ 
    $n_l$ & Number of neurons in hidden layer & 20 \\ 
    $\epsilon$ & Probability threshold coefficient & 0.001  \\
    \hline 
    \hline 
    \multicolumn{3}{|c|}{PPO parameters} \\
    \hline 
    $U$ & Utility function & $U(x) = 1 - 2x$ \\ 
    $\beta$ & KL penalty parameter & 1 \\ 
    $E$ & Number of training epoch & 50 \\
    $l_r$ & Learning rate of Adam optimizer & 0.001\\ 
    \hline
  \end{tabular}
  \end{small}
\end{table}

\subsection{Time and space Complexity}

 The overall time complexity of the proposed RL-EDA algorithm can be decomposed into two main components: 
 
\begin{enumerate}
    \item Solution Generation: For each iteration $t$ of the EDA, a population of size $\lambda$ is sampled. Each solution has $n$ variables generated sequentially by neural networks with one hidden layer of size $h$. The cost per forward pass for one variable is $ O(nh)$. For $\lambda$ solutions with $n$ variables per solution $O(\lambda n^2 h)$. 
\item Policy Update (Training) : For $E$ epochs per generation, each epoch recomputes masked inputs and performs gradient updates $O(E \lambda n^2 h)$
\end{enumerate}
Hence the total complexity for $T$ generations is $O(T .E . \lambda . n^2 h)$.  Note that a classical BOA \citep{pelikan2002bayesian} typically leads to  $O(n^3)$.

Concerning space complexity, using the $n$ small NNs in the standard version, we get an $O(n^2h)$ space complexity, which decreases to $O(nh)$ for the shared-parameter variant (see Appendix Q). 

 \color{black}
 
\subsection{Computing time}

 The multivariate RL EDA algorithm is implemented in Python 3.7 with Pytorch 2.5 library for tensor calculation with Cuda 12.4. The source code is available in the supplementary material. It is specifically designed to run on GPU devices.

 When using the hyperparameters described in Table \ref{tab:parameters_eda}, the time required to process a single QUBO instance of size $n=128$, with a budget of 10,000 calls to the objective function, corresponding to 1,000 generations of the algorithm when $\lambda = 10$, is approximately 11.5 minutes on a single Intel(R) Xeon(R) Silver 4208 CPU at 2.10GHz, and 5 minutes on an Nvidia V100 GPU device (including the 10,000 objective function evaluations). The code is also adapted to process batches of multiple instances of the same size in parallel, which greatly benefits from GPU parallelization. In particular, it can process 100 QUBO instances of size $n=128$, each with a budget of 10,000 objective function calls, in 20 minutes on a single V100 GPU device.

 We have also implemented a version of the algorithm with shared parameters in the architecture, called \texttt{$(\sigma,\sigma)$-RL-EDA-share-params} (see Appendix  \ref{appendix:sharing_params}), which scales better in term  of CPU/GPU footprints. 
 
 Table \ref{tab:comput_time} gives more detail on wall-clock times required to solve QUBO instances of different sizes with a budget of 10,000 evaluations
for the standard version \texttt{$(\sigma,\sigma)$-RL-EDA} and the version with shared parameters \texttt{$(\sigma,\sigma)$-RL-EDA-share-params}, in comparison with  \texttt{BOA} EDA and strong Nevergrad baseline \texttt{CMApara}. Times are given in seconds and evaluated for CPU on Xeon(R) Silver 4208 at 2.10GHz and for GPU on an Nvidia V100 GPU device. Note even a time of 2940 seconds to solve a big instance of size $n=256$ with a budget of 10,000 on a CPU with \texttt{$(\sigma,\sigma)$-RL-EDA} is acceptable if we see that it takes actually 0.29 second per solution generated, and for a black box problem such as neural architecture search (see Appendix \ref{appendix:nasbench}), the time required to evaluate a single solution is generally much more costly.

\begin{table}[ht!]
  \caption{CPU/GPU wall-clock times required to solve a QUBO instance with a budget of 10,000 calls to the objective function.}
  \label{tab:comput_time}
  \centering
  \begin{small}
   \begingroup
  \begin{tabular}{|c|c|c|}
    \hline  Size instance   & CPU time (s) & GPU time (s) \\
    \hline 
    \hline 
    \hline
    \multicolumn{3}{|c|}{\texttt{CMApara} (Nevergrad) } \\
    \hline
    \hline
64  & 61  & -   \\
128  & 78  & -   \\
256  & 141  & -  \\  
    \hline 
     \hline
     \multicolumn{3}{|c|}{Multivariate \texttt{BOA} EDA } \\
    \hline
    \hline
   64  & 460  & - \\
128  & 2100  & -   \\
256  & 9310 & -   \\  
    \hline 
     \hline
\multicolumn{3}{|c|}{\texttt{$(\sigma,\sigma)$-RL-EDA}} \\
    \hline
    \hline
64  & 450  & 210 
  \\
128  & 690 &  300 
 \\
256  & 2940  &   420  \\   
    \hline
    \hline
\multicolumn{3}{|c|}{\texttt{$(\sigma,\sigma)$-RL-EDA-share-params}} \\
    \hline
    \hline
64  & 300  & 195   \\
128  & 420  & 255   \\
256  & 780   & 300  \\   
    \hline
  \end{tabular}
  \endgroup
  \end{small}
  
\end{table}

\color{black}

 These times are provided for indicative purposes only, as the main criterion used to assess the performance of a black-box algorithm is typically the best score obtained within a limited number of calls to the objective function---a criterion that is precisely retained in our experimental analyses and benchmark comparisons.



\section{Global Experimental Results}
\label{appendix:glob_expe}

Table \ref{tab:results_global} presents a selection of these results, comparing \texttt{$(\sigma,\sigma)$-RL-EDA} to the three other EDAs of the same category: \texttt{PBIL}, \texttt{MIMIC} and \texttt{BOA}.   The final columns report the performance of the best algorithm among all  the \texttt{Nevergrad} algorithms. For each algorithm, we report the average score obtained after 10,000 calls to the objective function, averaged over 100 independant runs. Based on this average score, the algorithms are ranked, and their position among all competitors is indicated.

To facilitate comparison between our proposed algorithm, \texttt{$(\sigma,\sigma)$-RL-EDA}, and the best-performing competing methods, we conducted statistical significance tests. In Table~\ref{tab:results_global}, a star next to the results of \texttt{$(\sigma,\sigma)$-RL-EDA} indicates that its average performance over 100 runs is statistically significantly better than that of the best other competing algorithm. Conversely, a star next to a competing algorithm denotes that it significantly outperforms \texttt{$(\sigma,\sigma)$-RL-EDA} on average. Statistical significance is assessed using a two-sample t-test with a p-value threshold of 0.001.

We  observe in Table~\ref{tab:results_global} that \texttt{$(\sigma,\sigma)$-RL-EDA} consistently outperforms the other EDAs for instances of size $n=128$ and $n=256$. Interestingly, among the three competing EDAs, the univariate \texttt{PBIL} algorithm achieves the best results.\footnote{Since \texttt{PBIL} is designed specifically for pseudo-Boolean optimization, it was not evaluated on NK3 instances involving variables with three categorical values} This confirms empirical findings previously reported by \citep{doerr2022general}, which suggest that univariate EDAs can sometimes match or even surpass the performance of more complex multivariate EDAs. On possible explanation is that the number of parameters to be learned in multivariate models such as \texttt{MIMIC} and \texttt{BOA} increases rapidly with instance size, potentially slowing convergence compared to the simpler \texttt{PBIL}.

We also observe on this Table that \texttt{$(\sigma,\sigma)$-RL-EDA} is not the best  on instance of small size with $n=64$ in comparison with the best competitors of the Nevergrad library.  This is because  \texttt{$(\sigma,\sigma)$-RL-EDA} with this given set of hyperparameters converges too quickly for these instances, as we will see in more detail in Appendix  \ref{appendix:early_budget}. On the other hand, \texttt{$(\sigma,\sigma)$-RL-EDA} consistently beats all its competitors for this instance size with a budget of 1,000 evaluations instead of 10,000.


\begin{table}[!h]
\centering
 \resizebox{1\textwidth}{!}{
\begin{tabular}{c c l|c|c|c|c|c|c|c|c|c|c|c}
\hline
\multicolumn{3}{c|}{Instances} & \multicolumn{11}{c}{Methods} \\ 
\hline 
\multirow{2}{*}{Pb} & \multirow{2}{*}{$n$} & \multirow{2}{*}{$K$} & \multicolumn{2}{c|}{\texttt{$(\sigma,\sigma)$-RL-EDA}} & \multicolumn{2}{c|}{\texttt{PBIL}} & \multicolumn{2}{c|}{\texttt{MIMIC}} & \multicolumn{2}{c|}{\texttt{BOA}}  & \multicolumn{3}{c}{Best method (others)}\\
\cline{4-14} 
 &  &  &  Rank & Score &  Rank & Score &  Rank & Score & Rank & Score & Name  & Rank & Score\\ \hline\hline
QUBO & 64 & 0  & 33/504 & 200.8 & 61/504 & 199.8 & 249/504 & 188.2 & 267/504 & 184.6 & \textbf{ \texttt{CMAL3}} & \textbf{1/504} & \textbf{206.2}*\\
QUBO & 64 & 1 & 82/504 & 148.8 & 91/504 & 147.8 & 134/504 & 146.0 & 140/504 & 145.4  & \textbf{ \texttt{CMApara}} & \textbf{1/504} &  \textbf{154.3}*\\
QUBO & 64 & 2  & 115/504 & 138.1 & 88/504 & 139.1 & 119/504 & 137.6 & 154/504 & 137.4 & \textbf{ \texttt{DiscreteDE}} & \textbf{1/504} &  \textbf{143.4}*\\
QUBO & 64 & 3  & 79/504 & 411.2 & 89/504 & 410.4 & 264/504 & 379.4 & 266/504 & 377.5 & \textbf{ \texttt{ChainCMAwithRsqrt}} & \textbf{1/504} &  \textbf{430.7}*\\
QUBO & 64 & 4  & 113/504 & 326.1 & 80/504 & 329.7 & 264/504 & 311.7 & 275/504 & 309.7 & \textbf{ \texttt{CMApara}} & \textbf{1/504} & \textbf{344.2}*\\
QUBO & 64 & 5  & 76/504 & 309.4 & 65/504 & 310.0 & 241/504 & 298.3 & 260/504 & 295.9 & \textbf{ \texttt{CMApara}} & \textbf{1/504} & \textbf{319.3}*\\
\hline
QUBO & 128 & 0  & \textbf{1/504} & \textbf{593.7}* & 65/504 & 570.8 & 256/504 & 504.4 & 224/504 & 517.2 &  \texttt{DiscreteLengler2OnePlusOne} & 2/504 &  587.7\\
QUBO & 128 & 1 & 2/504 & 449.2 & 21/504 & 438.3 & 241/504 & 408.4 & 226/504 & 413.0 & \textbf{ \texttt{CMApara}} & \textbf{1/504} &  \textbf{453.8}*\\
QUBO & 128 & 2 & \textbf{1/504} & \textbf{437.1} & 19/504 & 427.5 & 237/504 & 398.9 & 222/504 & 403.7 &  \texttt{CMAL3} & 2/504 &  435.4\\
QUBO & 128 & 3 & \textbf{1/504} & \textbf{1227.2}* & 78/504 & 1177.8 & 257/504 & 1034.7 & 253/504 & 1046.1 &  \texttt{Wiz} & 2/504 &  1207.2\\
QUBO & 128 & 4 & 2/504 & 955.4 & 17/504 & 934.5 & 265/504 & 842.8 & 253/504 & 857.3 & \textbf{ \texttt{CMApara}} & \textbf{1/504} & \textbf{964.9}*\\
QUBO & 128 & 5 & \textbf{1/504} & \textbf{933.3}* & 53/504 & 907.6 & 263/504 & 817.2 & 249/504 & 830.9 &  \texttt{CMAL3} & 2/504 & 928.6\\
\hline
QUBO & 256 & 0 & \textbf{1/504} & \textbf{1697.7}* & 46/504 & 1570.4 & 199/504 & 1317.4 & 99/504 & 1422.4 & \texttt{NLOPT\_LN\_PRAXIS}  & 2/504 & 1607.1\\
QUBO & 256 & 1 & \textbf{1/504} & \textbf{1367.7}* & 3/504 & 1290.5 & 197/504 & 1105.2 & 92/504 & 1197.0 
 & \texttt{BigLognormalDiscreteOnePlusOne}
 & 2/504 & 1301.4\\
 QUBO & 256 & 2 & \textbf{1/504} & \textbf{1304.1}* & 12/504 & 1230.9 & 187/504 & 1073.0 & 92/504 & 1154.4 &  \texttt{SVMMetaModelLogNormal}
 & 2/504 & 1233.8\\
 QUBO & 256 & 3 & \textbf{1/504} & \textbf{3436.8}* & 53/504 & 3208.6 & 196/504 & 2650.7 & 148/504 & 2854.3 &  \texttt{RLSOnePlusOne}
 & 2/504 & 3316.5\\
 QUBO & 256 & 4 & \textbf{1/504} & \textbf{2769.0}* & 35/504 & 2597.5 & 208/504 & 2219.0 & 134/504 & 2391.5 &  \texttt{DiscreteLengler2OnePlusOne}
 & 2/504 & 2617.1\\
  QUBO & 256 & 5 & \textbf{1/504} & \textbf{2730.1}* & 41/504 & 2557.0 & 185/504 & 2206.6 & 141/504 & 2349.2 &  \texttt{SVM1MetaModelLogNormal}
 & 2/504 & 2605.1\\
 \hline
  \hline
NK & 64 & 1 & 29/504 & 0.7103 & 52/504 & 0.7096 & 126/504 & 0.7050 & 236/504 & 0.7008 & \textbf{\texttt{CMApara}}
 & \textbf{1/504} & \textbf{0.7119}\\
 NK & 64 & 2 & 23/504 & 0.742 & 57/504 & 0.7391 & 146/504 & 0.7317 & 204/504 & 0.7273 & \textbf{\texttt{CMApara}}
 & \textbf{1/504} & \textbf{0.7459}\\ 
  NK & 64 & 4 & 12/504 & 0.7523 & 40/504 & 0.7463 & 146/504 & 0.7330 & 179/504 & 0.7311 & \textbf{\texttt{ChainCMAwithMetaRecenteringsqrt}}
 & \textbf{1/504} & \textbf{0.756}*\\ 
 NK & 64 & 8 & 18/504 & 0.7379 & 34/504 & 0.7330 & 262/504 & 0.7088 & 308/504 & 0.6932 & \textbf{\texttt{ChainCMAwithLHSsqrt}}
 & \textbf{1/504} & \textbf{0.749}*\\
 \hline
 NK & 128 & 1 & \textbf{1/504} & \textbf{0.7100} & 4/504 & 0.7061 & 158/504 & 0.6958 & 206/504 & 0.6941 & \texttt{CMApara}
 & 2/504 & 0.7074\\
  NK & 128 & 2 & \textbf{1/504} & \textbf{0.7375}* & 2/504 & 0.7305 & 140/504 & 0.7138 & 128/504 & 0.7139 & \texttt{CMApara}
 & 3/504 & 0.7304\\
NK & 128 & 4 & \textbf{1/504} & \textbf{0.7603}* & 2/504 & 0.7464 & 202/504 & 0.7190 & 124/504 & 0.7252 & \texttt{MetaModelFmin2}
 & 3/504 &0.743\\
NK & 128 & 8 & \textbf{1/504} & \textbf{0.7369}* & 2/504 & 0.7266 & 354/504 & 0.6372 & 387/504 & 0.6071 & \texttt{CMAL3}
 & 3/504 & 0.7256  \\
 \hline
NK & 256 & 1 & \textbf{1/504} & \textbf{0.7071}* & 2/504 & 0.7014 & 111/504 & 0.6810 & 87/504 & 0.6869 & \texttt{CMApara}
 & 3/504 & 0.6989  \\
 NK & 256 & 2 & \textbf{1/504} & \textbf{0.7364}* & 2/504 & 0.7248 & 98/504 & 0.7004 & 60/504 & 0.7100& \texttt{MetaModelFmin2}
 & 3/504 & 0.7218 \\
  NK & 256 & 4 & \textbf{1/504} & \textbf{0.7534}* & 2/504 & 0.7336 & 104/504 & 0.7006 & 189/504 & 0.6895 & \texttt{MetaModelFmin2}
 & 3/504 & 0.7295 \\
NK & 256 & 8 & \textbf{1/504} & \textbf{0.7232}* & 2/504 & 0.7171 & 384/504 &0.5798 & 389/504 & 0.5730 & \texttt{LognormalDiscreteOnePlusOne} & 3/504 & 0.7166 \\
 \hline
  \hline
NK3 & 64 & 1 & \textbf{1/499} & \textbf{0.7818}* & - & - & 70/499 & 0.7659 & 115/499 & 0.7635 & \texttt{DiscreteDE}
 & 2/499 & 0.7772\\
NK3 & 64 & 2 & \textbf{1/499} & \textbf{0.8095}* & - & - & 7/499 & 0.7857 & 73/499 & 0.7779 & \texttt{DiscreteLengler3OnePlusOne}
 & 2/499 & 0.788\\
NK3 & 64 & 4 & \textbf{1/499} & \textbf{0.8004}* & - & - & 137/499 & 0.7622 & 153/499 & 0.7570 & \texttt{BigLognormalDiscreteOnePlusOne}
 & 2/499 & 0.7790\\  
 NK3 & 64 & 8 & 62/499 & 0.7473 & - & - & 359/499 & 0.6407 & 357/499 & 0.6420 & \textbf{\texttt{NLOPT\_GN\_DIRECT\_L}}
 & \textbf{1/499} & \textbf{0.7547}*\\  
 \hline
NK3 & 128 & 1 & \textbf{1/499} & \textbf{0.7876} & - & - & 62/499 & 0.7599 & 103/499 & 0.7537 & \texttt{DiscreteLengler3OnePlusOne}
 & 2/499 & 0.7800\\
NK3 & 128 & 2 & \textbf{1/499} & \textbf{0.7986}* & - & - & 58/499 & 0.7635 & 111/499 &0.7527 & \texttt{BigLognormalDiscreteOnePlusOne}
 & 2/499 & 0.7820\\
 NK3 & 128 & 4 & \textbf{1/499} &\textbf{0.7847}* & - & - & 123/499 & 0.7374 & 129/499 & 0.7311 & \texttt{Neural1MetaModelLogNormal}
 & 2/499 & 0.7740\\
  NK3 & 128 & 8 & 62/499 & 0.7373 & - & - & 376/499 & 0.5986 & 344/499 & 0.6008 & \textbf{\texttt{NLOPT\_GN\_DIRECT\_L}}
 & \textbf{1/499} & \textbf{0.7542}*\\
  \hline
NK3 & 256 & 1 & \textbf{1/499} & \textbf{0.7763}* & - & - & 55/499 & 0.7360 & 62/499 & 0.7247& \texttt{NGOpt} & 2/499 & 0.7542\\  
NK3 & 256 & 2 & \textbf{1/499} & \textbf{0.7801}* & - & - & 53/499 & 0.7391 & 69/499 & 0.7236 & \texttt{RF1MetaModelLogNormal}  & 2/499 & 0.7600\\  
NK3 & 256 & 4 & \textbf{1/499} & \textbf{0.7615}* & - & - & 147/499 & 0.6784 & 69/499 & 0.7091 & \texttt{SVM1MetaModelLogNormal}  & 2/499 & 0.7522\\  
NK3 & 256 & 8 & 43/499 & 0.7213 & - & - & 361/499 & 0.5704 & 401/499 & 0.5692 & \textbf{\texttt{RLSOnePlusOne}}  & \textbf{1/499} & \textbf{0.7362}*\\  
\hline

\end{tabular}
}

\caption{Global rankings and average scores obtained by \texttt{$(\sigma,\sigma)$-RL-EDA} and the other EDAs (\texttt{PBIL}, \texttt{MIMIC}, and \texttt{BOA}) are reported. The last columns present the ranking and average score of the best-performing method among the 500 additional algorithms considered (496 for NK3 problems). Rankings are computed over all 504 algorithms (499 for NK3 problems) by comparing the best score achieved after 10,000 objective function evaluations, averaged across 100 independent runs. Bold values highlight the best results among all competing methods. A star associated the results obtain by \texttt{$(\sigma,\sigma)$-RL-EDA} indicates that it is significantly better in average (over 100 runs)  than the best other competitor.  A star associated with a result obtain by an other algorithm indicates that it is significantly better in average (over 100 runs)  than \texttt{$(\sigma,\sigma)$-RL-EDA}. A difference on the average scores is said statistically significant according to a t-test with p-value 0.001.
\label{tab:results_global}}
\end{table}

In addition to the global results table, we also provide plots showing the evolution of the best scores (averaged over 100 runs) as a function of the number of objective function evaluations. In each plot, the curve for \texttt{$(\sigma,\sigma)$-RL-EDA} is always displayed in green and placed first in the legend, for consistency. It is compared against the 10 best-performing competing algorithms, listed in the legend from best to worst.

Here, we present these curves only for the different instance types of size $n=128$ from the pseudo-Boolean QUBO problem (Figure \ref{fig:qubo}) and the categorical NK3 problem (Figure \ref{fig:nk3}).\footnote{All plots for all instance distributions are available in the supplementary material.}  
Note that for the QUBO instance distribution with $n=128$ and $K=3$ (Figure \ref{fig:sub3_qubo}), the 10 other best algorithms, which are variants of the meta-algorithm \texttt{NGOpt}, exhibit overlapping performance curves. This is because they all selected the same low-level algorithm, \texttt{DiscreteLenglerOnePlusOne}, based on the characteristics of the instance. 

When comparing the evolution curves of \texttt{$(\sigma,\sigma)$-RL-EDA} across these two problems, we observe markedly different behaviors. For QUBO problems (Figure \ref{fig:qubo}), \texttt{$(\sigma,\sigma)$-RL-EDA} quickly reaches a good solution and then stagnates for the remainder of the budget. The best scores are typically achieved after approximately 3,000 to 4,000 evaluations, suggesting that the full budget of 10,000 does not benefit \texttt{$(\sigma,\sigma)$-RL-EDA}, but rather favors competing algorithms.

In contrast, for NK3 instances (Figure \ref{fig:nk3}), \texttt{$(\sigma,\sigma)$-RL-EDA} requires significantly more time to converge. The algorithm exhibits an “S”-shaped curve, indicating a delayed learning phase before generating high-quality solutions. This behavior becomes more pronounced as the interaction graph increases (i.e., with higher $K$ values), likely due to the increased difficulty in modeling variable interactions in NK3 compared to QUBO. Notably, for the most complex instances ($K=8$), \texttt{$(\sigma,\sigma)$-RL-EDA} fails to converge within the allocated budget, explaining its poor performance reported in Table \ref{tab:results_global} for this distribution. Meta-algorithms from the \texttt{Nevergrad} library that incorporate neural networks (\texttt{NeuralMetaModelLogNormal}) or random forests (\texttt{NRFMetaModelLogNormal}) achieve good results more rapidly.
On the other hand, when \texttt{$(\sigma,\sigma)$-RL-EDA} has sufficient time to converge---as in landscapes with $K=2$ or $K=4$---it achieves significantly better average scores than its competitors by the end of the search.

\begin{figure}[!h]
\centering
\begin{subfigure}{.45\textwidth}
  \centering
  \includegraphics[width=1\linewidth]{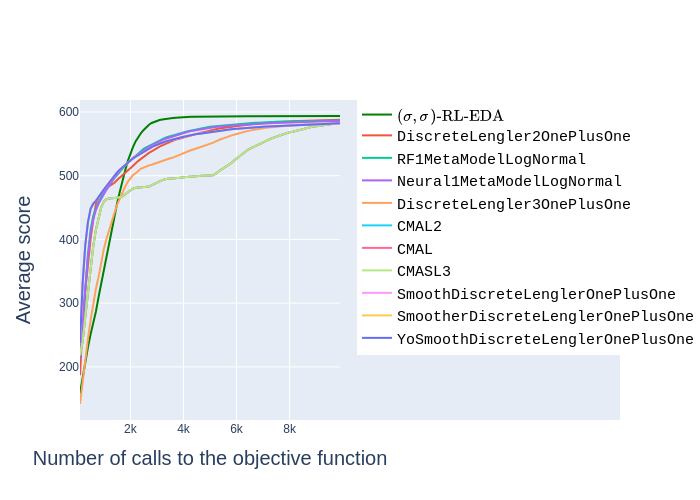}
  \caption{QUBO instances with $n=128$ and $K=0$.}
\end{subfigure}%
\begin{subfigure}{.45\textwidth}
  \centering
  \includegraphics[width=1\linewidth]{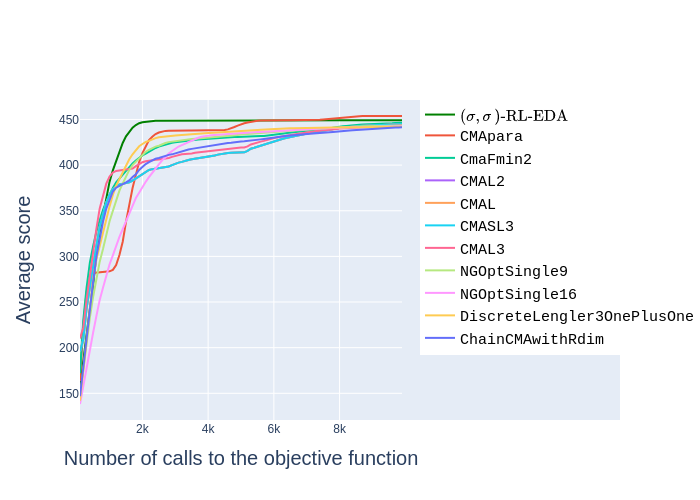}
  \caption{QUBO instances with $n=128$ and $K=1$.}
\end{subfigure}
\begin{subfigure}{.45\textwidth}
  \centering
  \includegraphics[width=1\linewidth]{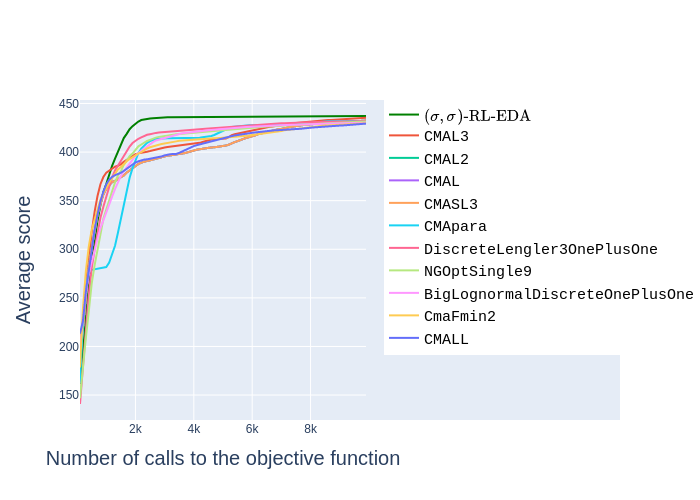}
  \caption{QUBO instances with $n=128$ and $K=2$.}
\end{subfigure}
\begin{subfigure}{.45\textwidth}
  \centering
  \includegraphics[width=1\linewidth]{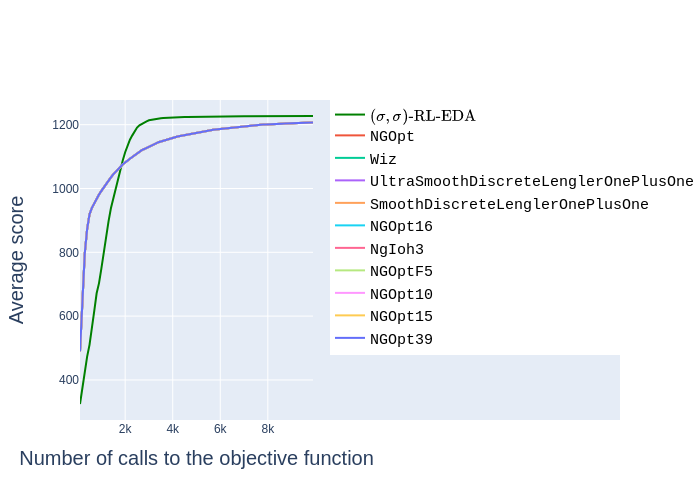}
  \caption{QUBO instances with $n=128$ and $K=3$.}
  \label{fig:sub3_qubo}
\end{subfigure}
\begin{subfigure}{.45\textwidth}
  \centering
  \includegraphics[width=1\linewidth]{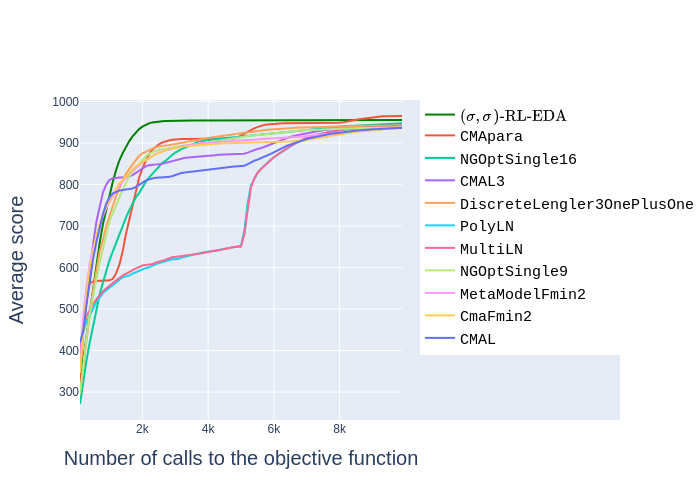}
  \caption{QUBO instances with $n=128$ and $K=4$.}
\end{subfigure}
\begin{subfigure}{.45\textwidth}
  \centering
  \includegraphics[width=1\linewidth]{ranking_QUBO_N_128_K_5.png}
  \caption{QUBO instances with $n=128$ and $K=5$.}
\end{subfigure}
\caption{Evolution of the average scores w.r.t. the number of calls to the objective function obtained by \texttt{$(\sigma,\sigma)$-RL-EDA} and the best 10 other competitors for the  different type of QUBO instances with $n=128$.  }
\label{fig:qubo}
\end{figure}

\begin{figure}[!h]
\centering
\begin{subfigure}{.45\textwidth}
  \centering
  \includegraphics[width=1\linewidth]{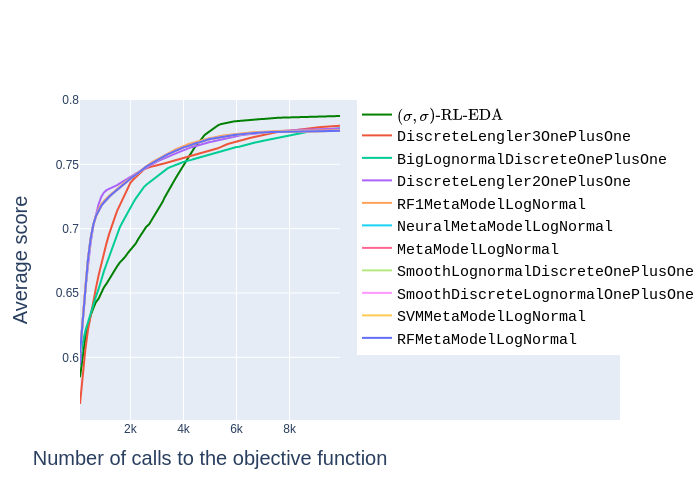}
  \caption{NK3 instances with $n=128$ and $K=1$.}
\end{subfigure}%
\begin{subfigure}{.45\textwidth}
  \centering
  \includegraphics[width=1\linewidth]{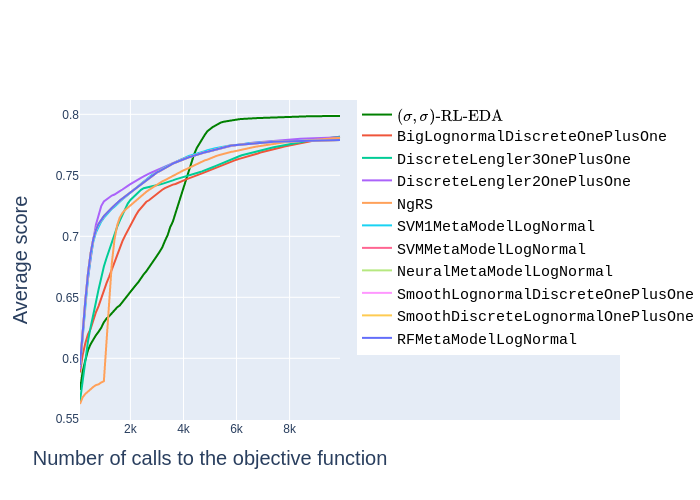}
  \caption{NK3 instances with $n=128$ and $K=2$.}
\end{subfigure}
\begin{subfigure}{.45\textwidth}
  \centering
  \includegraphics[width=1\linewidth]{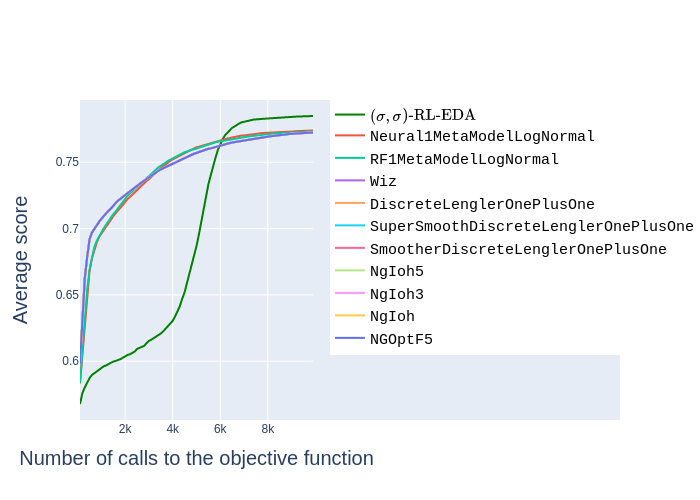}
  \caption{NK3 instances with $n=128$ and $K=4$ and }
\end{subfigure}
\begin{subfigure}{.45\textwidth}
  \centering
  \includegraphics[width=1\linewidth]{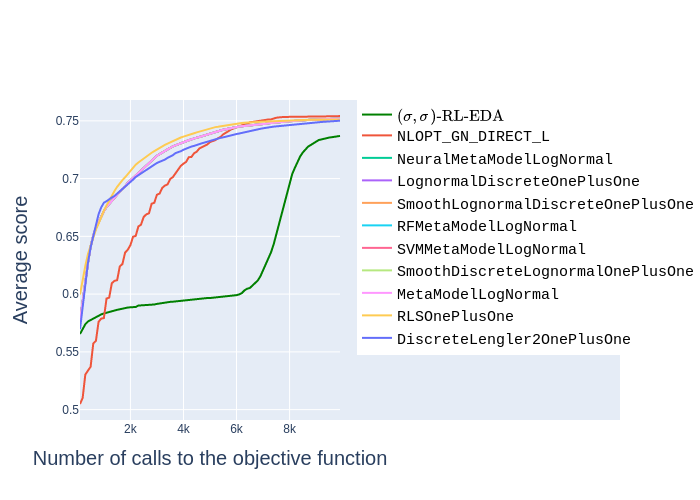}
  \caption{NK3 instances with $n=128$ and $K=8$.}
\end{subfigure}

\caption{Evolution of the average scores w.r.t. the number of calls to the objective function obtained by \texttt{$(\sigma,\sigma)$-RL-EDA} and the best 10 other competitors for the  different type of NK3 instances with $n=128$.}
  \label{fig:nk3}
\end{figure}


\section{Ablation studies  and sensitivity analyses \label{sec:sensitivity}}

In this appendix, we first present two ablation studies aimed at evaluating the impact of the order-invariant reinforcement learning framework used in \texttt{$(\sigma,\sigma)$-RL-EDA} (see Section \ref{subsec:order}), which could be partially or totally replaced by naive structural dropout during sampling and/or training. 

We also investigate the influence of incorporating a   known variable interaction graph on the performance of  \texttt{$(\sigma,\sigma)$-RL-EDA}.

Furthermore, we conduct a sensitivity analysis of key parameters within the multivariate RL EDA  framework, specifically examining the effects of the population size ($\lambda$), the KL divergence penalization coefficient ($\beta$),  various configurations of the $g$ mechanisms employed in the multivariate generative model, and the number of training epochs $E$ at each iteration  $t$ of the EDA.

\subsection{Impact for using additional structural dropout  for generation and training \label{appendix:additional_dropout}}

In this appendix, we aim to test   variants of the multivariate RL EDA presented in Section \ref{subsec:order}  (\texttt{$(\delta,\delta)$-RL-EDA},   \texttt{$(\delta,\sigma)$-RL-EDA},  \texttt{$(\sigma,\delta)$-RL-EDA},  
 \texttt{$(\sigma,\sigma)$-RL-EDA}), but with additional structural dropout for sampling and training (following the objective \eqref{eq:lambda_dropout} combining input dropout and order permutations described in section \ref{sec:vs_dropout}). 

During the generation phase (respectively the training phase) of the EDA, we add a probability $p_G \in \{0.0, 0.25, 0.5, 0.75\}$ (respectively $p_T \in \{0.0, 0.25, 0.5, 0.75\}$) that a parent of a variable in the causal mask is set at the value of zero. Therefore, we test 16 different configurations of structural dropout for each  multivariate RL variant. 

First, we see in Figure \ref{fig:sensi_structural_dropout_a} that adding structural dropout during the sampling phase and the training phase   can be very beneficial  in particular for the variant $(\delta,\delta)$-RL-EDA with fix order for both generation and sampling. It helps the model have more diversity during the generation phase of the EDA and to better detect the dependencies between variables during the update phase.

By contrast, adding these structural dropouts for the variant \texttt{$(\sigma,\sigma)$-RL-EDA} in Figure \ref{fig:sensi_structural_dropout_d} does not improve the results in comparison with the reference version with $p_G = 0.0$ and $p_T = 0.0$ (green solid line), because this version already benefits from structural dropout for sampling and training induced by its double random order sampling process. 

Overall, we observe that the reference version \texttt{$(\sigma,\sigma)$-RL-EDA} without structural dropout performs  better  with a score of 0.753 in average than all variants across the different combinations of dropout levels used for sampling and training (the best other variant obtains an average score of 0.747). The difference of score is statistically significative according to a t-test with p-value 0.001. It should be noted that it is difficult to obtain an average score higher than 0.006 when the score is already very good for this type of instance. This suggests that the dropout distribution induced by double-order sampling is more advantageous than fine-tuning specific structural dropout values for the generation and update phases of the EDA.

We confirms this results on the large QUBO instances with $N=256$ and $K=5$ (see Figure \ref{fig:sensi_structural_dropout_QUBO}. On this distribution of instances our reference variant \texttt{$(\sigma,\sigma)$-RL-EDA} with $p_G = 0.0$ and $p_T = 0.0$ (green solid line in SubFigure \ref{fig:sensi_structural_dropout_qubo_d}) obtains a score of 2730 in average, while the best other variant  \texttt{$(\sigma,\delta)$-RL-EDA} with dropout ratios  $p_G=0.5$ and $p_T=0.5$ obtain a score of 2709 in average. The difference of score is statistically significative according to a t-test with p-value 0.1.

\begin{figure}[!h]
\centering
\begin{subfigure}{.45\textwidth}
  \centering
  \includegraphics[width=1\linewidth]{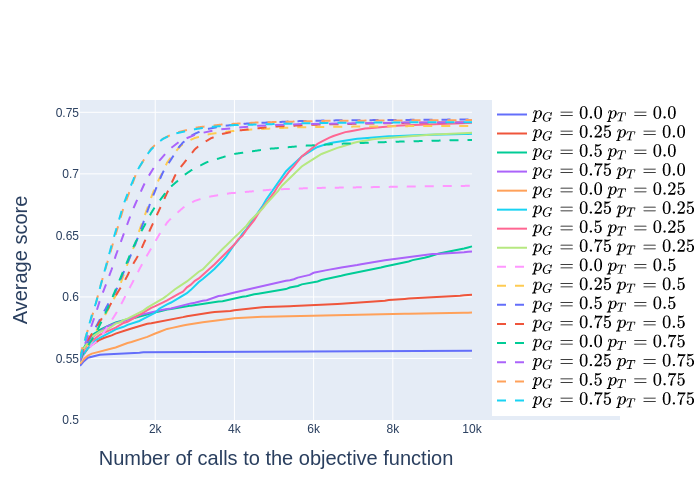}
  \caption{Variant $(\delta,\delta)$-RL-EDA }
  \label{fig:sensi_structural_dropout_qubo_a}
\end{subfigure}
\begin{subfigure}{.45\textwidth}
  \centering
  \includegraphics[width=1\linewidth]{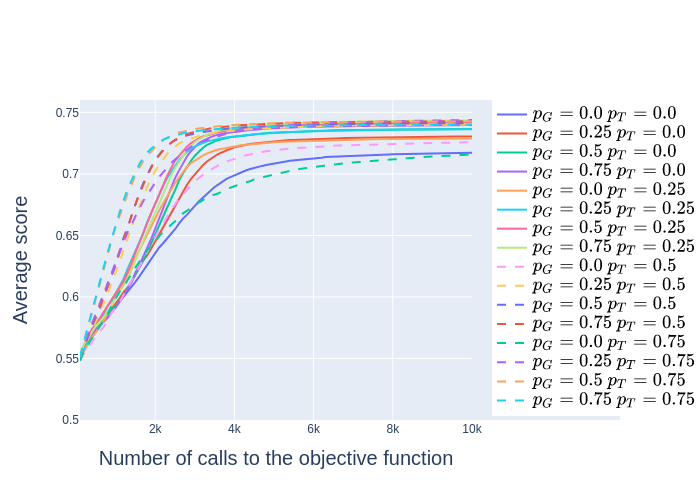}
  \caption{Variant $(\delta,\sigma)$-RL-EDA }
    \label{fig:sensi_structural_dropout_qubo_b}
\end{subfigure}
\begin{subfigure}{.45\textwidth}
  \centering
  \includegraphics[width=1\linewidth]{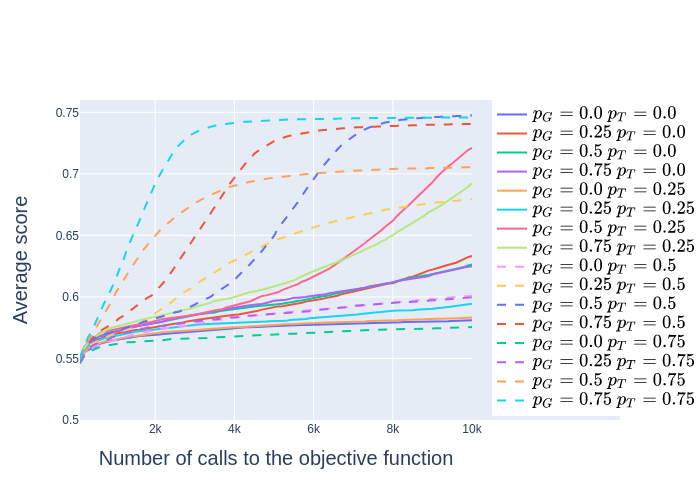}
  \caption{Variant $(\sigma,\delta)$-RL-EDA }
    \label{fig:sensi_structural_dropout_qubo_c}
\end{subfigure}
\begin{subfigure}{.45\textwidth}
  \centering
  \includegraphics[width=1\linewidth]{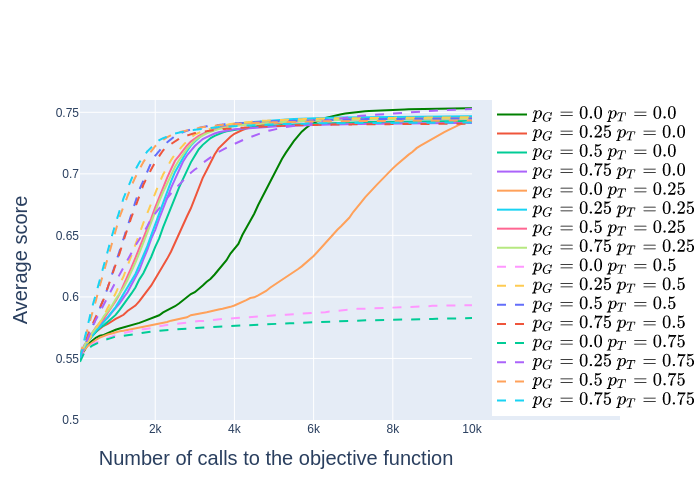}
  \caption{Variant $(\sigma,\sigma)$-RL-EDA }
    \label{fig:sensi_structural_dropout_qubo_d}
\end{subfigure}

\caption{Evolution of the average scores w.r.t. the number of calls to the objective function, obtained by the four different versions of the multivariate RL EDA with additional structural dropout for sampling and training for the instances of the NK landscape problem with $N=256$ and $K=4$.}
\label{fig:sensi_structural_dropout}
\end{figure}

\begin{figure}[!h]
\centering
\begin{subfigure}{.45\textwidth}
  \centering
  \includegraphics[width=1\linewidth]{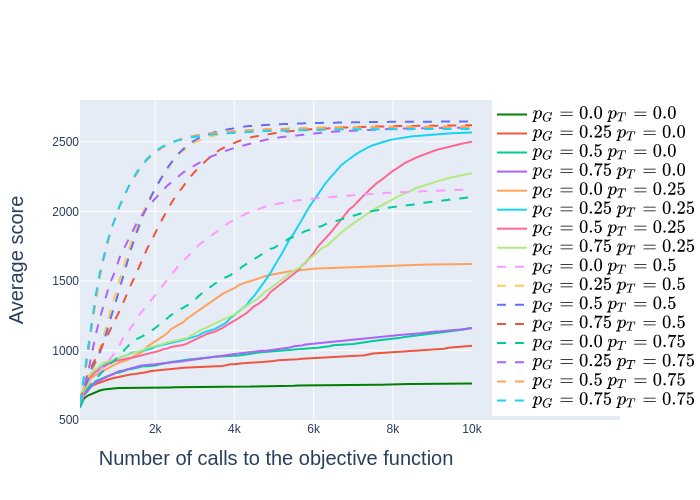}
  \caption{Variant $(\delta,\delta)$-RL-EDA }
  \label{fig:sensi_structural_dropout_a}
\end{subfigure}
\begin{subfigure}{.45\textwidth}
  \centering
  \includegraphics[width=1\linewidth]{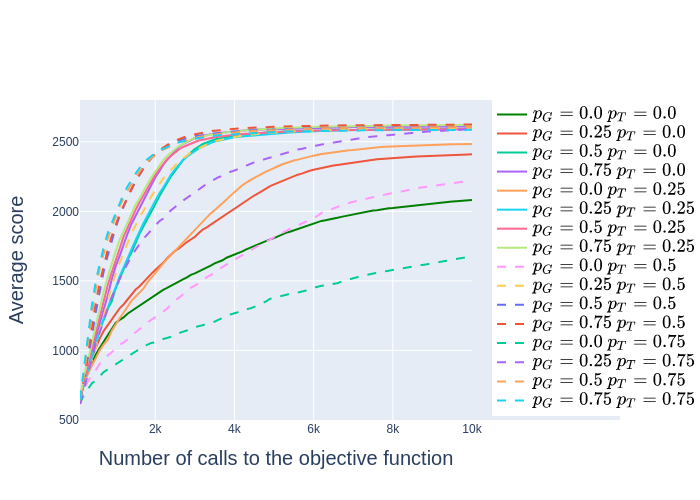}
  \caption{Variant $(\delta,\sigma)$-RL-EDA }
    \label{fig:sensi_structural_dropout_b}
\end{subfigure}
\begin{subfigure}{.45\textwidth}
  \centering
  \includegraphics[width=1\linewidth]{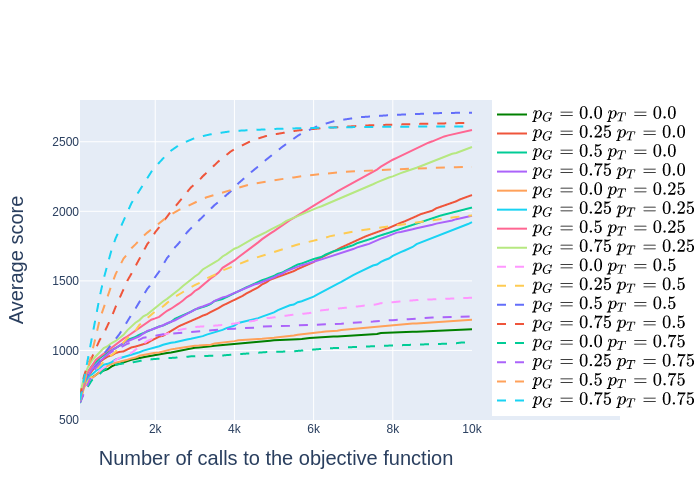}
  \caption{Variant $(\sigma,\delta)$-RL-EDA }
    \label{fig:sensi_structural_dropout_c}
\end{subfigure}
\begin{subfigure}{.45\textwidth}
  \centering
  \includegraphics[width=1\linewidth]{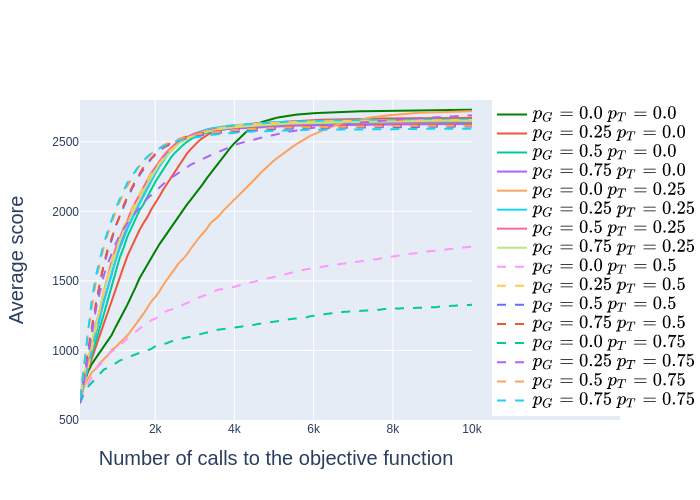}
  \caption{Variant $(\sigma,\sigma)$-RL-EDA }
    \label{fig:sensi_structural_dropout_d}
\end{subfigure}

\caption{Evolution of the average scores w.r.t. the number of calls to the objective function, obtained by the four different versions of the multivariate RL EDA with additional structural dropout for sampling and training for the instances of the QUBO problem with $N=256$ and $K=5$.}
\label{fig:sensi_structural_dropout_QUBO}
\end{figure}

\subsection{Impact for using structural dropout instead of causal mask during training\label{appendix:without causal mask}} 

In this appendix, we seek to verify whether the causal used during the EDA training phase can be completely replaced by a structural dropout with a probability $p_T \in \{0.0, 0.25, 0.5, 0.75\}$ for variants with fixed or random orders during generation. These variants without causal mask during training are called \texttt{$(\delta,p)$-RL-EDA} and 
\texttt{$(\sigma,p)$-RL-EDA}.  We also retain the different structural dropout ratios for generation $p_G \in \{0.0, 0.25, 0.5, 0.75\}$ which is complementary to the mandatory causal mask for generation. 

We observe in Figure \ref{fig:ablation_order_a} that the variant \texttt{$(\delta,p)$-RL-EDA} can obtained at best the same results than the variant \texttt{$(\delta,\sigma)$-RL-EDA} using fix causal mask during training (see Figure \ref{fig:sensi_structural_dropout_a}). Symmetrically, the variant  \texttt{$(\sigma,p)$-RL-EDA} obtain also at best the same results than the variant \texttt{$(\sigma,\delta)$-RL-EDA} (see Figure \ref{fig:sensi_structural_dropout_c}). However these variants obtain  less good results than the reference version 
 \texttt{$(\sigma,\sigma)$-RL-EDA} (green solid line in  Figure \ref{fig:sensi_structural_dropout_d}), which confirm the utility of the specific double uniform distribution of random orders used during the sampling  and training phase of the EDA, instead of fine tuned structural dropouts in this context.
We confirms this results on the large QUBO instances with $N=256$ and $K=5$ (see Figure \ref{fig:ablation_order_qubo}), when comparing the results obtain on these plots with those obtain by the reference version \texttt{$(\sigma,\sigma)$-RL-EDA} on the same distribution of instances (green solid line in  Figure \ref{fig:sensi_structural_dropout_qubo_d}).

\begin{figure}[!h]
\centering
\begin{subfigure}{.45\textwidth}
  \centering
  \includegraphics[width=1\linewidth]{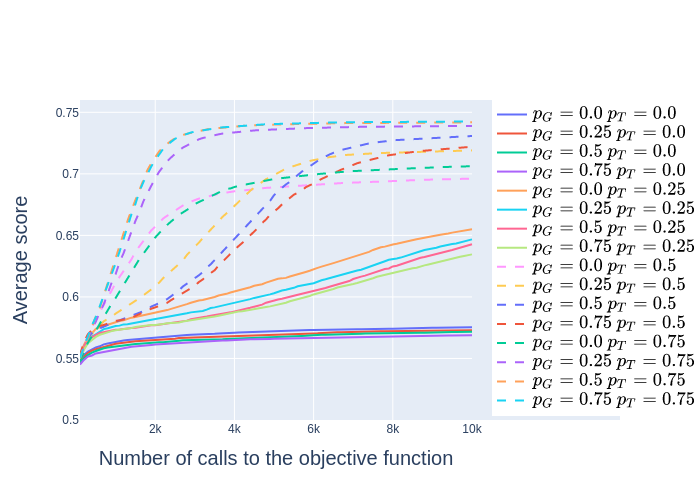}
  \caption{Variant $(\delta,p)$-RL-EDA }
    \label{fig:ablation_order_a}
\end{subfigure}
\begin{subfigure}{.45\textwidth}
  \centering
  \includegraphics[width=1\linewidth]{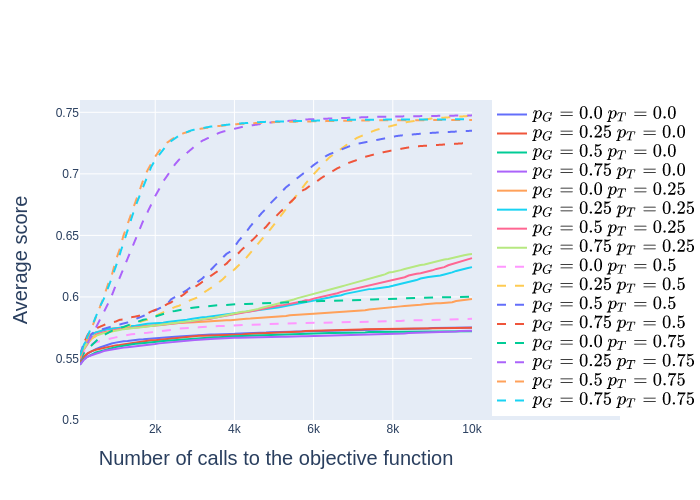}
  \caption{Variant $(\sigma,p)$-RL-EDA }
  \label{fig:ablation_order_b}
\end{subfigure}
\caption{Evolution of the average scores w.r.t. the number of calls to the objective function for the variants \texttt{$(\delta,p)$-RL-EDA} and \texttt{$(\sigma,p)$-RL-EDA} for the instances of the NK landscape problem with $N=256$ and $K=4$.  }
\label{fig:ablation_order}
\end{figure}

\begin{figure}[!h]
\centering
\begin{subfigure}{.45\textwidth}
  \centering
  \includegraphics[width=1\linewidth]{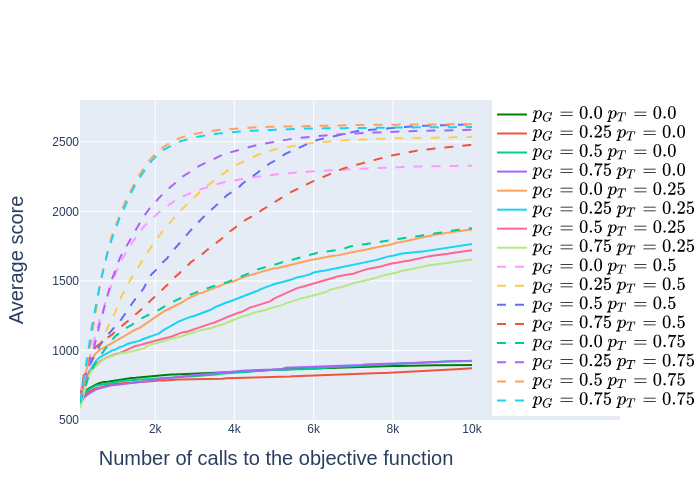}
  \caption{Variant $(\delta,p)$-RL-EDA }
    \label{fig:ablation_order_qubo_a}
\end{subfigure}
\begin{subfigure}{.45\textwidth}
  \centering
  \includegraphics[width=1\linewidth]{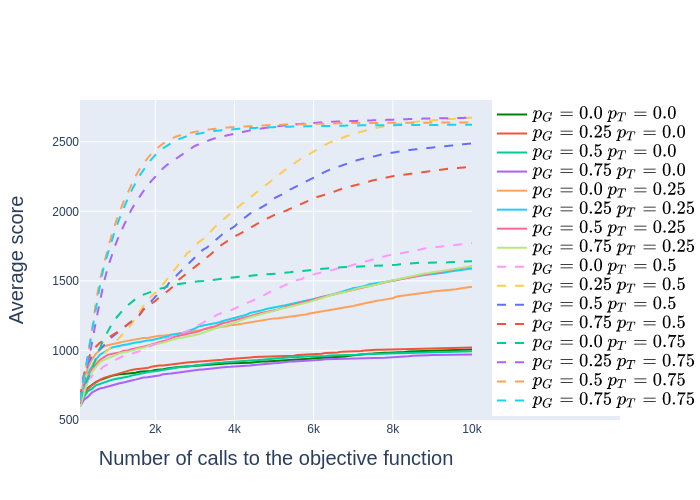}
  \caption{Variant $(\sigma,p)$-RL-EDA }
  \label{fig:ablation_order_qubo_b}
\end{subfigure}
\caption{Evolution of the average scores w.r.t. the number of calls to the objective function for the variants \texttt{$(\delta,p)$-RL-EDA} and \texttt{$(\sigma,p)$-RL-EDA} for the instances of the QUBO problem with $N=256$ and $K=5$.  }
\label{fig:ablation_order_qubo}
\end{figure}

\subsection{Impact of using a known interaction graph between variables  \label{subsec:sensi_knownIG}}

In scenarios where the interaction graph (IG) between variables is assumed to be known---i.e., a gray-box setting \citep{santana2017} ---the causal  masks used in \texttt{$(\sigma,\sigma)$-RL-EDA} can be adapted to respect these structural constraints. 

Let $A$ denote the symmetric binary adjacency matrix of the interaction graph, where $a_{i,j} = 1$ indicates that variables $X_i$ and $X_j$ interact in the the evaluation of the objective function $f$. For example, in the QUBO problem, the objective function is defined as $f(x)=x^\top Qx$, where $Q$ is a symmetric real matrix of size $n\times n$ and coefficients $q_{ij}$. In this case, the adjacency matrix $A$ is constructed such that $a_{ij} = 1$ if $q_{ij} \neq 0$, and $0$ otherwise.

Each causal mask $\sigma(x)_{<k}$ (see  Section \ref{subsec:order}) is then adapted to  hide values of non adjacent variables in the interaction graph (corresponding to zero  coefficients in the adjacency matrix $A$), in addition to every dimension whose rank in $\sigma$ is greater or equal than $k$.

Figure~\ref{fig:known_IG} shows the evolution of average scores across 100 independent runs of \texttt{$(\sigma,\sigma)$-RL-EDA}, comparing the case with an unknown IG (green curve) to the case with a known IG (blue curve). When comparing the green and blue curves, we observe that providing the interaction graph between variables helps guide the algorithm more effectively at the beginning of the search.  Indeed,  \texttt{$(\sigma,\sigma)$-RL-EDA} with a known IG reaches high-quality solutions more rapidly. However, it is noteworthy that the green curve eventually surpasses the blue one, suggesting that constraining the learning process to the predefined interaction graph may become limiting. Toward the end of the search, generating optimal solutions may benefit from discovering new relationships between variables that are not encoded in the known interaction graph used to compute the objective function. This phenomenon can be attributed to the fact that the learned model of \texttt{$(\sigma,\sigma)$-RL-EDA} is not designed to model the full objective function, but rather to approximate the distribution of high-quality solutions within a specific region of the search space.

\begin{figure}[!h]
\centering
\begin{subfigure}{.45\textwidth}
  \centering
  \includegraphics[width=1\linewidth]{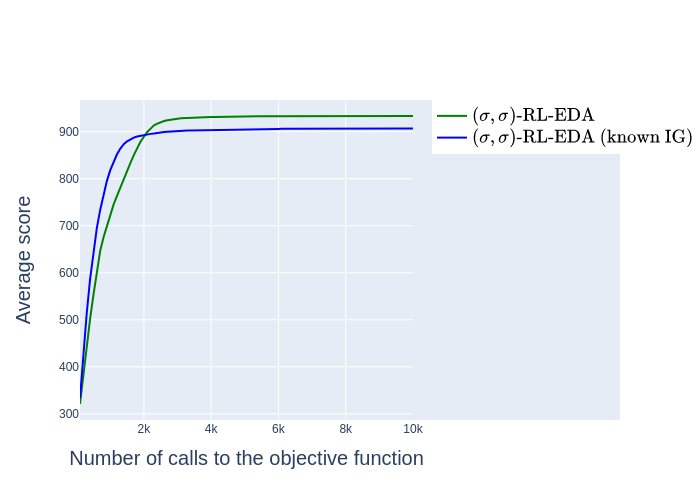}
    \caption{QUBO instances with $n=128$ and $K=5$.}
\end{subfigure}%
\begin{subfigure}{.45\textwidth}
  \centering
  \includegraphics[width=1\linewidth]{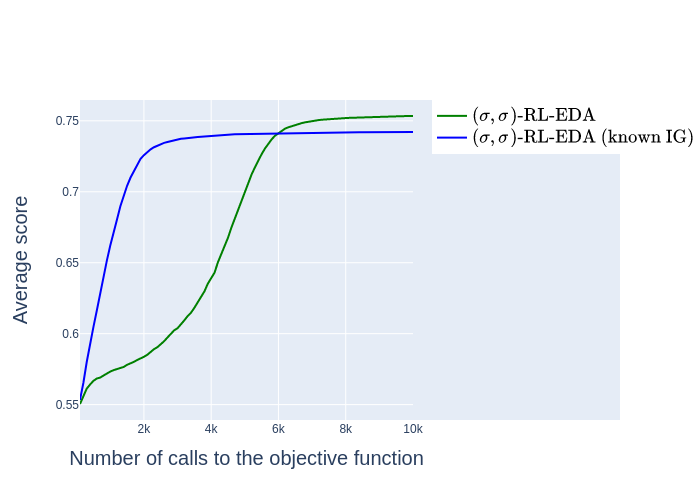}
  \caption{NK instances with $n=256$ and $K=4$.}
\end{subfigure}
\caption{Evolution of the average scores w.r.t. the number of calls to the objective function, obtained by \texttt{$(\sigma,\sigma)$-RL-EDA} with and without known interaction graph. }
\label{fig:known_IG}
\end{figure}


\subsection{Sensitivity to the population size \label{subsec:sensi_lambda}}

Figure~\ref{fig:lambda_sensi} shows the score evolution curves for \texttt{$(\sigma,\sigma)$-RL-EDA} with varying population size.

\begin{figure}[!h]
\centering
\begin{subfigure}{.45\textwidth}
  \centering
  \includegraphics[width=1\linewidth]{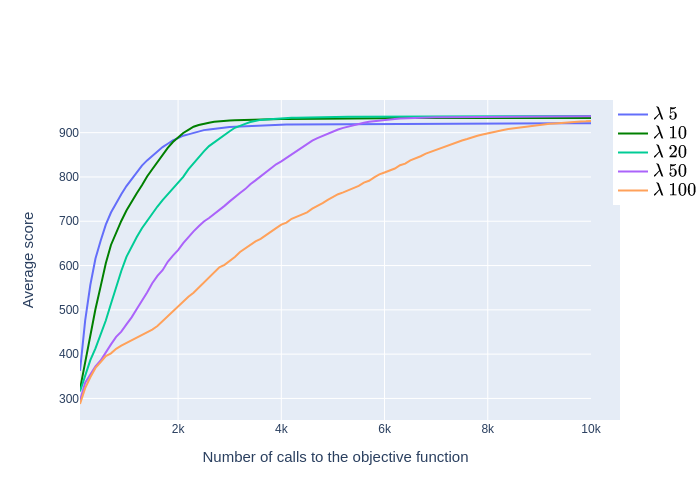}
  \caption{QUBO instances with $n=128$ and $K=5$.}
\label{fig:lambda_sensi_b}
\end{subfigure}
\begin{subfigure}{.45\textwidth}
  \centering
  \includegraphics[width=1\linewidth]{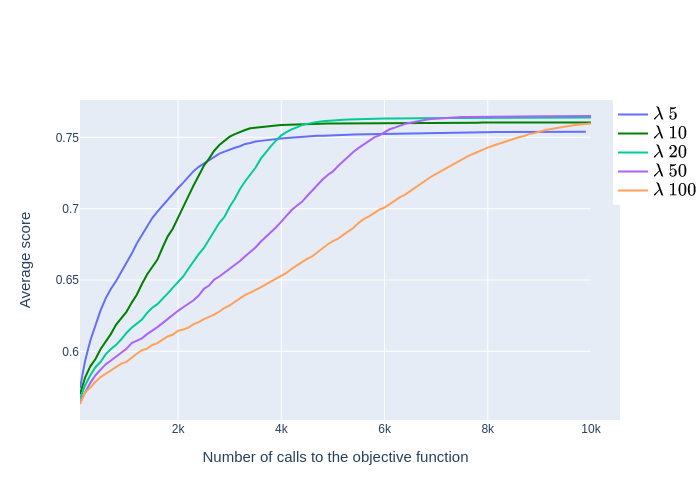}
  \caption{NK instances with $n=128$ and $K=4$.}
\label{fig:lambda_sensi_a}
\end{subfigure}%
\caption{Sensitivity to the population size in \texttt{$(\sigma,\sigma)$-RL-EDA}.  }
\label{fig:lambda_sensi}
\end{figure}

Our analysis reveals that, for the considered instance distributions, a smaller population size tends to promote faster convergence in terms of the number of objective function evaluations. However, this accelerated convergence often comes at the expense of reduced exploration, which can lead the algorithm to suboptimal local solutions. Increasing the population size to $\lambda = 20$ or $\lambda = 50$ improves the average performance previously reported for NK instances with $N = 128$ and $K = 4$ (Figure~\ref{fig:lambda_sensi_a}). In contrast, as shown in Figure~\ref{fig:lambda_sensi_b}, the population size appears to have a negligible impact on performance for QUBO instances.

\subsection{Sensitivity to the KL penalty coefficient \label{subsec:sensi_beta}}

Figure~\ref{fig:beta_sensi} shows the score evolution curves of \texttt{$(\sigma,\sigma)$-RL-EDA} for different values of the KL penalty coefficient $\beta$. By default, this coefficient is set to 1 in \texttt{$(\sigma,\sigma)$-RL-EDA} (green curve). It controls the amplitude of the KL regularization term included in the objective function during the update phase of \texttt{$(\sigma,\sigma)$-RL-EDA} (see Equation \ref{eq:grpo_invariant}).

\begin{figure}[!h]
\centering
\begin{subfigure}{.45\textwidth}
  \centering
  \includegraphics[width=1\linewidth]{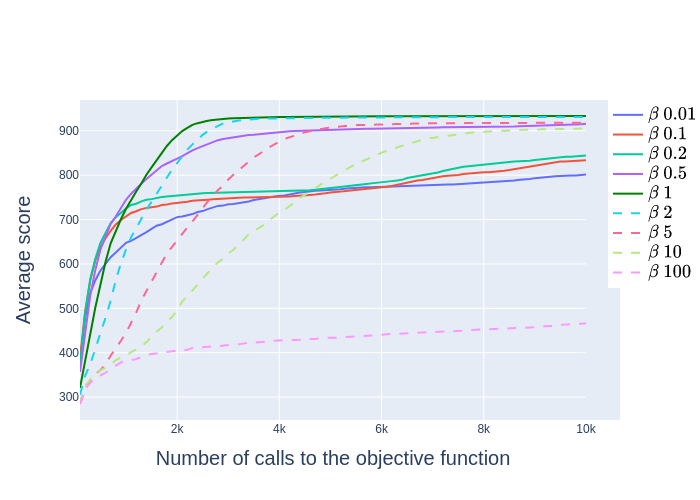}
  \caption{QUBO instances with $n=128$ and $K=5$.}
\end{subfigure}%
\begin{subfigure}{.45\textwidth}
  \centering
  \includegraphics[width=1\linewidth]{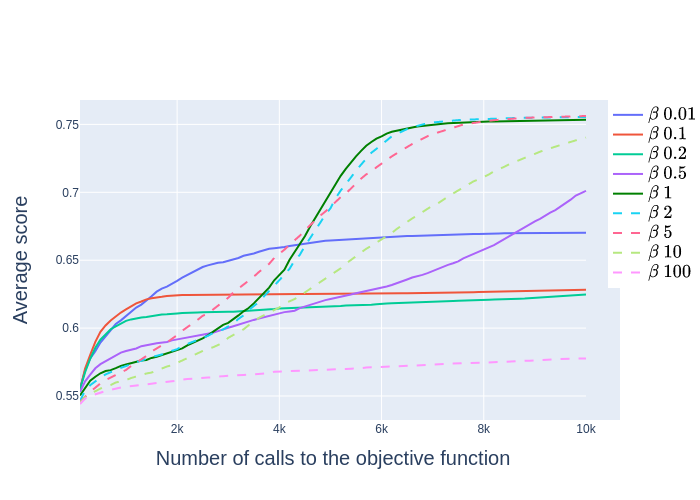}
    \caption{NK instances with $n=256$ and $K=4$.}
\end{subfigure}
\caption{Sensitivity to the KL penalty coefficient $\beta$ in \texttt{$(\sigma,\sigma)$-RL-EDA}.  }
\label{fig:beta_sensi}
\end{figure}

We observe that low values of $\beta$ lead to faster convergence in terms of objective function evaluations. However, this often results in premature convergence to suboptimal solutions due to insufficient exploration. Conversely, higher values of $\beta$ help maintain the initial high entropy of the solution distribution for a longer period, thereby promoting broader exploration. Nevertheless, excessively high values---such as $\beta = 100$---can hinder the algorithm's ability to converge toward high-quality solution. These results highlight the critical role of $\beta$ in balancing exploration and exploitation. For the instance distributions considered and given the evaluation budget, setting $\beta$ within the range $[1,5]$ appears to offer a satisfactory trade-off.

\subsection{Sensitivity to the logistic regression models used in the Markov Kernels \label{appendix:sensi_meca}}

Figure~\ref{fig:mechanism_sensi} shows the score evolution of \texttt{$(\sigma,\sigma)$-RL-EDA} for different logistic regression models $g$ used in the generative process of each variable conditioned on the others (see Section \ref{sec:archi}).

The blue curve corresponds to the univariate model, where each variable is generated independently of the others. This model converges the fastest, due to its limited number of parameters. The red curve represents the use of linear logistic regression models. Interestingly, the performance obtained with linear models is even lower than that of the univariate model. This result suggests that it may be preferable to omit interaction modeling entirely rather than attempt to capture complex dependencies using an overly simplistic linear model.

We also evaluate several variants using neural networks of varying depth---specifically with 1, 2, and 4 hidden layers---for each variable. All configurations perform similarly on NK instances with $K = 4$ (Figure~\ref{fig:mechanism_sensi_a}), where variable interactions are relatively simple. However, for the more complex categorical NK3 problem with $K = 8$ (Figure~\ref{fig:mechanism_sensi_b}), deeper architectures (e.g., the four-hidden-layer model, shown by the orange curve) outperform simpler ones such as the single hidden layer (green curve). This suggests that increased model capacity is beneficial for capturing more complex dependencies. Nevertheless, this improvement comes with increased computational and memory requirements. 


\begin{figure}[!h]
\centering
\begin{subfigure}{.45\textwidth}
  \centering
  \includegraphics[width=1\linewidth]{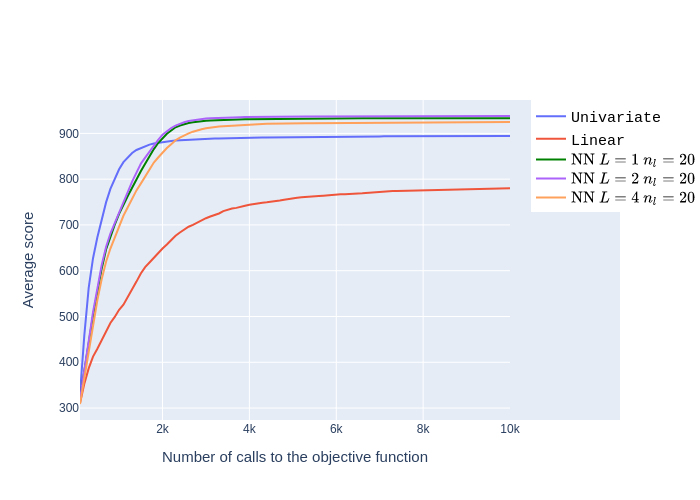}
  \caption{QUBO instances with $n=128$ and $K=5$.}
  \label{fig:mechanism_sensi_a}
\end{subfigure}%
\begin{subfigure}{.45\textwidth}
  \centering
  \includegraphics[width=1\linewidth]{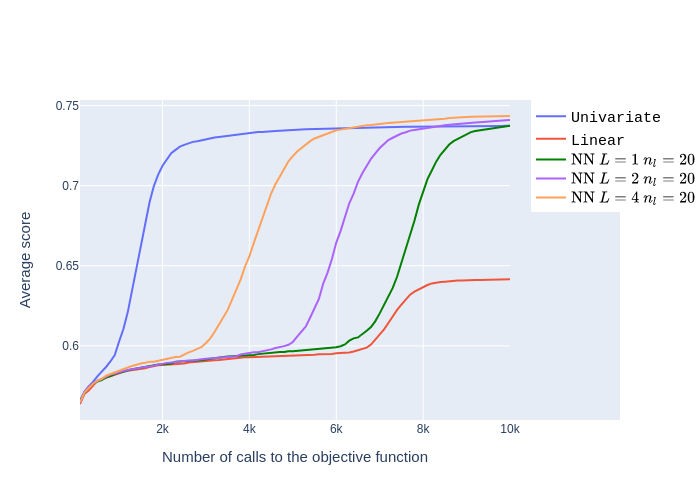}
  \caption{NK3 instances with $n=128$ and $K=8$.}
    \label{fig:mechanism_sensi_b}
\end{subfigure}

\caption{Sensitivity to the logistic regression models used in each conditional generative network of  \texttt{$(\sigma,\sigma)$-RL-EDA}. NN corresponds to neural network. $L$  is the number of hidden layer in each neural network and $n_l$ is the number of neurons in each hidden layer.  }
\label{fig:mechanism_sensi}
\end{figure}

\subsection{Sensitivity to the number of training epochs at each generation \label{appendix:sensi_meca}}

Figure~\ref{fig:sensi_nb_train} shows the score evolution curves of \texttt{$(\sigma,\sigma)$-RL-EDA} for different values of the number of training epochs  $E$ (number of permutations) at each iteration $t$ of the RL EDA. By default, this coefficient is set to 50 in \texttt{$(\sigma,\sigma)$-RL-EDA} (green curve). 

In Figure \ref{fig:sensi_nb_train}, we observe that the higher the value of parameter $E$, the better the long-term results. However, increasing $E$ increases the algorithm's resolution time, which is 5min, 6min, 8min,  11min, 20min, 35min to process 100 hundred instances of size $n=128$ on a V100 GPU card when $E$ is equal to 1, 5, 10, 20, 50 and 100 respectively.

\begin{figure}[!h]
\centering
\begin{subfigure}{.45\textwidth}
  \centering
  \includegraphics[width=1\linewidth]{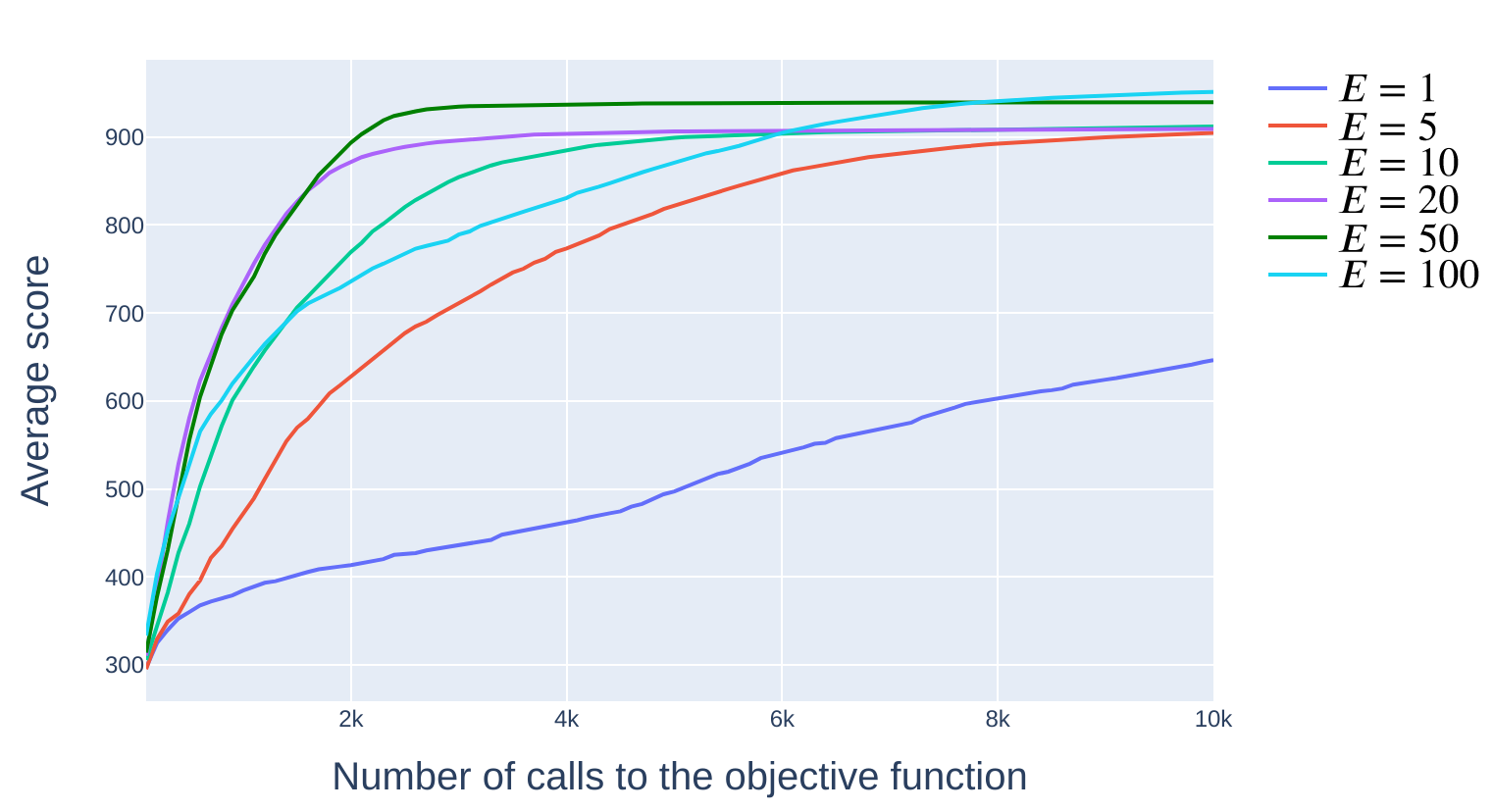}
  \caption{QUBO instances with $n=128$ and $K=5$.}
\end{subfigure}%
\begin{subfigure}{.45\textwidth}
  \centering
  \includegraphics[width=1\linewidth]{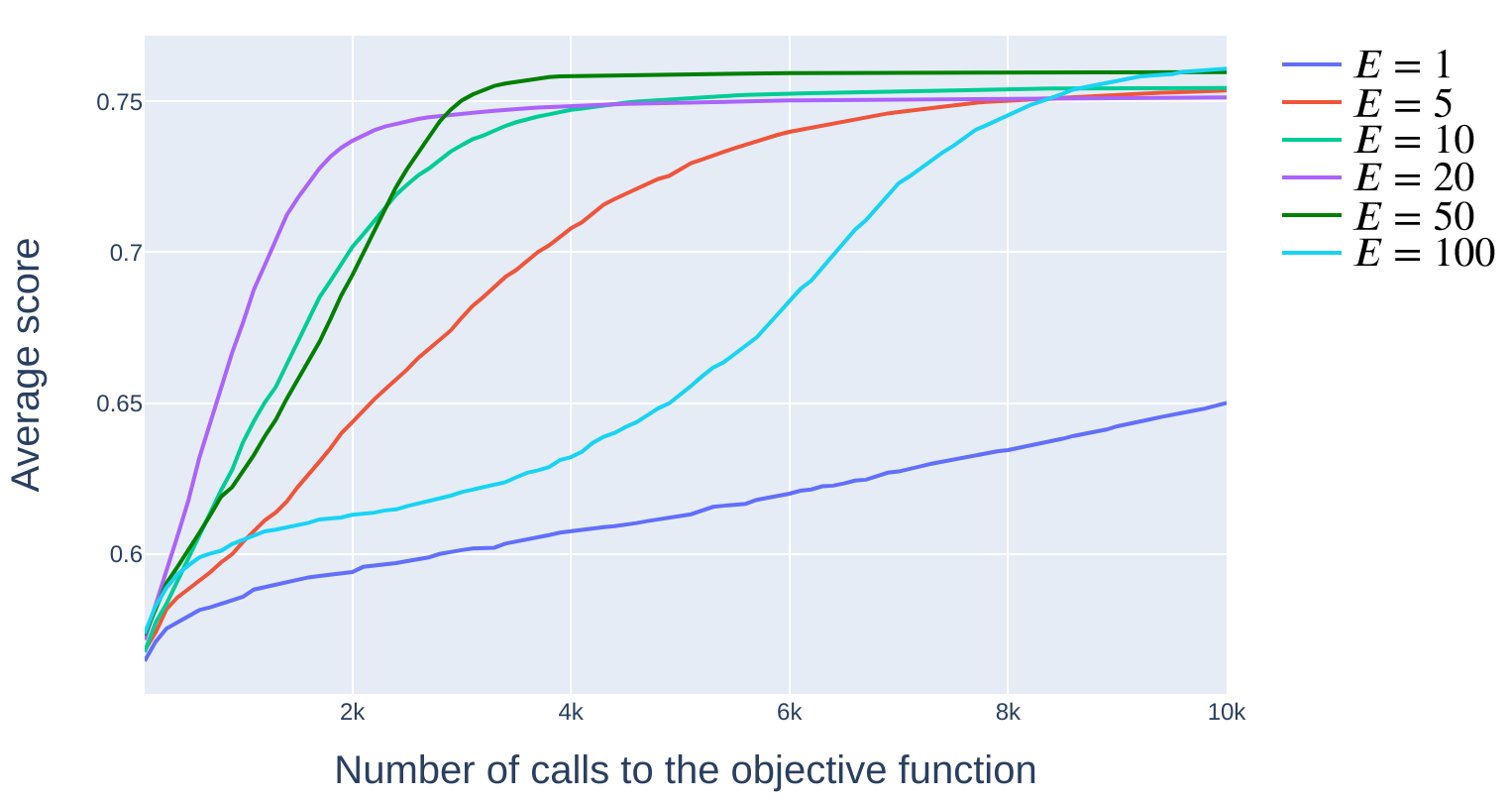}
    \caption{NK instances with $n=128$ and $K=4$.}
\end{subfigure}
\caption{Sensitivity to the number $E$ of training epochs at each generation}. 
\label{fig:sensi_nb_train}
\end{figure}


\section{Results on NAS-Bench-101 real dataset \label{appendix:nasbench}}

The neural architecture search public dataset \citep{ying2019bench} (full NAS-Bench-101 available at \url{https://github.com/google-research/nasbench}), is a table which maps neural network architectures to their testing metrics. Each architecture is encoded by a set of 26 variables: 21 binary variables and 5 categorical variables, each of which can take one of three values. The resulting search space is therefore $\mathcal{X} = \{0,1\}^{21} \times \{0,1,2\}^{5}$, with $|\mathcal{X}| \approx 510\ \text{million}$. However, as noted in \cite{ying2019bench}, a substantial fraction of these configurations correspond to invalid models, which are assigned a testing accuracy of 0 in the dataset. The number of valid architectures, those with an accuracy strictly greater than 0, amounts to 423,000. The goal of this benchmark is to find architectures that have high testing accuracy.

On this benchmark we launch \texttt{$(\sigma,\sigma)$-RL-EDA} with the same hyperparameters used for the synthetic datasets, and compare it with the same competitors as described in the previous subsection, with the same maximum budget of 10,000 calls to the objective function. Figure \ref{fig:nasbench2} displays the evolution of the average best accuracy of  \texttt{$(\sigma,\sigma)$-RL-EDA} in comparison with the 10 best other baselines, with respect to the number of calls to the objective function. The evolution curves are averaged over 100 independent runs. 

The \texttt{$(\sigma,\sigma)$-RL-EDA} algorithm (green line)  achieves the best performance both with a budget of 10,000 objective-function evaluations but also under a short budget of only 1,000 evaluations.
\begin{figure}[!h]
\centering
  \includegraphics[width=0.5\linewidth]{nasbench_accuracy.png}
  \caption{NAS-Bench-101 benchmark. X-axis: number of calls to the objective function. Y-axis: Evolution of the average accuracy of the architectures.  }
    \label{fig:nasbench2}
\end{figure}

\section{Early-budget behavior and adaptation of the algorithm in this context \label{appendix:early_budget}}

Table \ref{tab:results_global_short_budget} shows the results of the same experiments as those conducted in Section \ref{sec:benchmarks} with global results reported in Appendix \ref{appendix:glob_expe}, except that the scores are reported after only 1,000 calls to the objective function (i.e., for a small budget) instead of 10,000. In this case, we see that our algorithm \texttt{$(\sigma,\sigma)$-RL-EDA} actually achieves very good results for small binary instances of size 64. However, for more complex instances, larger in size or with more categories, even though it often obtains the best scores after 10,000 steps, as shown in Figure \ref{fig:NK_QUBO_scores} and in Table \ref{tab:results_global} in Appendix \ref{appendix:glob_expe}, it takes longer to converge than other methods, which explains the poor results reported in this Table \ref{tab:results_global_short_budget}. This is because our EDAs maintain a high degree of diversity in the population at the start of the search, precisely so as not getting  stuck too quickly in a local optimum, as we can see in Figure \ref{fig:comparison_order}. In this regard, we can see that the other EDAs also behave in the same way, as the multivariate EDAs \texttt{MIMIC}  and \texttt{BOA} also see their scores deteriorate when the number of iterations is very low. A certain number of evaluations are also required for this type of multivariate model in order to properly learn the complex interactions between variables.

\begin{table}[!h]
\centering
 \resizebox{1\textwidth}{!}{
 \begingroup
\begin{tabular}{c c l|c|c|c|c|c|c|c|c|c|c|c}
\hline
\multicolumn{3}{c|}{Instances} & \multicolumn{11}{c}{Methods} \\ 
\hline 
\multirow{2}{*}{Pb} & \multirow{2}{*}{$n$} & \multirow{2}{*}{$K$} & \multicolumn{2}{c|}{\texttt{$(\sigma,\sigma)$-RL-EDA}} & \multicolumn{2}{c|}{\texttt{PBIL}} & \multicolumn{2}{c|}{\texttt{MIMIC}} & \multicolumn{2}{c|}{\texttt{BOA}}  & \multicolumn{3}{c}{Best method (others)}\\
\cline{4-14} 
 &  &  &  Rank & Score &  Rank & Score &  Rank & Score & Rank & Score & Name  & Rank & Score\\ \hline\hline
QUBO & 64 & 0  &  \textbf{1/504} &  \textbf{195.9}* & 243/504 & 159.7 & 336/504 & 139.0 & 362/504 & 117.9 & \texttt{Carola4} & 2/504 & 187.4\\
QUBO & 64 & 1 & \textbf{1/504} &  \textbf{147.7}* & 127/504 & 137.3 & 225/504 & 129.6 & 343/504 & 113.8  & \texttt{LargeCMA} & 2/504 &  144.7\\
QUBO & 64 & 2  & \textbf{1/504} & \textbf{136.6} & 123/504 & 131.3 & 191/504 & 127.5 & 345/504 & 114.5 &  \texttt{LargeCMA} & 2/504 &  136.4\\
QUBO & 64 & 3  & \textbf{1/504} & \textbf{394.8}* & 275/504 & 318.9 & 346/504 & 271.2 & 359/504 & 234.9 &  \texttt{Carola4} & 2/504 &  378.6\\
QUBO & 64 & 4  & \textbf{1/504} & \textbf{324.3}* & 174/504 & 287.3 & 306/504 & 261.1 & 352/504 & 230.7 &  \texttt{FCarola6} & 2/504 & 314.2\\
QUBO & 64 & 5  & \textbf{1/504} & \textbf{304.3}* & 197/504 & 266.7 & 282/504 & 252.0 & 353/504 & 226.4 & \texttt{NgLglr} & 2/504 & 292.5\\
\hline
QUBO & 128 & 0  & 251/504 & 354.7 & 315/504 & 327.0 & 355/504 & 248.0 & 366/504 & 224.2 &  \textbf{\texttt{NLOPT\_LN\_PRAXIS}} & \textbf{1/504} &  \textbf{517.2}*\\
QUBO & 128 & 1 & 81/504 & 381.8 & 246/504 & 316.7 & 340/504 & 266.5 & 362/504 & 223.3 & \textbf{ \texttt{DiscreteLengler2OnePlusOne}} & \textbf{1/504} &  \textbf{406.1}*\\
QUBO & 128 & 2 & 80/504 & 367.0 & 246/504 & 319.7 & 340/504 & 269.6 & 361/504 & 229.7 &   \textbf{\texttt{NgIohLn}} &  \textbf{1/504} &   \textbf{399.3}*\\
QUBO & 128 & 3 & 249/504 & 749.54 & 320/504 & 661.4 & 354/504 & 506.6 & 368/504 & 442.7 &  \textbf{\texttt{NLOPT\_LN\_PRAXIS}} & \textbf{1/504} &  \textbf{1034.5}*\\
QUBO & 128 & 4 & 98/504 & 775.9 & 257/504 & 645.1 & 361/504 & 448.5 & 254/504 & 857.3 & \textbf{ \texttt{DiscreteLengler2OnePlusOne}} & \textbf{1/504} & \textbf{845.8}*\\
QUBO & 128 & 5 & 179/504 & 724.3 & 262/504 & 629.9 & 346/504 & 523.8 & 365/504 & 440.8 &  \textbf{\texttt{DiscreteLengler2OnePlusOne}} & \textbf{1/504} & \textbf{830.8}*\\
\hline
QUBO & 256 & 0 & 359/504 & 491.5 & 326/504 & 623.8 & 364/504 & 460.1 & 369/504 & 418.6 & \textbf{\texttt{Carola1}}  & \textbf{1/504} & \textbf{1365.2}*\\
QUBO & 256 & 1 & 328/504 & 599.0 & 278/504 & 648.7 & 358/504 & 485.2 & 367/504 & 439.0 
 & \textbf{\texttt{NLOPT\_LN\_PRAXIS}}
 & \textbf{1/504} & \textbf{1150.1}*\\
 QUBO & 256 & 2 & 334/504 & 582.0 & 359/504 & 485.1 & 359/504 & 485.1 & 366/504 & 427. &  \textbf{\texttt{NgLglr}}
 & \textbf{1/504} & \textbf{1083.0}*\\
 QUBO & 256 & 3 & 359/504 & 992.6 & 327/504 & 1262.9 & 365/504 & 929.0 & 368/504 & 845.3 &  \textbf{\texttt{NLOPT\_LN\_PRAXIS}}
 & \textbf{1/504} & \textbf{2666.7}*\\
 QUBO & 256 & 4 & 334/504 & 1168.7 & 271/504 & 1324.6 & 358/504 & 978.8 & 367/504 & 856.0 &  \textbf{\texttt{NLOPT\_LN\_PRAXIS}}
 & \textbf{1/504}  & \textbf{2280.8}*\\
  QUBO & 256 & 5 & 335/504 & 1169.0 & 284/504 & 1303.5 & 360/504 & 977.1 & 366/504 & 882.2 &  \textbf{\texttt{NgLglr}}
 & \textbf{1/504} & \textbf{2208.7}*\\
 \hline
  \hline
NK & 64 & 1 & \textbf{1/504} & \textbf{0.7095}* & 123/504 & 0.6876 & 90/504 & 0.6953 & 108/504 & 0.6914 & \texttt{Neural1MetaModelD}
 & 2/504 & 0.7000\\
 NK & 64 & 2 & \textbf{1/504} & \textbf{0.7378}* & 162/504 & 0.6994 & 141/504 & 0.7029 & 200/504 & 0.6937 & \texttt{LargeDiagCMA}
 & 2/504 & 0.7225\\ 
  NK & 64 & 4 & \textbf{1/504} & \textbf{0.7341}* & 308/504 & 0.6695 & 334/504 & 0.6545 & 342/504 & 0.6456 & \texttt{CmaFmin2}
 & 2/504 & 0.7187\\ 
 NK & 64 & 8 & 335/504 & 0.6362 & 355/504 & 0.6287 & 359/504 & 0.6243 & 363/504 &0.6190 & \textbf{\texttt{DSsubspace}}
 & \textbf{1/504} & \textbf{0.7166}*\\
 \hline
 NK & 128 & 1 & 134/504 & 0.6658 & 137/504 & 0.6616 & 137/504 & 0.6616 & 236/504 & 0.6447 & \textbf{\texttt{RF1MetaModelD}}
 & \textbf{1/504} & \textbf{0.6883}*\\
  NK & 128 & 2 & 144/504 & 0.6609 & 220/504 & 0.6501 & 207/504 & 0.6530 & 315/504 & 0.6322 & \textbf{\texttt{RF1MetaModelD}}
 & \textbf{1/504} & \textbf{0.6988}\\
NK & 128 & 4 & 316/504 & 0.6277 & 318/504 & 0.6220 & 342/504 & 0.6105 & 355/504 & 0.6011 & \textbf{\texttt{Quad1MetaModelD}}
 & \textbf{1/504} & \textbf{0.7006}*\\
NK & 128 & 8 & 359/504 & 0.5863 & 353/504 & 0.5898 & 361/504 & 0.5839 & 360/504 & 0.5844 & \textbf{\texttt{NLOPT\_LN\_NELDERMEAD}}
 & \textbf{1/504} & \textbf{0.6978}*  \\
 \hline
NK & 256 & 1 & 264/504 & 0.5983 & 173/504 & 0.6094 & 124/504 & 0.6209& 282/504 & 0.5966 & \textbf{\texttt{NLOPT\_LN\_NELDERMEAD}}
 & \textbf{1/504} & \textbf{0.6754}*  \\
 NK & 256 & 2 & 314/504 & 0.5923 & 240/504 & 0.6071 & 213/504 & 0.6103 & 319/504 & 0.5901 & \textbf{\texttt{LargeDiagCMA}}
 & \textbf{1/504} & \textbf{0.6809}* \\
  NK & 256 & 4 & 352/504 & 0.5732 & 318/504 & 0.5859 & 351/504 & 0.5742 & 353/504 & 0.5696 & \textbf{\texttt{NLOPT\_LN\_NELDERMEAD}}
 & \textbf{1/504} & \textbf{0.6847}* \\
NK & 256 & 8 & 362/504 & 0.5598 & 352 & 0.5632 & 364/504 & 0.5595 & 401/504 & 0.5581 & \textbf{\texttt{NLOPT\_LN\_NELDERMEAD}}
 & \textbf{1/504} & \textbf{0.6760}* \\
 \hline
  \hline
NK3 & 64 & 1 & 52/499 & 0.7228  & - & - & 71/499 & 0.7140 & 45/499 &   0.7318 & \textbf{\texttt{SmallLognormalDiscreteOnePlusOne}}
 & \textbf{1/499} & \textbf{0.7419}*\\
NK3 & 64 & 2 & 128/499 & 0.7012 & - & - & 202/499 & 0.6804 & 153/499 & 0.6934 & \textbf{\texttt{NgLglr}}
 & \textbf{1/499} & \textbf{0.7477}* \\
NK3 & 64 & 4 & 272/499 & 0.6385 & - & - & 319/499 & 0.6252 & 318/499 & 0.6252 & \textbf{\texttt{NgIohLn}}
 & \textbf{1/499} & \textbf{0.7358}*\\  
 NK3 & 64 & 8 & 311/499 & 0.6201 & - & - & 335/499 & 0.6172 & 320/499 & 0.6182 & \textbf{\texttt{RLSOnePlusOne}}
 & \textbf{1/499} & \textbf{0.7185}*\\
 \hline
NK3 & 128 & 1 & 128/499 & 0.6543 & - & - & 116/499 & 0.6689 & 101/499 & 0.6846 & \textbf{\texttt{DiscreteLengler2OnePlusOne}}
 & \textbf{1/499} & \textbf{0.7280}*\\
NK3 & 128 & 2 & 159/499 & 0.6295 & - & - & 208/499 & 0.6223 & 157/499 &0.6332 & \textbf{\texttt{DiscreteLengler2OnePlusOne }}
 & \textbf{1/499} & \textbf{0.7285}*\\
 NK3 & 128 & 4 & 249/499 & 0.5946 & - & - & 271/499 & 0.5874 & 268/499 & 0.5887 & \textbf{\texttt{NGOptF5}}
 & \textbf{1/499} & \textbf{0.7072}*\\
  NK3 & 128 & 8 & 286/499 & 0.5832 & - & - & 328/499 & 0.5826 & 285/499 & 0.5833 &  \textbf{\texttt{Carola10}}
 & \textbf{1/499} & \textbf{0.6918}*\\
  \hline
NK3 & 256 & 1 & 127/499 & 0.6143  & - & - & 117/499 & 0.6200 & 110/499 & 0.6322 &  \textbf{\texttt{NGOptF5}}
 & \textbf{1/499} & \textbf{0.7049}*\\
NK3 & 256 & 2 & 212/499 & 0.5846 & - & - & 209/499 & 0.5847 & 159/499 & 0.5915& \textbf{\texttt{NGOptF5}}
 & \textbf{1/499} & \textbf{0.7052}*\\
NK3 & 256 & 4 & 234/499 & 0.5679 & - & - & 282/499 & 0.5621 & 274/499 & 0.5633 & \textbf{\texttt{Carola1}}
 & \textbf{1/499} & \textbf{0.6998}*\\
NK3 & 256 & 8 & 314/499 & 0.5595 & - & - & 360/499 & 0.5582 & 363/499 & 0.5581 & \textbf{\texttt{Cobyla}}
 & \textbf{1/499} & \textbf{0.6779}*\\ 
\hline

\end{tabular}
\endgroup

}

\caption{Global rankings and average scores obtained by \texttt{$(\sigma,\sigma)$-RL-EDA} and the other EDAs (\texttt{PBIL}, \texttt{MIMIC}, and \texttt{BOA}) are reported. The last columns present the ranking and average score of the best-performing method among the 500 additional algorithms considered (496 for NK3 problems). Rankings are computed over all 504 algorithms (499 for NK3 problems) by comparing the best score achieved \textbf{with short budget after 1,000 objective function evaluations}, averaged across 100 independent runs. Bold values highlight the best results among all competing methods. A star associated with the results obtained  by \texttt{$(\sigma,\sigma)$-RL-EDA} indicates that it is significantly better in average (over 100 runs)  than the best other competitor.  A star associated with a result obtained by an other algorithm indicates that it is significantly better in average (over 100 runs)  than \texttt{$(\sigma,\sigma)$-RL-EDA}. A difference on the average scores is said statistically significant according to a t-test with p-value 0.001.
\label{tab:results_global_short_budget}}
\end{table}

\paragraph{Curriculum Adaptation} To overcome this problem, we propose adapting the algorithm with a curriculum approach, so that the model is univariate at the start of the search in order to find a good-quality solution more quickly at the beginning, then gradually switches to a multivariate mode. To do this, we revisit the idea of structured dropout introduced in Appendix \ref{appendix:additional_dropout}. At the very beginning of the search, the dropout probabilities $p^0_G$ and $p^0_T$ are set to the value of 1, which means that all input variables of all networks are masked, and therefore the model is completely univariate. Then we introduce a coefficient $\rho < 1$ that multiplies these probabilities at each iteration $t$ of the EDA, with the equations $p^{t+1}_G = \rho \times p^{t+1}_G$ and $p^{t+1}_T = \rho \times p^{t+1}_T$, so as to decrease them during the search. When the iteration index $t$ tends towards infinity, $p^{t}_G$ and$p^{t}_T$ both tend towards 0, which makes the algorithm return to the  standard multivariate model \texttt{$(\sigma,\sigma)$-RL-EDA}. In the following figures, we therefore propose a sensitivity analysis for this coefficient $\rho$ for NK and QUBO instances of size 128. We also set $\lambda = 5$ and $E=100$ instead $\lambda = 10$ and $E=50$ in order to increase the fast convergence of the algorithm.

\begin{figure}[!h]
\centering
\begin{subfigure}{0.7\textwidth}
  \centering
  \includegraphics[width=1\linewidth]{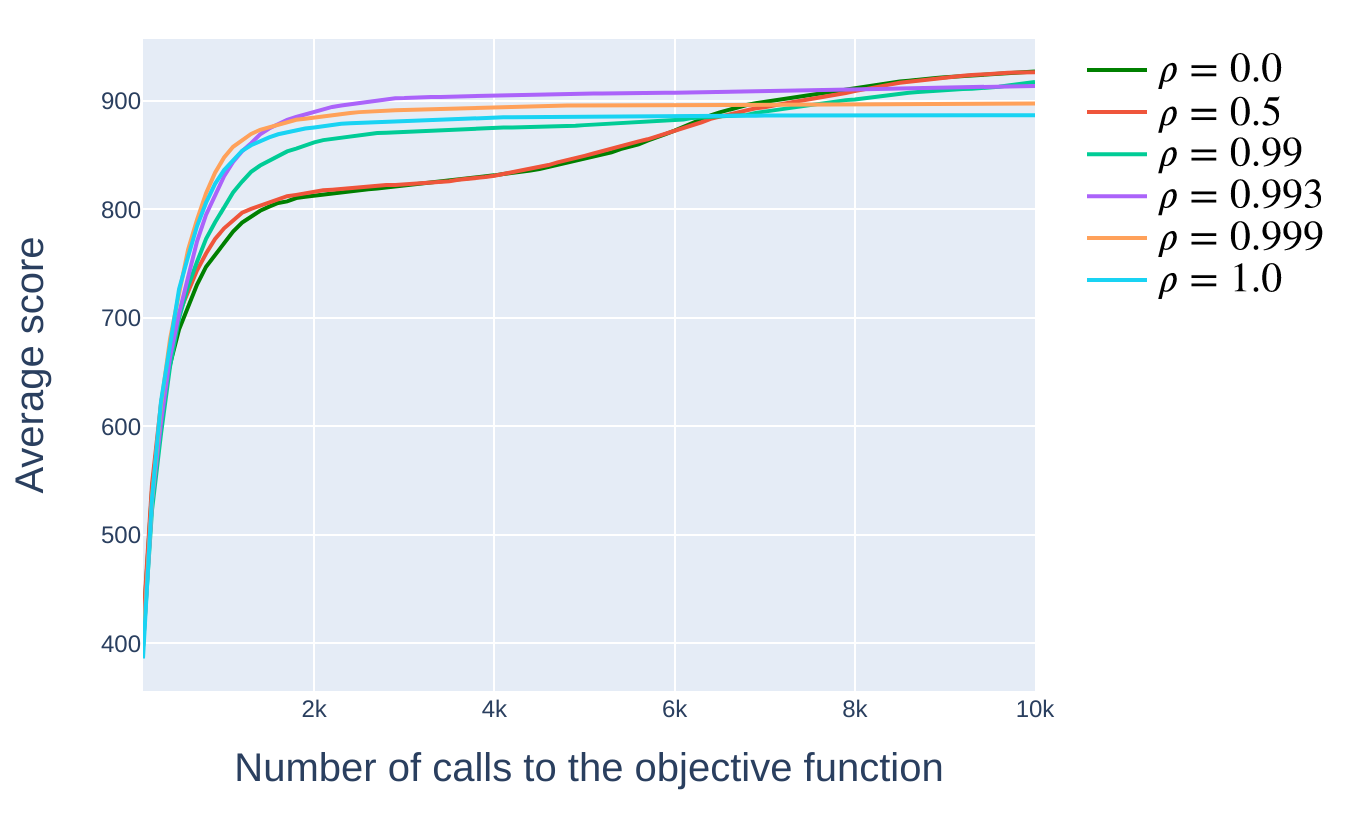}
  \caption{QUBO instances with $n=128$ and $K=5$.}
  \label{fig:sensi_rho_a}
\end{subfigure}
\\
\begin{subfigure}{0.7\textwidth}
  \centering
  \includegraphics[width=1\linewidth]{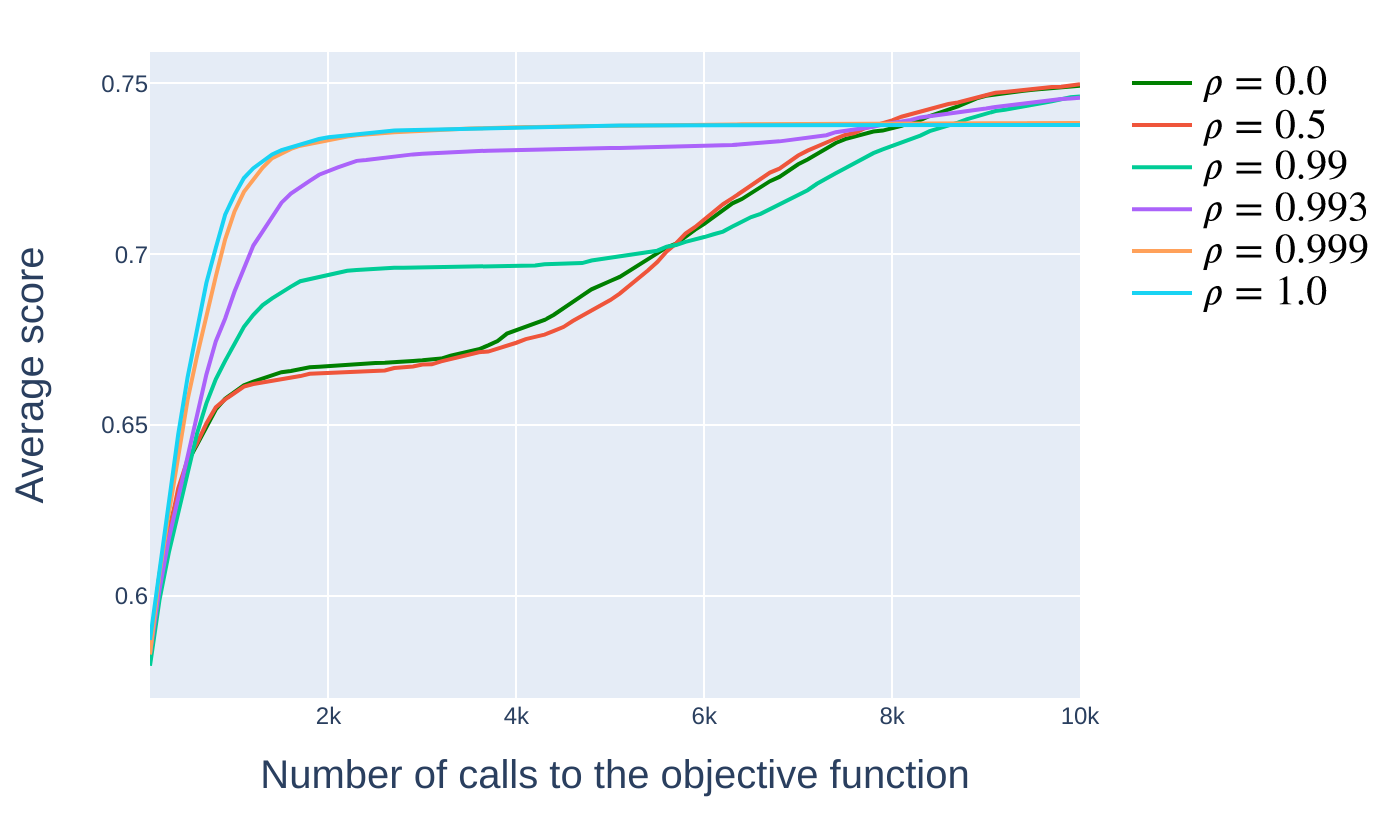}
  \caption{NK instances with $n=128$ and $K=4$.}
    \label{fig:sensi_rho_b}
\end{subfigure}
\caption{Sensitivity to the parameter $\rho$  in \texttt{$(\sigma,\sigma)$-RL-EDA}. }
\label{fig:sensi_rho}
\end{figure}

In Figure \ref{fig:sensi_rho}, we observe that when $\rho$ is close to 1, i.e., when the model is close to the univariate model, the network converges very quickly at the beginning, but converges towards lower scores at the end. Conversely, a lower value of $\rho$ leads to lower scores at the beginning, but these scores end up being higher at the end of the search, because the model learns without loss of information of the context of the full joint distribution, 
generating higher-quality solutions.

Using a value of $\rho=0.993$ seems to be a good compromise for obtaining high-quality solutions quickly, as well as good results when the model converges. 

However, in order to create an effective version when the budget is limited, and knowing that some of Nevergrad's competitors' algorithms are specially optimized for this purpose, we create a version called 
 \texttt{Fast-$(\sigma,\sigma)$-RL-EDA} with $\rho=0.999$ and launch it with a budget of 1,000 calls to the objective function.
Table \ref{tab:fast_RL-EDA}  shows the results obtained by this variant \texttt{Fast-$(\sigma,\sigma)$-RL-EDA} in comparison with the other methods. We observe on this Table that the version \texttt{Fast-$(\sigma,\sigma)$-RL-EDA} frequently obtains the best results for instances of size $n=64$ and $n=128$ for all type of distribution of instances, and  results close to those obtain by the best competitors for instances of size 256.

\begin{table}[!h]
\centering
 \resizebox{1\textwidth}{!}{
 \begingroup
\begin{tabular}{c c l|c|c|c|c|c|c|c|c|c|c|c}
\hline
\multicolumn{3}{c|}{Instances} & \multicolumn{11}{c}{Methods} \\ 
\hline 
\multirow{2}{*}{Pb} & \multirow{2}{*}{$n$} & \multirow{2}{*}{$K$} & \multicolumn{2}{c|}{\texttt{Fast-$(\sigma,\sigma)$-RL-EDA}} & \multicolumn{2}{c|}{\texttt{PBIL}} & \multicolumn{2}{c|}{\texttt{MIMIC}} & \multicolumn{2}{c|}{\texttt{BOA}}  & \multicolumn{3}{c}{Best method (others)}\\
\cline{4-14} 
 &  &  &  Rank & Score &  Rank & Score &  Rank & Score & Rank & Score & Name  & Rank & Score\\ \hline\hline
QUBO & 64 & 0  &  \textbf{1/504} &  \textbf{189.5}* & 243/504 & 159.7 & 336/504 & 139.0 & 362/504 & 117.9 & \texttt{Carola4} & 2/504 & 187.4\\
QUBO & 64 & 1 & 17/504 &  142.6 & 127/504 & 137.3 & 225/504 & 129.6 & 343/504 & 113.8  & \textbf{\texttt{LargeCMA}} & \textbf{1/504} &  \textbf{144.7}*\\
QUBO & 64 & 2  & 21/504 & 134.3 & 123/504 & 131.3 & 191/504 & 127.5 & 345/504 & 114.5 &  \textbf{\texttt{LargeCMA}} & \textbf{1/504} &  \textbf{136.4}*\\
QUBO & 64 & 3  & \textbf{1/504} & \textbf{396.3}* & 275/504 & 318.9 & 346/504 & 271.2 & 359/504 & 234.9 &  \texttt{Carola4} & 2/504 &  378.6\\
QUBO & 64 & 4  & \textbf{1/504} & \textbf{316.5} & 174/504 & 287.3 & 306/504 & 261.1 & 352/504 & 230.7 &  \texttt{FCarola6} & 2/504 & 314.2\\
QUBO & 64 & 5  & \textbf{1/504} & \textbf{297.9}* & 197/504 & 266.7 & 282/504 & 252.0 & 353/504 & 226.4 & \texttt{NgLglr} & 2/504 & 292.5\\
\hline
QUBO & 128 & 0  & \textbf{1/504} & \textbf{525.6}* & 315/504 & 327.0 & 355/504 & 248.0 & 366/504 & 224.2 &  \texttt{NLOPT\_LN\_PRAXIS} & 2/504 &  517.2\\
QUBO & 128 & 1 & \textbf{1/504} & \textbf{417.7}* & 246/504 & 316.7 & 340/504 & 266.5 & 362/504 & 223.3 &  \texttt{Dis.Lengler2 1+1} & 2/504 &  406.1\\
QUBO & 128 & 2 & \textbf{1/504} & \textbf{402.9}*& 246/504 & 319.7 & 340/504 & 269.6 & 361/504 & 229.7 &  \texttt{NgIohLn} &  2/504 &   399.3\\
QUBO & 128 & 3 & \textbf{1/504} & \textbf{1065.3}* & 320/504 & 661.4 & 354/504 & 506.6 & 368/504 & 442.7 &  \texttt{NLOPT\_LN\_PRAXIS} & 2/504 & 1034.5\\
QUBO & 128 & 4 & \textbf{1/504} & \textbf{881.6}* & 257/504 & 645.1 & 361/504 & 448.5 & 254/504 & 857.3 & \texttt{Dis.Lengler2 1+1} & 2/504 & 845.8\\
QUBO & 128 & 5 & \textbf{1/504} & \textbf{847.9}* & 262/504 & 629.9 & 346/504 & 523.8 & 365/504 & 440.8 &  \texttt{Dis.Lengler2 1+1} & 2/504 & 830.8\\
\hline
QUBO & 256 & 0 & 49/504 & 1188.9 & 326/504 & 623.8 & 364/504 & 460.1 & 369/504 & 418.6 & \textbf{\texttt{Carola1}}  & \textbf{1/504} & \textbf{1365.2}*\\
QUBO & 256 & 1 & 43/504 & 1089.5 & 278/504 & 648.7 & 358/504 & 485.2 & 367/504 & 439.0 
 & \textbf{\texttt{NLOPT\_LN\_PRAXIS}}
 & \textbf{1/504} & \textbf{1150.1}*\\
 QUBO & 256 & 2 & 31/504 & 1033.6 & 359/504 & 485.1 & 359/504 & 485.1 & 366/504 & 427. &  \textbf{\texttt{NgLglr}}
 & \textbf{1/504} & \textbf{1083.0}*\\
 QUBO & 256 & 3 & 110/504 & 2400.6 & 327/504 & 1262.9 & 365/504 & 929.0 & 368/504 & 845.3 &  \textbf{\texttt{NLOPT\_LN\_PRAXIS}}
 & \textbf{1/504} & \textbf{2666.7}*\\
 QUBO & 256 & 4 & 42/504 & 2233.4 & 271/504 & 1324.6 & 358/504 & 978.8 & 367/504 & 856.0 &  \textbf{\texttt{NLOPT\_LN\_PRAXIS}}
 & \textbf{1/504}  & \textbf{2280.8}*\\
  QUBO & 256 & 5 & 42/504 & 2105.2 & 284/504 & 1303.5 & 360/504 & 977.1 & 366/504 & 882.2 &  \textbf{\texttt{NgLglr}}
 & \textbf{1/504} & \textbf{2208.7}*\\
 \hline
  \hline
NK & 64 & 1 & \textbf{1/504} & \textbf{0.7051}* & 123/504 & 0.6876 & 90/504 & 0.6953 & 108/504 & 0.6914 & \texttt{Neural1MetaModelD}
 & 2/504 & 0.7000\\
 NK & 64 & 2 & \textbf{1/504} & \textbf{0.7302}* & 162/504 & 0.6994 & 141/504 & 0.7029 & 200/504 & 0.6937 & \texttt{LargeDiagCMA}
 & 2/504 & 0.7225\\ 
  NK & 64 & 4 & \textbf{1/504} & \textbf{0.7320}* & 308/504 & 0.6695 & 334/504 & 0.6545 & 342/504 & 0.6456 & \texttt{CmaFmin2}
 & 2/504 & 0.7187\\ 
 NK & 64 & 8 & 4/504 & 0.7120 & 355/504 & 0.6287 & 359/504 & 0.6243 & 363/504 &0.6190 & \textbf{\texttt{DSsubspace}}
 & \textbf{1/504} & \textbf{0.7166}\\
 \hline
 NK & 128 & 1 & \textbf{1/504} & \textbf{0.6976}* & 137/504 & 0.6616 & 137/504 & 0.6616 & 236/504 & 0.6447 & \texttt{RF1MetaModelD}
 & 2/504 & 0.6883\\
  NK & 128 & 2 & \textbf{1/504} & \textbf{0.7146}* & 220/504 & 0.6501 & 207/504 & 0.6530 & 315/504 & 0.6322 & \texttt{RF1MetaModelD}
 & 2/504 & 0.6988\\
NK & 128 & 4 & \textbf{1/504} & \textbf{0.7124} & 318/504 & 0.6220 & 342/504 & 0.6105 & 355/504 & 0.6011 & \texttt{Quad1MetaModelD}
 & 2/504 & 0.7006\\
NK & 128 & 8 & 138/504 & 0.6567 & 353/504 & 0.5898 & 361/504 & 0.5839 & 360/504 & 0.5844 & \textbf{\texttt{NLOPT\_LN\_NELDERMEAD}}
 & \textbf{1/504} & \textbf{0.6978}*  \\
 \hline
NK & 256 & 1 & 16/504 & 0.6694 & 173/504 & 0.6094 & 124/504 & 0.6209& 282/504 & 0.5966 & \textbf{\texttt{NLOPT\_LN\_NELDERMEAD}}
 & \textbf{1/504} & \textbf{0.6754}*  \\
 NK & 256 & 2 & 40/504 & 0.6793 & 240/504 & 0.6071 & 213/504 & 0.6103 & 319/504 & 0.5901 & \textbf{\texttt{LargeDiagCMA}}
 & \textbf{1/504} & \textbf{0.6809}* \\
  NK & 256 & 4 & 71/504 & 0.6565 & 318/504 & 0.5859 & 351/504 & 0.5742 & 353/504 & 0.5696 & \textbf{\texttt{NLOPT\_LN\_NELDERMEAD}}
 & \textbf{1/504} & \textbf{0.6847}* \\
NK & 256 & 8 & 164/504 & 0.6057 & 352 & 0.5632 & 364/504 & 0.5595 & 401/504 & 0.5581 & \textbf{\texttt{NLOPT\_LN\_NELDERMEAD}}
 & \textbf{1/504} & \textbf{0.6760}* \\
 \hline
  \hline
NK3 & 64 & 1 & \textbf{1/499} & \textbf{0.7593}*  & - & - & 71/499 & 0.7140 & 45/499 &   0.7318 & \texttt{SmallLognormalDiscreteOnePlusOne}
 & 2/499 & 0.7419\\
NK3 & 64 & 2 & \textbf{1/499} & \textbf{0.7702}* & - & - & 202/499 & 0.6804 & 153/499 & 0.6934 & \texttt{NgLglr}
 & 1/499 & 0.7477 \\
NK3 & 64 & 4 & \textbf{1/499} & \textbf{0.7547}* & - & - & 319/499 & 0.6252 & 318/499 & 0.6252 & \texttt{NgIohLn}
 & 2/499 & 0.7358\\  
 NK3 & 64 & 8 & 2/499 & 0.7169 & - & - & 335/499 & 0.6172 & 320/499 & 0.6182 & \textbf{\texttt{RLSOnePlusOne}}
 & \textbf{1/499} & \textbf{0.7185}*\\
 \hline
NK3 & 128 & 1 & \textbf{1/499} & \textbf{0.7482}* & - & - & 116/499 & 0.6689 & 101/499 & 0.6846 & \texttt{Dis.Lengler2 1+1}
 & 2/499 & 0.7280\\
NK3 & 128 & 2 & \textbf{1/499} & \textbf{0.7457}* & - & - & 208/499 & 0.6223 & 157/499 &0.6332 & \texttt{Dis.Lengler2 1+1 }
 & 2/499 & 0.7285\\
 NK3 & 128 & 4 & \textbf{1/499} & \textbf{0.7277}* & - & - & 271/499 & 0.5874 & 268/499 & 0.5887 & \texttt{NGOptF5}
 & 2/499 & 0.7072\\
  NK3 & 128 & 8 & 46/499 & 0.6746 & - & - & 328/499 & 0.5826 & 285/499 & 0.5833 &  \textbf{\texttt{Carola10}}
 & \textbf{1/499} & \textbf{0.6918}*\\
  \hline
NK3 & 256 & 1 & 28/499 & 0.6999  & - & - & 117/499 & 0.6200 & 110/499 & 0.6322 &  \textbf{\texttt{NGOptF5}}
 & \textbf{1/499} & \textbf{0.7049}*\\
NK3 & 256 & 2 & 44/499 & 0.6925 & - & - & 209/499 & 0.5847 & 159/499 & 0.5915& \textbf{\texttt{NGOptF5}}
 & \textbf{1/499} & \textbf{0.7052}*\\
NK3 & 256 & 4 & 101/499 & 0.6571 & - & - & 282/499 & 0.5621 & 274/499 & 0.5633 & \textbf{\texttt{Carola1}}
 & \textbf{1/499} & \textbf{0.6998}*\\
NK3 & 256 & 8 & 132/499 & 0.6045 & - & - & 360/499 & 0.5582 & 363/499 & 0.5581 & \textbf{\texttt{Cobyla}}
 & \textbf{1/499} & \textbf{0.6779}*\\ 
\hline

\end{tabular}
\endgroup
}
\caption{Global rankings and average scores obtained by \textbf{\texttt{Fast-$(\sigma,\sigma)$-RL-EDA}} and the other EDAs (\texttt{PBIL}, \texttt{MIMIC}, and \texttt{BOA}) are reported. The last columns present the ranking and average score of the best-performing method among the 500 additional algorithms considered (496 for NK3 problems). Rankings are computed over all 504 algorithms (499 for NK3 problems) by comparing the best score achieved \textbf{with short budget after 1,000 objective function evaluations}, averaged across 100 independent runs. Bold values highlight the best results among all competing methods. A star associated the results obtain by \texttt{Fast-$(\sigma,\sigma)$-RL-EDA} indicates that it is significantly better in average (over 100 runs)  than the best other competitor.  A star associated with a result obtained by an other algorithm indicates that it is significantly better in average (over 100 runs)  than \texttt{Fast-$(\sigma,\sigma)$-RL-EDA}. A difference on the average scores is said statistically significant according to a t-test with p-value 0.001.
\label{tab:fast_RL-EDA}}
\end{table}

\section{Comparison with a variant using a critic neural network \label{appendix:critic}}

In this appendix, we compare the \texttt{$(\sigma,\sigma)$-RL-EDA} with an alternative version using a critic neural network to compute advantages instead of  GRPO advantages described in Section \ref{sec:opti} and given by  \eqref{eq:rank_based_score}.

This new variant called \texttt{$(\sigma,\sigma)$-RL-EDA-Critic}  uses exactly the same algorithm, expected that advantages for individual $i$ at time step $k$ of the MDP are computed as 

\begin{equation}
A^{\pi_{\theta^t}}(\sigma^i(x^i)_{< k},x^i_k) = \alpha(f(x^i) - \hat{V}(\sigma^i(x^i)_{< k},x^i_k)),
\label{eq:critic_score}
\end{equation}

\noindent with $f(x^i)$ the final score of the complete solution $x^i$ and $\hat{V}(\sigma^i(x^i)_{< k},x^i_k)$ an estimation of the value of the state $(\sigma^i(x^i)_{< k},x^i_k)$ given by a critic neural network composed of a set of $(g_{\theta^c_1}, g_{\theta^c_2}, \dots,  g_{\theta^c_n})$ of $n$ neural networks (one for each variable), 
with exactly the same architecture as the set $(g_{\theta_1}, g_{\theta_2}, \dots,  g_{\theta_n})$ of generative neural network used to build solutions, except that the sigmoid activation function is replaced by an identity function in order to output a value in  $\mathbb{R}$. $\alpha$ is a hyperparameter used to adjust the impact of the advantages on the learning process. 

At each iteration $t$ of the EDA, at the beginning of the update phase, the n neural networks of the critic are trained in parallel during $E$ epoch to minimize the mean square error between $f(x^i)$  and $\hat{V}(\sigma^i(x^i)_{< k},x^i_k)$ at time step $k$ and for each individual $i$.

For each dataset, we evaluated several values of the hyperparameter $\alpha$ from the set ${10^{-4}, 10^{-3}, 10^{-2}, 10^{-1}, 1, 10, 100}$. The best performance was obtained with $\alpha = 10$ for the NK and NK3 datasets, and with $\alpha = 0.001$ for the QUBO datasets. Table~\ref{tab:results_comparison_critic} reports the results obtained by the variant incorporating a critic, denoted \texttt{$(\sigma,\sigma)$-RL-EDA-Critic}, in comparison with the standard version \texttt{$(\sigma,\sigma)$-RL-EDA}. Overall, the standard version outperforms the critic-based variant in most settings.

Furthermore, the critic-based approach exhibits two major drawbacks:
\begin{enumerate}
\item it requires nearly twice the computational time required to run the standard version, as the critic must be trained at each generation;
\item it makes the algorithm sensitive to the scale of fitness values, thereby reducing its robustness to the diverse distributions of instances encountered.
\end{enumerate}



\begin{table}[!h]
\centering
\small
\caption{Average scores obtained by \texttt{$(\sigma,\sigma)$-RL-EDA} and its variant \texttt{$(\sigma,\sigma)$-RL-EDA-critic}.  Bold values highlight the best results. A star associated with the results indicates that it is significantly better in average (over 100 runs). A difference on the average scores is said statistically significant according to a t-test with p-value 0.001.\label{tab:results_comparison_critic}}
 \begingroup
\begin{tabular}{c c l|c|c}
\hline
\multicolumn{3}{c|}{Instances} & \multicolumn{2}{c}{Methods} \\ 
\hline 
Pb & $n$ & $K$ & \texttt{$(\sigma,\sigma)$-RL-EDA} & \texttt{$(\sigma,\sigma)$-RL-EDA-critic} \\
\hline
\hline
QUBO & 64 & 0   & \textbf{200.8}* & 195.3\\
QUBO & 64 & 1  & \textbf{148.8}* & 146.7\\
QUBO & 64 & 2   & \textbf{138.1}* & 134.7 \\
QUBO & 64 & 3   & \textbf{411.2}* & 407.7\\
QUBO & 64 & 4   & 326.1 & \textbf{326.4}\\
QUBO & 64 & 5   & \textbf{309.4}* & 303.6 \\
\hline
QUBO & 128 & 0  & \textbf{593.7}* & 560.7\\
QUBO & 128 & 1 & \textbf{449.2}* & 432.5\\
QUBO & 128 & 2  & \textbf{437.1}* & 412.6 \\
QUBO & 128 & 3  & \textbf{1227.2}* & 943.2\\
QUBO & 128 & 4  & \textbf{955.4}* & 781.9\\
QUBO & 128 & 5  & \textbf{933.3}* & 768.4 \\
\hline
QUBO & 256 & 0 & \textbf{1697.7}* & 1119.0\\
QUBO & 256 & 1 & \textbf{1367.7}* & 596.9\\
 QUBO & 256 & 2  & \textbf{1304.1}* & 944.8\\
 QUBO & 256 & 3 & \textbf{3436.8}* & 1645.9\\
 QUBO & 256 & 4  & \textbf{2769.0}* & 1500.4\\
  QUBO & 256 & 5 & \textbf{2730.1}* & 1491.6\\
 \hline
  \hline
NK & 64 & 1 & \textbf{0.7103} & 0.7099\\
 NK & 64 & 2  & \textbf{0.7420} &0.7413\\
  NK & 64 & 4 & \textbf{0.7523} & 0.7495\\
 NK & 64 & 8 & 0.7379 & \textbf{0.7420}* \\
 \hline
 NK & 128 & 1 & \textbf{0.7100} & 0.7086\\
  NK & 128 & 2 & \textbf{0.7375}* &  0.7355\\
NK & 128 & 4 & \textbf{0.7603} & 0.7574\\
NK & 128 & 8 &  0.7369 & \textbf{0.7408}*\\
 \hline
NK & 256 & 1 &  \textbf{0.7071} &  0.706 \\
 NK & 256 & 2 & \textbf{0.7364}* & 0.7349\\
  NK & 256 & 4 &  \textbf{0.7534} & 0.7527 \\
NK & 256 & 8 &  \textbf{0.7232}* & 0.696 \\
 \hline
  \hline
NK3 & 64 & 1 &  0.7818 & \textbf{0.7861}*\\
NK3 & 64 & 2 & 0.8095 & \textbf{0.8114}\\
NK3 & 64 & 4  & 0.8004 & \textbf{0.8016}\\  
 NK3 & 64 & 8 &  \textbf{0.7473} & 0.7416\\  
 \hline
NK3 & 128 & 1 &  0.7876 & \textbf{0.7957}*\\
NK3 & 128 & 2 & 0.7986 & \textbf{0.8101}*\\
 NK3 & 128 & 4 & 0.7847 & \textbf{0.7988}*\\
  NK3 & 128 & 8 &  \textbf{0.7373}* & 0.6031 \\
  \hline
NK3 & 256 & 1 &  0.7763 & \textbf{0.7811}*\\  
NK3 & 256 & 2 & 0.7801 & \textbf{0.7802}\\  
NK3 & 256 & 4 & \textbf{0.7615}* & 0.6342\\  
NK3 & 256 & 8 &  \textbf{0.7213}* & 0.5723\\ 
\hline
\end{tabular}
 \endgroup
\end{table}

\section{Variant sharing parameters of hidden layers for scaling to  large problems \label{appendix:sharing_params}}

In this appendix, we introduce a variant of  the \texttt{$(\sigma,\sigma)$-RL-EDA} algorithm using a single  MLP $g^n(\theta)$ with 2 hidden layers of 100 neurons and n outputs, instead of the  set $(g_{\theta_1},g_{\theta_2}, \dots, g_{\theta_n})$ of $n$ MLP, each with a single hidden layer of 20 neurons (see Section \ref{sec:archi}). In $g^n_{\theta}$, each of the $n$ outputs produces the probability of the value of each variable conditionally on the values of the other variables. This variant, which employs a single MLP denoted $g^n_{\theta}$, is called \texttt{$(\sigma,\sigma)$-RL-EDA-share-params}. All other hyperparameters are identical to those used in the standard version.

Table~\ref{tab:results_share_params} reports the results obtained by  \texttt{$(\sigma,\sigma)$-RL-EDA-share-params} in comparison with the standard version \texttt{$(\sigma,\sigma)$-RL-EDA}. 
Overall, the standard version \texttt{$(\sigma,\sigma)$-RL-EDA} is generally slightly more effective than  \texttt{$(\sigma,\sigma)$-RL-EDA-share-params} on these small and medium sized datasets. This suggests that employing one MLP per variable contributes to a more stable learning process during the search for these sizes of instances. 

Then, we generated a new larger dataset of 10 NK instances of size $n=1028$ and $K=8$, and launched the two variants \texttt{$(\sigma,\sigma)$-RL-EDA} and \texttt{$(\sigma,\sigma)$-RL-EDA-share-params} on these large instances. 

First of all, we notice that the variant sharing parameters scales very well  in term of computational time required in comparison with the standard version. It required 1h30 to process the 10 instances of size $n=1028$ with a budget of 10,000 evaluations for the variant \texttt{$(\sigma,\sigma)$-RL-EDA-share-params}, while it required more than 10 hours for the standard version \texttt{$(\sigma,\sigma)$-RL-EDA} to perform the same task, which can be explained by the much lower number of parameters to be learned for the version with shared parameters.

Figure \ref{fig:ranking_NK_N_1024_K_8} shows the evolution of the average results over 100 runs of the two variants \texttt{$(\sigma,\sigma)$-RL-EDA} and \texttt{$(\sigma,\sigma)$-RL-EDA-share-params} on these 10 large instances with 10 independent restarts on each instance, in comparison with the best other Nevegrad competitors as well as the other EDAs.


We then notice that our standard version \texttt{$(\sigma,\sigma)$-RL-EDA} also performs very poorly. This is because each generation of variables is produced by a small network with a hidden layer of size 20 that takes the other 1023 variables as input. This becomes too small to properly model the complex interactions between the variables. 

However, we can see   that  the new variant sharing parameters called \texttt{$(\sigma,\sigma)$-RL-EDA-share-params}  yields very good results (green dotted line), in comparison with the other best competitors.

\begin{table}[!h]
\centering
\small
 \begingroup
\begin{tabular}{c c l|c|c}
\hline
\multicolumn{3}{c|}{Instances} & \multicolumn{2}{c}{Methods} \\ 
\hline 
Pb & $n$ & $K$ & \texttt{$(\sigma,\sigma)$-RL-EDA} & \texttt{$(\sigma,\sigma)$-RL-EDA-share-params} \\
\hline
\hline
QUBO & 64 & 0   & \textbf{200.8} & 198.7\\
QUBO & 64 & 1  & \textbf{148.8}* & 145.9\\
QUBO & 64 & 2   & \textbf{138.1} & 137.9 \\
QUBO & 64 & 3   & 411.2 & \textbf{415.4}\\
QUBO & 64 & 4   & \textbf{326.1}* & 323.6\\
QUBO & 64 & 5   & \textbf{309.4} & 309.26\\
\hline
QUBO & 128 & 0  & \textbf{593.7}* & 584.8\\
QUBO & 128 & 1 & \textbf{449.2}* & 437.9\\
QUBO & 128 & 2  & \textbf{437.1}* & 429.5 \\
QUBO & 128 & 3  & \textbf{1227.2}* & 1211.3\\
QUBO & 128 & 4  & \textbf{955.4}* & 944.6\\
QUBO & 128 & 5  & \textbf{933.3}* & 920.9 \\
\hline
QUBO & 256 & 0 & \textbf{1697.7}* & 1669.6 \\
QUBO & 256 & 1 & \textbf{1367.7}* & 1337.4\\
 QUBO & 256 & 2  & \textbf{1304.1}* & 1272.9\\
 QUBO & 256 & 3 & \textbf{3436.8}* & 3400.9\\
 QUBO & 256 & 4  & \textbf{2769.0}* & 2696.9\\
  QUBO & 256 & 5 & \textbf{2730.1}* & 2654.6\\
 \hline
  \hline
NK & 64 & 1 & 0.7103 & 0.7103\\
 NK & 64 & 2  & \textbf{0.7420} & 0.7402\\
  NK & 64 & 4 & \textbf{0.7523} & 0.7521\\
 NK & 64 & 8 & \textbf{0.7379} & 0.7367 \\
 \hline
 NK & 128 & 1 & \textbf{0.7100} & 0.7094\\
  NK & 128 & 2 & \textbf{0.7375}* &  0.7360\\
NK & 128 & 4 & \textbf{0.7603} & 0.7569\\
NK & 128 & 8 &  \textbf{0.7369} & 0.7331\\
 \hline
NK & 256 & 1 &  \textbf{0.7071} &  0.7065 \\
 NK & 256 & 2 & \textbf{0.7364}* & 0.7352\\
  NK & 256 & 4 &  \textbf{0.7534}* & 0.7478 \\
NK & 256 & 8 &  0.7232 & \textbf{0.7243} \\
 \hline
  \hline
NK3 & 64 & 1 &  0.7818 & \textbf{0.7835}*\\
NK3 & 64 & 2 & \textbf{0.8095} & 0.8079\\
NK3 & 64 & 4  & \textbf{0.8004} & 0.7933\\  
 NK3 & 64 & 8 &  0.7473 & \textbf{0.7478}\\  
 \hline
NK3 & 128 & 1 &  \textbf{0.7876} & 0.7816\\
NK3 & 128 & 2 & \textbf{0.7986} & 0.7908\\
 NK3 & 128 & 4 & \textbf{0.7847} & 0.7715\\
  NK3 & 128 & 8 &  \textbf{0.7373}* & 0.7271 \\
  \hline
NK3 & 256 & 1 &  \textbf{0.7763}* & 0.7553\\  
NK3 & 256 & 2 & \textbf{0.7801} & 0.7601 \\  
NK3 & 256 & 4 & \textbf{0.7615}* & 0.7434\\  
NK3 & 256 & 8 &  \textbf{0.7213}* & 0.7105\\ 
\hline
\end{tabular}
 \endgroup
\caption{Average scores obtained by \texttt{$(\sigma,\sigma)$-RL-EDA} and its variant \texttt{$(\sigma,\sigma)$-RL-EDA-share-params}.  Bold values highlight the best results. A star associated with the results indicates that it is significantly better in average (over 100 runs). A difference on the average scores is said statistically significant according to a t-test with p-value 0.001.\label{tab:results_share_params}}
\end{table}


\begin{figure}[!h]
\centering
  \centering
  \includegraphics[width=0.7\linewidth]{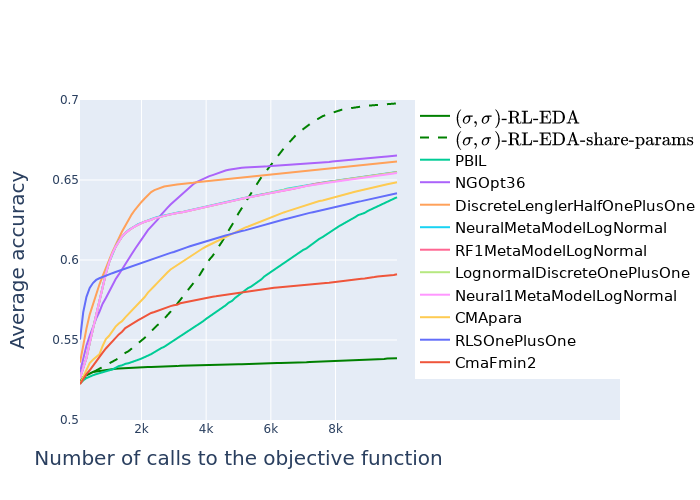}
  \caption{NK instances with $n=1024$ and $K=8$.}
\label{fig:ranking_NK_N_1024_K_8}
\end{figure}

\section{Ablation study: Non auto-regressive Generation (using Gibbs sampling) 
\label{appendix:gibbs}}

In this appendix, we propose a baseline variant of the algorithm, where the sequential order of generation is replaced by a Gibbs sampling. This is an other way to build an order-invariant RL EDA. 

In this case, 
in order to construct a complete solution $x$, we start with $x^{0} = (x_1^{0},x_2^{0}, \dots, x_n^{0}) = (0,0, \dots,0)$, then at iteration $\ell$, given a sample $x^{(\ell)} = (x_1^{(\ell)},x_2^{(\ell)}, \dots, x_n^{(\ell)})$, to obtain the next sample $x^{(\ell + 1)} = (x_1^{(\ell +1 )},x_2^{(\ell +1)}, \dots, x_n^{(\ell +1)})$, we sample each component $x_j^{(\ell + 1)}$ conditioned on all other variable values sampled so far, such that   $\pi_\theta(x_j^{(\ell + 1)}=1| x_{-j}^{(\ell)}) = \text{sigmoid}(g_{\theta_j}(x_{-j}^{(\ell)}))$, with $x_{-j}^{(\ell)} = (x_1^{(\ell)}, \dots, x_{j-1}^{(\ell)}, 0, x_{j+1}^{(\ell)}, \dots, x_n^{(\ell +1)})$ corresponding to the vector $x_{j}^{(\ell)}$, but with a 0 in position $j$. When $\ell \rightarrow \infty$, this process allows to obtain a sampling of the joint multivariate distribution. However in practice we restrict this sampling to $G$ iterations. 

In this variant without order, during training, the GRPO objective to maximize becomes 

\begin{equation}
\hat{L}_\lambda(\theta) =  \frac{1}{\lambda} \sum_{(x^i,\sigma^i) \in \Gamma_\lambda^t} \sum_{k=1}^n   \left[\frac{\pi_{\theta}(x^i_k| x^i_{-k})}{\pi_{\theta^t}(x^i_k| x^i_{-k})} A_{\Gamma_\lambda^t}(x)  - \beta D_{\mathrm{KL}}\left( \pi_{\theta^t}(\cdot | x^i_{-k}) \,\|\, \pi_\theta(\cdot |  x^i_{-k}) \right)\right]. \label{eq:gibbs_training}
\end{equation}

Figure \ref{fig:sensi_gibbs} displays the results obtain by this variant with Gibbs sampling for different number $G$ of iterations during sampling. 
The figure indicates that increasing the value of $G$ leads to improved performance. However, these gains rapidly plateau and remain below those achieved by the standard variant \texttt{$(\sigma,\sigma)$-RL-EDA}, which employs a sequential generation of variables based on randomly sampled permutation masks during both training and inference, which allow to better uncover dependency  relationships between variables (see section \ref{sec:vs_dropout} for discussions about what can bring samplings of different causal masks at train time, in term of residual structuration of the networks). 

Moreover, the Gibbs-sampling version incurs substantially higher computational costs during the variable-generation phase, as it requires multiple re-samplings of each variable. In contrast, the standard approach assigns a value to each variable only once, resulting in significantly lower computational overhead.

\begin{figure}[!h]
\centering
\begin{subfigure}{0.7\textwidth}
  \centering
  \includegraphics[width=1\linewidth]{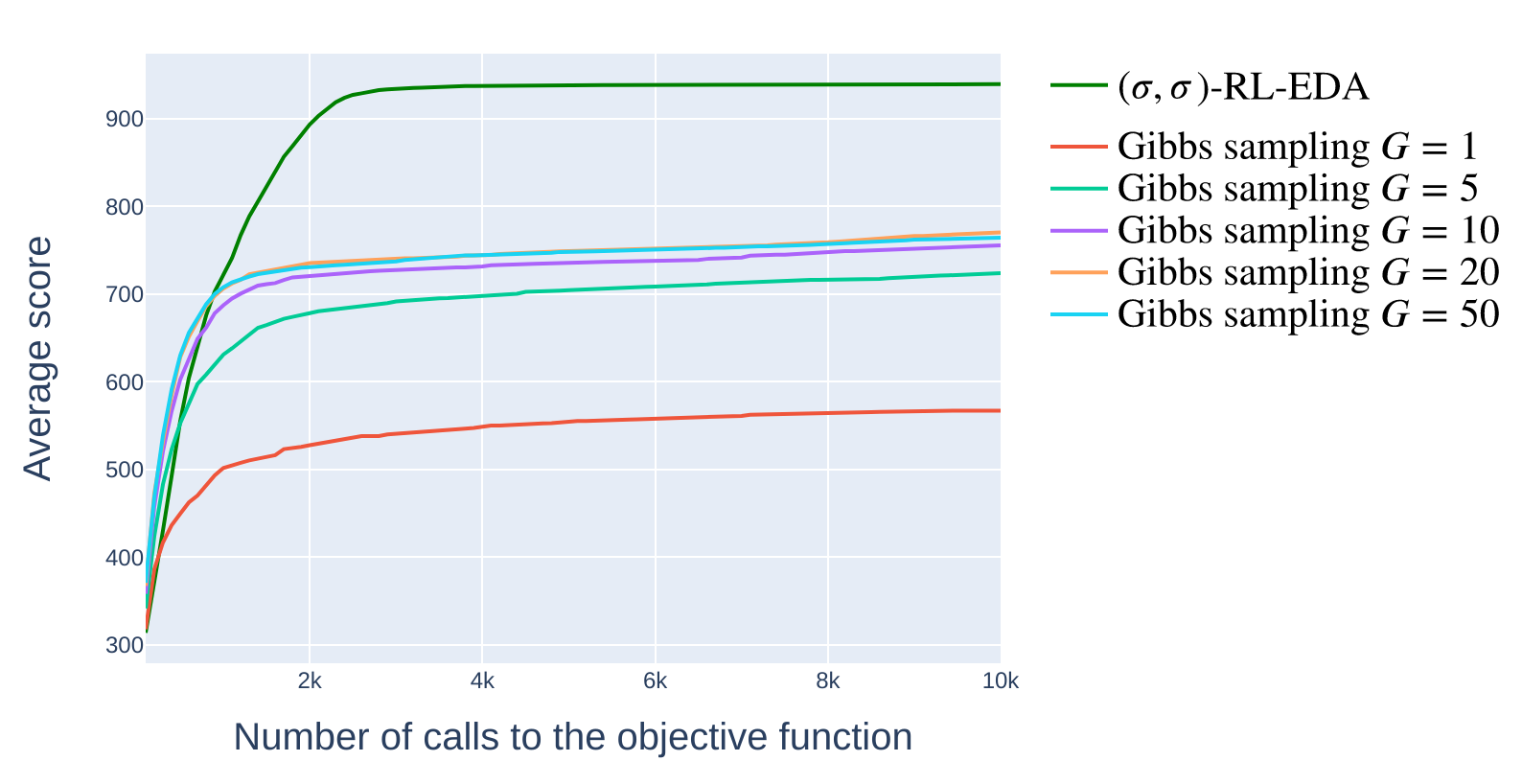}
  \caption{QUBO instances with $n=128$ and $K=5$.}
\label{fig:sensi_gibbs_b}
\end{subfigure}
\\
\begin{subfigure}{0.7\textwidth}
  \centering
  \includegraphics[width=1\linewidth]{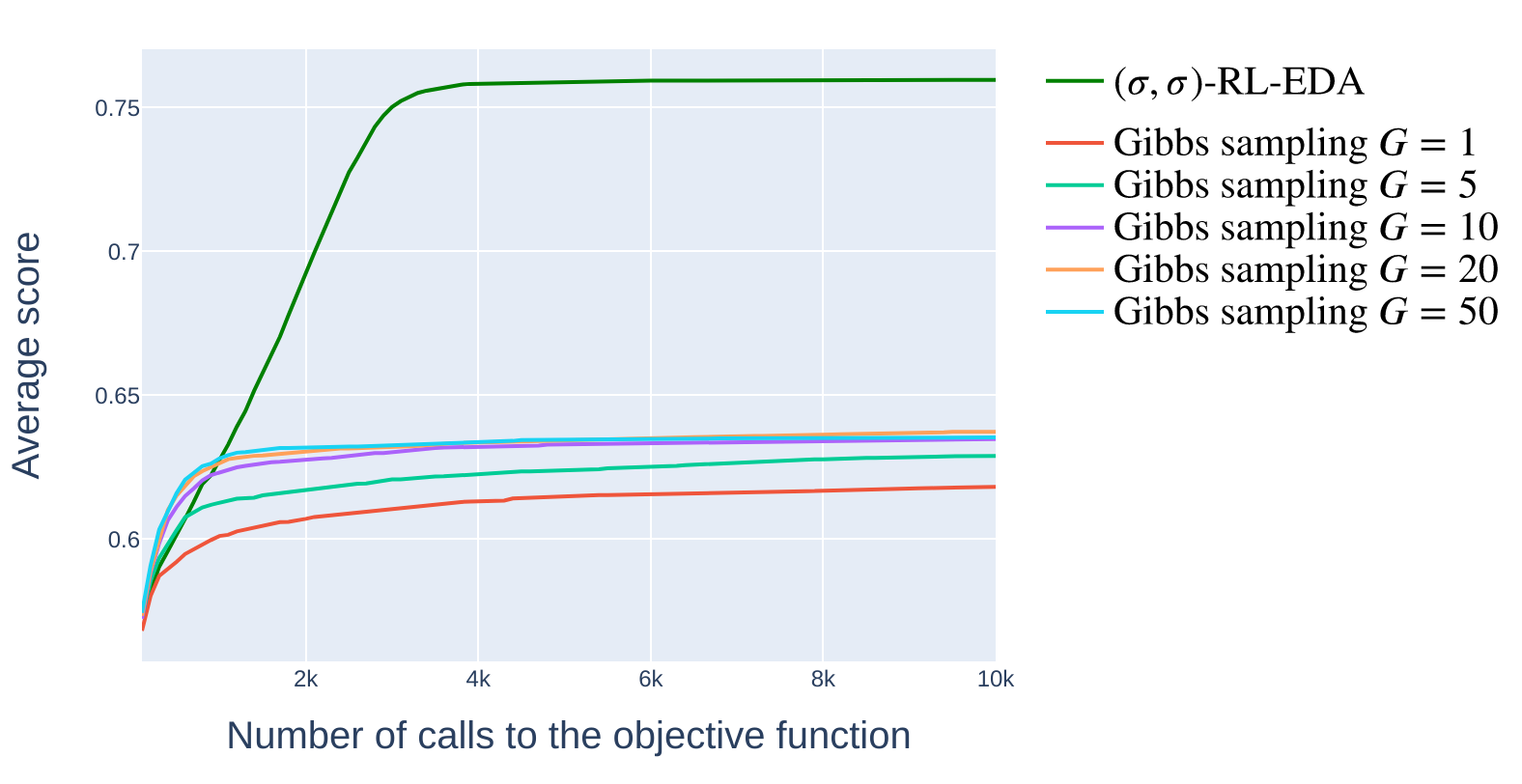}
  \caption{NK instances with $n=128$ and $K=4$.}
\label{fig:sensi_gibbs_a}
\end{subfigure}%
\caption{Evolution of the score of a variant with Gibbs sampling with different numbers of iterations $G$ for the sampling,  
in comparison with the standard version \texttt{$(\sigma,\sigma)$-RL-EDA} }
\label{fig:sensi_gibbs}
\end{figure}

\color{black}

\section{Nevergrad competing algorithms\label{appendix:nevergrad}}

It is important to note that some algorithms in the library are primarily designed for continuous optimization---such as various variants of Particle Swarm Optimization (\texttt{PSO}) and \texttt{CMA-ES}--- and are not expected to perform competitively on discrete problems. Nevertheless, \texttt{Nevergrad} \citep{rapin2018nevergrad} also includes a wide range of algorithms specifically tailored for large-scale discrete black-box optimization. The algorithms of the \texttt{Nevergrad} library can be grouped into the following categories:

\begin{itemize}
  \item \textbf{Memetic and Genetic Algorithms}, such as \texttt{cGA} and \texttt{discretememetic}.
  \item \textbf{Discrete $(1+1)$ Evolutionary Algorithms}, including variants with adaptive mutation rates like \texttt{DiscreteLengler2OnePlusOne} and \texttt{FastGADiscreteOnePlusOne}.
  \item \textbf{Differential Evolution algorithms}, e.g., \texttt{DiscreteDE}, \texttt{LhsHSDE}.
  \item \textbf{Chaining Algorithms}, which are meta-algorithms applying several baseline algorithms in sequence, such as \texttt{ChainDEwithLHS30}, \texttt{Carola1}, \dots, \texttt{Carola15}.
  \item \textbf{Portfolio Algorithms}, including \texttt{NGOpt}, \texttt{NgIoh}, and \texttt{Wiz}, which select low-level algorithms based on problem dimension and budget.
  \item \textbf{Adaptive Portfolio Algorithms}, which test several algorithms during early search phases before selecting one for later stages, e.g., \texttt{PolyLN}.
  \item \textbf{Learning Meta-Models}, which approximate the optimum using supervised models (e.g., random forests, neural networks, SVMs) trained on the best solutions generated by low-level algorithms. Example include \texttt{RF1MetaModel}, \texttt{Neural1MetaModelOnePlusOne}, and \texttt{SVM1MetaModelD}.
\end{itemize}



Here we provide the complete list of all competing algorithm of the 1.0.12 \texttt{Nevergrad} library used in the experiments (sorted by name). 
Detailed documentation and source code of these algorithms are available at \url{https://facebookresearch.github.io/nevergrad}.


\begin{scriptsize}
AdaptiveDiscreteOnePlusOne, 
AlmostRotationInvariantDE, 
AlmostRotationInvariantDEAndBigPop, 
AnisoEMNA, 
AnisoEMNATBPSA, 
AnisotropicAdaptiveDiscreteOnePlusOne, 
ASCMADEthird, 
AvgHammersleySearch, 
AvgHammersleySearchPlusMiddlePoint, 
AvgMetaRecenteringNoHull, 
AvgRandomSearch, 
BAR, 
BAR2, 
BAR3, 
BAR4, 
BFGS, 
BFGSCMA, 
BFGSCMAPlus, 
BigLognormalDiscreteOnePlusOne, 
BPRotationInvariantDE, 
Carola1, 
Carola2, 
Carola3, 
Carola4, 
Carola5, 
Carola6, 
Carola7, 
Carola8, 
Carola9, 
Carola10, 
Carola11, 
Carola13, 
Carola14, 
Carola15, 
CauchyLHSSearch, 
CauchyOnePlusOne, 
CauchyRandomSearch, 
CauchyScrHammersleySearch, 
cGA, 
ChainCMAPowell, 
ChainCMASQP, 
ChainCMAwithLHS, 
ChainCMAwithLHS30, 
ChainCMAwithLHSdim, 
ChainCMAwithLHSsqrt, 
ChainCMAwithMetaRecentering, 
ChainCMAwithMetaRecentering30, 
ChainCMAwithMetaRecenteringdim, 
ChainCMAwithMetaRecenteringsqrt, 
ChainCMAwithR, 
ChainCMAwithR30, 
ChainCMAwithRdim, 
ChainCMAwithRsqrt, 
ChainDE, 
ChainDEwithLHS, 
ChainDEwithLHS30, 
ChainDEwithLHSdim, 
ChainDEwithLHSsqrt, 
ChainDEwithMetaRecentering, 
ChainDEwithMetaRecentering30, 
ChainDEwithMetaRecenteringdim, 
ChainDEwithMetaRecenteringsqrt, 
ChainDEwithMetaTuneRecentering, 
ChainDEwithMetaTuneRecentering30, 
ChainDEwithMetaTuneRecenteringdim, 
ChainDEwithMetaTuneRecenteringsqrt, 
ChainDEwithR, 
ChainDEwithR30, 
ChainDEwithRdim, 
ChainDEwithRsqrt, 
ChainDiagonalCMAPowell, 
ChainDSPowell, 
ChainMetaModelDSSQP, 
ChainMetaModelPowell, 
ChainMetaModelSQP, 
ChainNaiveTBPSACMAPowell, 
ChainNaiveTBPSAPowell, 
ChainPSOwithLHS, 
ChainPSOwithLHS30, 
ChainPSOwithLHSdim, 
ChainPSOwithLHSsqrt, 
ChainPSOwithMetaRecentering, 
ChainPSOwithMetaRecentering30, 
ChainPSOwithMetaRecenteringdim, 
ChainPSOwithMetaRecenteringsqrt, 
ChainPSOwithR, 
ChainPSOwithR30, 
ChainPSOwithRdim, 
ChainPSOwithRsqrt, 
ChoiceBase, 
CLengler, 
CM, 
CMA, 
CMAbounded, 
CmaFmin2, 
CMAL, 
CMAL2, 
CMAL3, 
CMALL, 
CMALn, 
CMALS, 
CMALYS, 
CMandAS2, 
CMandAS3, 
CMApara, 
CMARS, 
CMASL, 
CMASL2, 
CMASL3, 
CMAsmall, 
CMAstd, 
CMAtuning, 
Cobyla, 
CSEC, 
CSEC10, 
CSEC11, 
DE, 
DiagonalCMA, 
DiscreteBSOOnePlusOne, 
DiscreteDE, 
DiscreteDoerrOnePlusOne, 
DiscreteLengler2OnePlusOne, 
DiscreteLengler3OnePlusOne, 
DiscreteLenglerFourthOnePlusOne, 
DiscreteLenglerHalfOnePlusOne, 
DiscreteLenglerOnePlusOne, 
DiscreteLenglerOnePlusOneT, 
discretememetic, 
DiscreteNoisy13Splits, 
DiscreteOnePlusOne, 
DiscreteOnePlusOneT, 
DoubleFastGADiscreteOnePlusOne, 
DoubleFastGAOptimisticNoisyDiscreteOnePlusOne, 
DS2, 
DS3p, 
DS4, 
DS5, 
DS6, 
DS8, 
DS9, 
DS14, 
DSbase, 
DSproba, 
DSsubspace, 
ECMA, 
EDA, 
EDCMA, 
ES, 
F2SQPCMA, 
F3SQPCMA, 
FastGADiscreteOnePlusOne, 
FastGANoisyDiscreteOnePlusOne, 
FastGAOptimisticNoisyDiscreteOnePlusOne, 
FCarola6, 
FCMA, 
FCMAp13, 
FCMAs03, 
file, 
ForceMultiCobyla, 
FSQPCMA, 
GeneticDE, 
HaltonSearch, 
HaltonSearchPlusMiddlePoint, 
HammersleySearch, 
HammersleySearchPlusMiddlePoint, 
HSDE, 
HugeLognormalDiscreteOnePlusOne, 
HullAvgMetaRecentering, 
HullAvgMetaTuneRecentering, 
HullCenterHullAvgCauchyLHSSearch, 
HullCenterHullAvgCauchyScrHammersleySearch, 
HullCenterHullAvgLargeHammersleySearch, 
HullCenterHullAvgLHSSearch, 
HullCenterHullAvgRandomSearch, 
HullCenterHullAvgScrHaltonSearch, 
HullCenterHullAvgScrHaltonSearchPlusMiddlePoint, 
HullCenterHullAvgScrHammersleySearch, 
HullCenterHullAvgScrHammersleySearchPlusMiddlePoint, 
IsoEMNA, 
IsoEMNATBPSA, 
LargeCMA, 
LargeDiagCMA, 
LargeHaltonSearch, 
LBFGSB, 
LhsDE, 
LhsHSDE, 
LHSSearch, 
LocalBFGS, 
LogBFGSCMA, 
LogBFGSCMAPlus, 
LogMultiBFGS, 
LogMultiBFGSPlus, 
LognormalDiscreteOnePlusOne, 
LogSQPCMA, 
LogSQPCMAPlus, 
LPCMA, 
LPSDE, 
LQODE, 
LQOTPDE, 
LSCMA, 
LSDE, 
ManyLN, 
MaxRecombiningDiscreteLenglerOnePlusOne, 
MemeticDE, 
MetaCauchyRecentering, 
MetaCMA, 
MetaModel, 
MetaModelDE, 
MetaModelDiagonalCMA, 
MetaModelDSproba, 
MetaModelFmin2, 
MetaModelLogNormal, 
MetaModelOnePlusOne, 
MetaModelPSO, 
MetaModelQODE, 
MetaModelTwoPointsDE, 
MetaNGOpt10, 
MetaRecentering, 
MetaTuneRecentering, 
MicroCMA, 
MicroSPSA, 
MicroSQP, 
MilliCMA, 
MiniDE, 
MiniLhsDE, 
MiniQrDE, 
MinRecombiningDiscreteLenglerOnePlusOne, 
MixDeterministicRL, 
MixES, 
MultiBFGS, 
MultiBFGSPlus, 
MultiCMA, 
MultiCobyla, 
MultiCobylaPlus, 
MultiDiscrete, 
MultiDS, 
MultiLN, 
MultiScaleCMA, 
MultiSQP, 
MultiSQPPlus, 
MutDE, 
NaiveAnisoEMNA, 
NaiveAnisoEMNATBPSA, 
NaiveIsoEMNA, 
NaiveIsoEMNATBPSA, 
NaiveTBPSA, 
NelderMead, 
Neural1MetaModel, 
Neural1MetaModelD, 
Neural1MetaModelE, 
Neural1MetaModelLogNormal, 
NeuralMetaModel, 
NeuralMetaModelDE, 
NeuralMetaModelLogNormal, 
NeuralMetaModelTwoPointsDE, 
NgDS, 
NgDS11, 
NgDS2, 
NgDS3, 
NGDSRW, 
NgIoh, 
NgIoh2, 
NgIoh3, 
NgIoh4, 
NgIoh5, 
NgIoh6, 
NgIoh7, 
NgIoh8, 
NgIoh9, 
NgIoh10, 
NgIoh11, 
NgIoh12, 
NgIoh12b, 
NgIoh13, 
NgIoh13b, 
NgIoh14, 
NgIoh14b, 
NgIoh15, 
NgIoh15b, 
NgIoh16, 
NgIoh17, 
NgIoh18, 
NgIoh19, 
NgIoh20, 
NgIoh21, 
NgIohLn, 
NgIohMLn, 
NgIohRS, 
NgIohRW2, 
NgIohTuned, 
NgLglr, 
NgLn, 
NGO, 
NGOpt, 
NGOpt10, 
NGOpt15, 
NGOpt16, 
NGOpt36, 
NGOpt39, 
NGOpt4, 
NGOpt8, 
NGOptBase, 
NGOptDSBase, 
NGOptF, 
NGOptF2, 
NGOptF3, 
NGOptF5, 
NGOptRW, 
NGOptSingle16, 
NGOptSingle25, 
NGOptSingle9, 
NgRS, 
NLOPT\_GN\_CRS2\_LM, 
NLOPT\_GN\_DIRECT, 
NLOPT\_GN\_DIRECT\_L, 
NLOPT\_GN\_ESCH, 
NLOPT\_GN\_ISRES, 
NLOPT\_LN\_NELDERMEAD, 
NLOPT\_LN\_PRAXIS, 
NLOPT\_LN\_SBPLX, 
Noisy13Splits, 
NoisyBandit, 
NoisyDE, 
NoisyDiscreteOnePlusOne, 
NoisyOnePlusOne, 
NoisyRL1, 
NoisyRL2, 
NoisyRL3, 
NonNSGAIIES, 
OldCMA, 
OLNDiscreteOnePlusOne, 
OnePlusLambda, 
OnePlusOne, 
OnePointDE, 
OnePtRecombiningDiscreteLenglerOnePlusOne, 
OpoDE, 
OpoTinyDE, 
OptimisticDiscreteOnePlusOne, 
OptimisticNoisyOnePlusOne, 
ORandomSearch, 
OScrHammersleySearch, 
ParametrizationDE, 
ParaPortfolio, 
pCarola6, 
PCarola6, 
PolyCMA, 
PolyLN, 
Portfolio, 
PortfolioDiscreteOnePlusOne, 
PortfolioDiscreteOnePlusOneT, 
PortfolioNoisyDiscreteOnePlusOne, 
PortfolioOptimisticNoisyDiscreteOnePlusOne, 
Powell, 
PSO, 
QNDE, 
QODE, 
QOPSO, 
QORandomSearch, 
QORealSpacePSO, 
QOScrHammersleySearch, 
QOTPDE, 
QrDE, 
Quad1MetaModel, 
Quad1MetaModelD, 
Quad1MetaModelE, 
RandomScaleRandomSearch, 
RandomScaleRandomSearchPlusMiddlePoint, 
RandomSearch, 
RandomSearchPlusMiddlePoint, 
RandRecombiningDiscreteLenglerOnePlusOne, 
RandRecombiningDiscreteLognormalOnePlusOne, 
RBFGS, 
RealSpacePSO, 
RecES, 
RecMixES, 
RecMutDE, 
RecombiningDiscreteLenglerOnePlusOne, 
RecombiningDiscreteLognormalOnePlusOne, 
RecombiningGA, 
RecombiningOptimisticNoisyDiscreteOnePlusOne, 
RecombiningPortfolioDiscreteOnePlusOne, 
RecombiningPortfolioOptimisticNoisyDiscreteOnePlusOne, 
RescaledCMA, 
RescaleScrHammersleySearch, 
RF1MetaModel, 
RF1MetaModelD, 
RF1MetaModelE, 
RF1MetaModelLogNormal, 
RFMetaModel, 
RFMetaModelDE, 
RFMetaModelLogNormal, 
RFMetaModelOnePlusOne, 
RFMetaModelPSO, 
RFMetaModelTwoPointsDE, 
RLSOnePlusOne, 
RotatedRecombiningGA, 
RotatedTwoPointsDE, 
RotationInvariantDE, 
RPowell, 
RSQP, 
SADiscreteLenglerOnePlusOneExp09, 
SADiscreteLenglerOnePlusOneExp099, 
SADiscreteLenglerOnePlusOneExp09Auto, 
SADiscreteLenglerOnePlusOneLin1, 
SADiscreteLenglerOnePlusOneLin100, 
SADiscreteLenglerOnePlusOneLinAuto, 
SADiscreteOnePlusOneExp09, 
SADiscreteOnePlusOneExp099, 
SADiscreteOnePlusOneLin100, 
ScrHaltonSearch, 
ScrHaltonSearchPlusMiddlePoint, 
ScrHammersleySearch, 
ScrHammersleySearchPlusMiddlePoint, 
SDiagonalCMA, 
Shiwa, 
SmallLognormalDiscreteOnePlusOne, 
SmoothAdaptiveDiscreteOnePlusOne, 
SmoothDiscreteLenglerOnePlusOne, 
SmoothDiscreteLognormalOnePlusOne, 
SmoothDiscreteOnePlusOne, 
SmoothElitistRandRecombiningDiscreteLenglerOnePlusOne, 
SmoothElitistRandRecombiningDiscreteLognormalOnePlusOne, 
SmoothElitistRecombiningDiscreteLenglerOnePlusOne, 
SmootherDiscreteLenglerOnePlusOne, 
SmoothLognormalDiscreteOnePlusOne, 
SmoothPortfolioDiscreteOnePlusOne, 
SmoothRecombiningDiscreteLenglerOnePlusOne, 
SmoothRecombiningPortfolioDiscreteOnePlusOne, 
SODE, 
SOPSO, 
SparseDiscreteOnePlusOne, 
SparseDoubleFastGADiscreteOnePlusOne, 
SparseOrNot, 
SpecialRL, 
SplitCMA, 
SplitDE, 
SplitPSO, 
SplitQODE, 
SplitSQOPSO, 
SplitTwoPointsDE, 
SPQODE, 
SPSA, 
SQOPSO, 
SQOPSODCMA, 
SQOPSODCMA20, 
SQORealSpacePSO, 
SQP, 
SQPCMA, 
SQPCMAPlus, 
SqrtBFGSCMA, 
SqrtBFGSCMAPlus, 
SqrtMultiBFGS, 
SqrtMultiBFGSPlus, 
SqrtSQPCMA, 
SqrtSQPCMAPlus, 
StupidRandom, 
SuperSmoothDiscreteLenglerOnePlusOne, 
SuperSmoothElitistRecombiningDiscreteLenglerOnePlusOne, 
SuperSmoothRecombiningDiscreteLenglerOnePlusOne, 
SuperSmoothRecombiningDiscreteLognormalOnePlusOne, 
SuperSmoothTinyLognormalDiscreteOnePlusOne, 
SVM1MetaModel, 
SVM1MetaModelD, 
SVM1MetaModelE, 
SVM1MetaModelLogNormal, 
SVMMetaModel, 
SVMMetaModelDE, 
SVMMetaModelLogNormal, 
SVMMetaModelPSO, 
SVMMetaModelTwoPointsDE, 
TBPSA, 
TEAvgCauchyLHSSearch, 
TEAvgCauchyScrHammersleySearch, 
TEAvgLHSSearch, 
TEAvgRandomSearch, 
TEAvgScrHammersleySearch, 
TEAvgScrHammersleySearchPlusMiddlePoint, 
TinyCMA, 
TinyLhsDE, 
TinyLognormalDiscreteOnePlusOne, 
TinyQODE, 
TinySPSA, 
TinySQP, 
TripleCMA, 
TripleDiagonalCMA, 
TripleOnePlusOne, 
TwoPointsDE, 
TwoPtRecombiningDiscreteLenglerOnePlusOne, 
UltraSmoothDiscreteLenglerOnePlusOne, 
UltraSmoothElitistRecombiningDiscreteLenglerOnePlusOne, 
UltraSmoothElitistRecombiningDiscreteLognormalOnePlusOne, 
UltraSmoothRecombiningDiscreteLenglerOnePlusOne, 
VastDE, 
VastLengler, 
VLPCMA, 
VoronoiDE, 
Wiz, 
XLognormalDiscreteOnePlusOne, 
XSmallLognormalDiscreteOnePlusOne, 
YoSmoothDiscreteLenglerOnePlusOne, 
Zero.
\end{scriptsize}

\section{LLM Usage Declaration}

During the preparation of this manuscript, we used Large Language Models to assist with text clarity, grammar, and formulation, particularly in polishing the abstract and certain explanatory sentences. The scientific content, experimental design, results, and interpretations were entirely conceived and written by the authors. LLMs were not used to generate original ideas, proofs, or analyses; its contribution was limited to language refinement.

\end{document}